\documentclass{article}

\usepackage{microtype}
\usepackage{graphicx}
\usepackage{subfigure}
\usepackage{booktabs} 
\usepackage{wrapfig}

\usepackage[version=4]{mhchem} 

\usepackage{amsmath,amsfonts,bm}

\def\eqref#1{equation~(\ref{#1})}

\def\1{\bm{1}}

\def\ve{{\bm{e}}}
\def\vf{{\bm{f}}}

\def\vs{{\bm{s}}}

\def\vu{{\bm{u}}}
\def\vv{{\bm{v}}}

\def\vx{{\bm{x}}}
\def\vy{{\bm{y}}}

\def\mzero{{\bm{0}}}
\def\mA{{\bm{A}}}

\def\mC{{\bm{C}}}

\def\mI{{\bm{I}}}

\DeclareMathAlphabet{\mathsfit}{\encodingdefault}{\sfdefault}{m}{sl}
\SetMathAlphabet{\mathsfit}{bold}{\encodingdefault}{\sfdefault}{bx}{n}

\def\gL{{\mathcal{L}}}

\def\gN{{\mathcal{N}}}
\def\gO{{\mathcal{O}}}

\def\sR{{\mathbb{R}}}

\usepackage{multirow} 
\usepackage{makecell} 
\usepackage{chemfig}
\usepackage{booktabs}
\usepackage{wrapfig}
\usepackage{comment}
\usepackage{hyperref}
\usepackage{url}
\usepackage{amsmath}
\usepackage{amssymb}
\usepackage{mathtools}
\usepackage{amsthm}
\usepackage{thm-restate}

\usepackage{enumitem}

\usepackage{xcolor}
\definecolor{lightblue}{RGB}{173, 216, 230}
\definecolor{darkseagreen}{RGB}{143, 188, 143}
\definecolor{burlywood}{RGB}{222, 184, 135}
\definecolor{indianred}{RGB}{205, 92, 92}
\definecolor{lightcoral}{RGB}{240, 128, 128}
\definecolor{pink}{RGB}{245, 180, 180}
\definecolor{navyblue}{RGB}{103, 160, 180}

\usepackage{hyperref}

\usepackage{tikz-cd}

 \usepackage[accepted]{icml2025}

\usepackage{amsmath}
\usepackage{amssymb}
\usepackage{mathtools}
\usepackage{amsthm}

\usepackage{enumitem}

\usepackage[capitalize,noabbrev]{cleveref}

\usepackage{thm-restate}

\theoremstyle{plain}
\newtheorem{theorem}{Theorem}[section]

\newtheorem{lemma}[theorem]{Lemma}

\theoremstyle{definition}

\theoremstyle{remark}
\newtheorem{remark}[theorem]{Remark}

\usepackage[textsize=tiny]{todonotes}

\def\BW#1{\textcolor{red}{#1}}

\def\TT#1{\textcolor{cyan}{#1}}

\icmltitlerunning{
Improving Flow Matching by Aligning Flow Divergence
}

\begin{document}

\twocolumn[
\icmltitle{
Improving Flow Matching by Aligning Flow Divergence
}




\icmlsetsymbol{equal}{*}

\begin{icmlauthorlist}
\icmlauthor{Yuhao Huang}{yyy,comp}
\icmlauthor{Taos Transue}{yyy,comp}
\icmlauthor{Shih-Hsin Wang}{yyy,comp}
\icmlauthor{William Feldman}{yyy}
\icmlauthor{Hong Zhang}{sch}
\icmlauthor{Bao Wang}{yyy,comp}
\end{icmlauthorlist}

\icmlaffiliation{yyy}{Department of Mathematics, University of Utah, Salt Lake City, UT, USA}
\icmlaffiliation{comp}{Scientific Computing and Imaging (SCI) Institute, Salt Lake City, UT, USA}
\icmlaffiliation{sch}{Mathematics and Computer Science Division, 240 Argonne National Laboratory, Lemont, IL, USA}

\icmlcorrespondingauthor{Bao Wang}{wangbaonj@gmail.com}

\icmlkeywords{Machine Learning, ICML}

\vskip 0.3in
]



\printAffiliationsAndNotice{}  

\begin{abstract}
Conditional flow matching (CFM) stands out as an efficient, simulation-free approach for training flow-based generative models, achieving remarkable performance for data generation. However, CFM is insufficient to ensure accuracy in learning probability paths. In this paper, we introduce a new partial differential equation characterization for the error between the learned and exact probability paths, along with its solution. We show that the total variation gap between the two probability paths is bounded above by a combination of the CFM loss and an associated divergence loss. This theoretical insight leads to the design of a new objective function that simultaneously matches the flow and its divergence. Our new approach improves the performance of the flow-based generative model by a noticeable margin without sacrificing generation efficiency. We showcase the advantages of this enhanced training approach over CFM on several important benchmark tasks, including generative modeling for dynamical systems, DNA sequences, and videos. Code is available at \href{https://github.com/Utah-Math-Data-Science/Flow_Div_Matching}{Utah-Math-Data-Science}.
\end{abstract}

\section{Introduction}\label{sec:intro}
Flow matching (FM) -- leveraging a neural network to learn a predefined vector field mapping between noise and data samples -- has emerged as an efficient simulation-free training approach for flow-based generative models (FGMs), achieving remarkable stability, computational efficiency, and flexibility for generative modeling \cite{lipman2023flow,albergo2023building,liu2023flow}. Compared to the classical likelihood-based approaches for training FGMs, e.g.,~\cite{chen2018neural,grathwohl2018ffjord}, FM circumvents computationally expensive sample simulations to estimate gradients or densities. 
The celebrated diffusion models (DMs) with variance preserving (VP) or variance exploding (VE) stochastic differential equations (SDEs) \cite{song2020score} can be viewed as special cases of FGMs with diffusion paths (c.f. Section~\ref{sec:CFM}). Furthermore, FM excels in generative modeling on non-Euclidean spaces, broadening its scientific applications  \cite{baker2024stabilized,chen2024flow,jing2023alphafold,bose2024sestochastic,yim2024diffusion,stark2024dirichlet}.

At the core of FM  is the idea of regressing a vector field that interpolates between the prior noise distribution $q(\vx)$ -- typically the standard Gaussian -- and the data distribution $p(\vx)$. Specifically, we aim to regress the vector field \( \vu_t(\vx) \) that guides the probability flow \( p_t(\vx) \) interpolating between an easy-to-sample noise distribution and the data distribution, i.e., \( p_0 = q \) and \( p_1 \approx p \). The relationship between \( \vu_t \) and \( p_t \) is formalized by the following continuity equation \cite{villani2009optimal}:
\[
\frac{\partial p_t(\vx)}{\partial t} + \nabla \cdot (p_t(\vx) \vu_t(\vx)) = 0.
\]
FM approximates \( \vu_t \) using a neural network-parameterized vector field \( \vv_t(\vx, \theta) \),
seeking to minimize the FM loss:  
\begin{equation}\label{eq:FM}
\begin{aligned}
\gL_{\rm FM}(\theta) := \mathbb{E}_{t,p_t(\vx)
} \Big[\Big\|\vv_t(\vx,\theta)-\vu_t(\vx)\Big\|^2\Big],
\end{aligned}
\end{equation}
where $t\sim U[0,1]$ follows a uniform distribution over the unit time interval $[0,1]$.

However, 
\eqref{eq:FM} is intractable as $\vu_t(\vx)$ is unavailable.
To address this, an alternative simulation-free method, known as conditional flow matching (CFM) \cite{lipman2023flow, albergo2023building}, is employed. In CFM, 
$\vv_t(\vx,\theta)$
is trained by regressing against a predefined conditional vector field on a per-sample basis, ensuring both computational efficiency and accuracy.
Concretely, for any data sample $\vx_1\sim p(\vx)$, we can define a conditional probability path $p_t(\vx|\vx_1)$ for $t\in[0,1]$ satisfying $p_0(\vx|\vx_1)=q(\vx)$ and $p_1(\vx|\vx_1)\approx \delta(\vx-\vx_1)$, and define the associated conditional vector field $\vu_t(\vx|\vx_1)$; see Section~\ref{sec:CFM} for a review on several common designs of conditional probability paths. 
Once the conditional probability paths are defined, the marginal probability path $p_t(\vx)$ is given by:
$$
p_t(\vx) := \int p_t(\vx|\vx_1) p(\vx_1)d\vx_1.
$$ 
Similarly, the marginal vector field is defined as:
$$\vu_t(\vx):=\int \vu_t(\vx|\vx_1){\frac{p_t(\vx|\vx_1)p(\vx_1) }{p_t(\vx)}}d\vx_1.
$$ 
With these relations in mind, CFM regresses $\vv_t(\vx,\theta)$ against $\vu_t(\vx|\vx_1)$ by minimizing the following CFM loss:
\begin{equation}\label{eq:CFM}
\begin{aligned}
\gL_{\rm CFM}(\theta):=\mathbb{E}_{t,p(\vx_1),p_t(\vx|\vx_1)
} \Big[\Big\|\vv_t(\vx,\theta)-\vu_t(\vx|\vx_1)\Big\|^2\Big].
\end{aligned}
\end{equation}
It has been shown that the CFM loss is identical to the FM loss up to a constant that is independent of $\theta$ (c.f. \cite{lipman2023flow}[Theorem 2]). Therefore, minimizing $\gL_{\rm CFM}(\theta)$ enables $\vv_t(\vx,\theta)$ to be an unbiased estimate for the marginal vector field $\vu_t(\vx)$.

While CFM enables $\vv_t(\vx,\theta)$ to efficiently approximate $\vu_t(\vx)$, 
we observe that their divergence gap\footnote{Here, we use the absolute value notation since the divergence of the vector field is a scalar.} $|\nabla\cdot \vv_t(\vx,\theta) -\nabla\cdot \vu_t(\vx)|$
can be substantial, resulting in significant errors in learning the probability path and estimating sample likelihood.
Figure~\ref{fig:mixGaussion} highlights the challenges in learning a Gaussian mixture distribution using CFM.
The full experimental setup for this result is provided in Section~\ref{sec:continuity-equation}. 
Additionally, we will prove the existence of an intrinsic bottleneck of FM. This underscores the importance of improving FM for generative modeling, especially for tasks requiring accurate sample likelihood estimation. Indeed, such tasks are ubiquitous in climate modeling \cite{finzi2023user,wan2023debias,li2024generative}, molecular dynamics simulation \cite{petersen2023dynamicsdiffusion}, cyber-physical systems \cite{delecki2024diffusion}, and beyond \cite{hua2024accelerated}.

\begin{figure}[!ht]
\centering
\begin{tabular}{cc}
\includegraphics[scale=0.19]{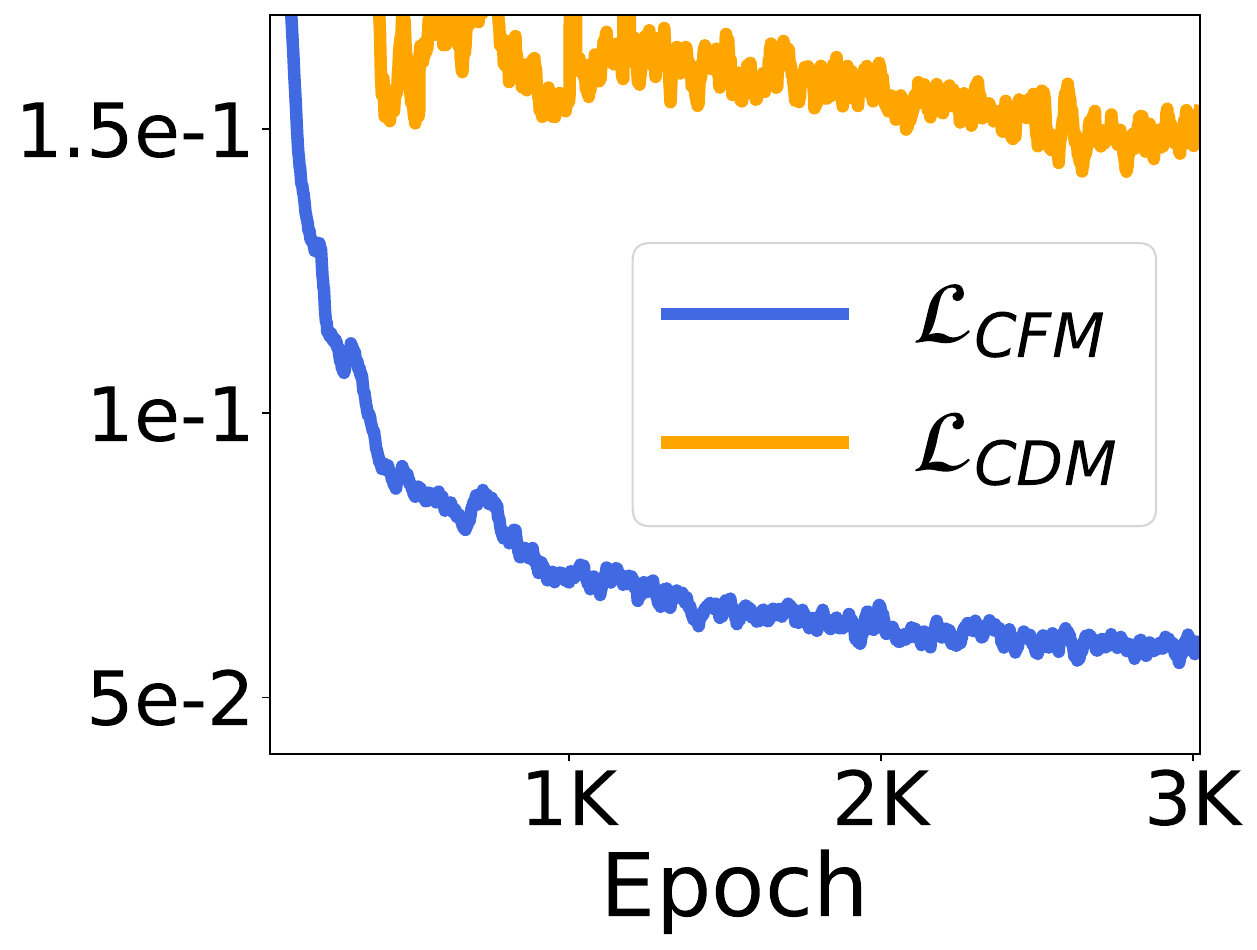} &
\hspace{-0.5cm}
\includegraphics[scale=0.19]{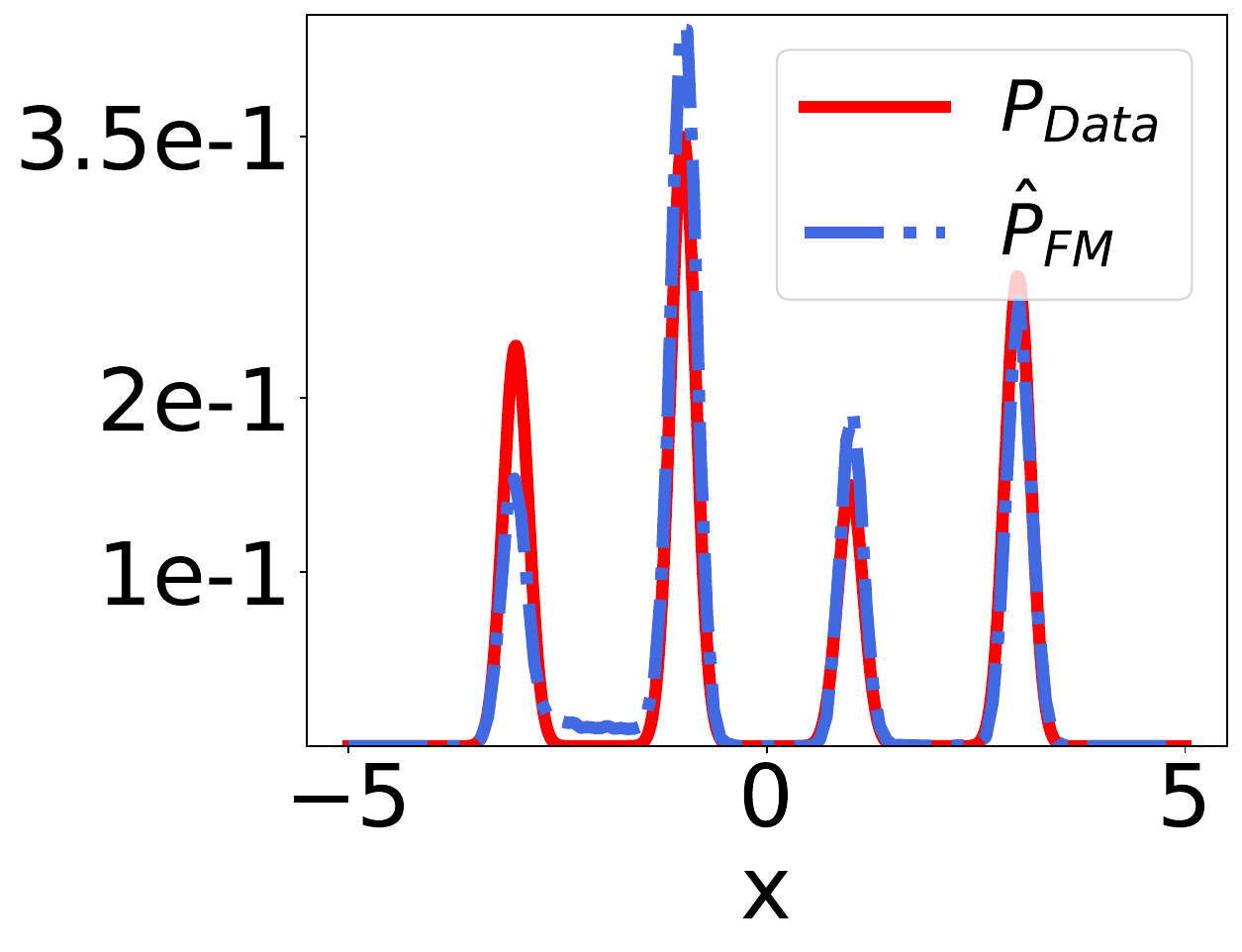} \\
\end{tabular}
\caption{
Experiments of training an FM model using CFM for sampling the 1D Gaussian mixture distribution in \eqref{eq:mixture-Gaussian}.
The left panel shows that the conditional divergence loss $\gL_{\rm CDM}$ in \eqref{eq:cfm_div} is much larger than the CFM loss $\gL_{\rm CFM}$,
and the right panel shows 
the significant gap between the exact distribution ($p_{\rm Data}$) and the distribution learned through FM ($\hat p_{\rm FM}$). 
}
\label{fig:mixGaussion}
\end{figure}


\subsection{Our Contribution}
We summarize our key contributions as follows:

\begin{itemize}[leftmargin=2mm]
\item
We characterize the error between the exact ($p_t(\vx)$) and learned ($\hat p_t(\vx)$) probability paths using a partial differential equation (PDE); see Proposition~\ref{prop:error}. This new error characterization describes how the error propagates over time, allowing us to derive a total variation (TV)-based error bound between the two probability paths; see Corollary \ref{cor:error-formula} and Theorem \ref{thm:another-bound}. 
These theoretical results underscore the importance of controlling the divergence gap to enhance the accuracy in learning $\hat p_t(\vx)$.

\item 
Informed by our established TV error bound, we develop a new training objective by combining the CFM loss with the divergence gap. However, directly minimizing the divergence gap is intractable since the divergence of the marginal vector field is unavailable. To address this issue, we propose a conditional divergence gap -- an upper bound for the unconditional divergence gap. We refer to this new training objective as flow and divergence matching (FDM); see Section~\ref{sec:algorithm} for details.

\item 
We validate the performance of FDM across several benchmark tasks, including synthetic density estimation, trajectory sampling for dynamical systems, video generation, and DNA sequence generation. Our numerical results, presented in Section~\ref{sec:experiments}, show that our proposed FDM can improve likelihood estimation and enhance sample generation by a remarkable margin over CFM.
\end{itemize}

\subsection{Some Additional Related Works}
The Kullback-Leibler (KL) divergence between the exact and learned distributions has been studied for DMs (c.f.~\cite{song2021maximum, lu2022maximum, lai2023fp}) and FM (c.f.~\cite{albergo2023stochastic}) with ODE flows, where it was observed that the FM loss in \eqref{eq:FM} alone is insufficient for minimizing the KL divergence between two probability paths, and the KL divergence bound depends on higher-order score functions.

Several works have explored improving training DMs with higher-order score matching. For instance, \citet{meng2021estimating} have proposed high-order denoising score matching leveraging Tweedie's formula \cite{robbins1992empirical,efron2011tweedie} to provide a more accurate local approximation of the data density (e.g., its curvature). 
We notice that the trace of the second-order score matching proposed in~\cite{meng2021estimating} resonates with the idea of our proposed FDM in the context of DMs. 
Additionally, inspired by the KL divergence bound, high-order score matching -- matching up to third-order score -- has been used to improve likelihood estimation for training DMs \cite{lu2022maximum}. 
Nevertheless, these higher-order score-matching methods are significantly more computationally expensive than our proposed FDM.

Enforcing the continuity equation for flow dynamics is another related work that has been studied in the context of DMs. 
In particular, \citet{lai2023fp} shows that the score function satisfies a Fokker-Planck equation (FPE) and directly penalizes the loss function with the error from plugging the learned score function into the score FPE.
To the best of our knowledge, developing a PDE characterization of the error between the exact and learned probability paths and bounding their TV gap using only the vector field and its divergence have not been considered in the literature. 

\subsection{Organization}
We organize this paper as follows. We provide a brief review of FM in Section~\ref{sec:CFM}. In Section~\ref{sec:continuity-equation}, we present our theoretical analysis of the gap between the exact and learned probability paths, accompanied by illustrative numerical evidence. We present FDM to improve training FGMs in Section~\ref{sec:algorithm}. 
We verify the advantages of FDM over FM using a few representative benchmark tasks in Section~\ref{sec:experiments}. Technical proofs, additional experimental details, and experimental results are provided in the appendix.

\section{Flow Matching}\label{sec:CFM}
In this section, we provide a brief review of FM and prevalent designs of conditional probability paths. 
A vector field $\vu_t:[0,1]\times \sR^d\rightarrow\sR^d$ defines a flow $\bm\psi_t:[0,1]\times\sR^d\rightarrow\sR^d$ through the following ODE:
\begin{equation}\label{eq:flow}
\frac{d}{dt}\bm\psi_t(\vx) = \vu_t(\bm\psi_t(\vx)),
\end{equation}
with the initial condition $\bm\psi_0(\vx) = \vx$. FGMs
map a prior noise distribution $p_0=q$ to data distribution $p_1\approx p$ via the following map:
$$
p_t(\vx)=p_0(\bm\psi_t^{-1}(\vx)) {\rm det}\Big[ \frac{\partial\bm\psi_t^{-1}}{\partial \vx}(\vx) \Big],\ \forall \vx\in p_0.
$$
For a given sample $\vx_1\sim p$, FM defines a conditional probability path satisfying $p_0(\vx|\vx_1)=q(\vx)$ and 
$p_1(\vx|\vx_1)\approx \delta(\vx-\vx_1)$ -- the Dirac-delta distribution centered at $\vx_1$, 
and the corresponding conditional vector field $\vu_t(\vx|\vx_1)$. Then FM regresses a neural network-parameterized unconditional vector field $\vv_t(\vx,\theta)$ 
by minimizing CFM in \eqref{eq:CFM}.

A prevalent choice for $p_t(\vx|\vx_t)$ is the Gaussian conditional probability path given by
$$
p_t(\vx|\vx_1) = \gN(\vx|\bm\mu_t(\vx_1),\sigma_t(\vx_1)^2\mI),
$$
with $\bm\mu_0(\vx_1)={\bf 0}$ and $\sigma_0(\vx_1)=1$. Moreover, $\bm\mu_1(\vx_1)=\vx_1$ and $\sigma_1(\vx_1)$ is a small number so that $p_1(\vx|\vx_1)\approx \delta(\vx-\vx_1)$. 
%
Some celebrated DMs can be interpreted as FM models with Gaussian conditional probability paths. In particular, the generation process of the DM with VE SDE \cite{song2020score} has the conditional probability path: 
$$
p_t(\vx|\vx_1) = \gN(\vx|\vx_1,\sigma^2_{1-t}\mI),
$$ 
where $\sigma_t$ is an increasing function satisfying $\sigma_0=0$ and $\sigma_1\gg 1$. The corresponding conditional vector field is given by 
$$
\vu_t(\vx|\vx_1) = -\frac{\sigma_{1-t}'}{\sigma_{1-t}}(\vx-\vx_1).
$$ 
where $\sigma'_{1-t}$ denote the derivative of the function. 
Likewise, the VP SDE \cite{song2020score} has the following conditional probability path: 
$$
p_t(\vx|\vx_1) = \gN(\vx|\alpha_{1-t}\vx_1,(1-\alpha^2_{1-t})\mI),
$$ 
where $\alpha_t=e^{-\frac{1}{2}T(t)}$ and $T(t)=\int_0^t\beta(s)ds$
with $\beta(s)$ being the noise scale function. The corresponding conditional vector field is 
$$
\vu_t(\vx|\vx_1) = \frac{\alpha'_{1-t}}{1-\alpha^2_{1-t}}(\alpha_{1-t}\vx-\vx_1). 
$$

Besides diffusion paths, the optimal transport (OT) path is another remarkable choice \cite{lipman2023flow}. OT path uses the Gaussian conditional probability path with
$$
\bm\mu_t(\vx) = t\vx_1,\ \mbox{and}\ \sigma_t(\vx) = 1-(1-\sigma_{\min})t. 
$$ 
The corresponding conditional vector field is given by
$$
\vu_t(\vx|\vx_1) = \frac{\vx_1-(1-\sigma_{\min})\vx}{1-(1-\sigma_{\min})t}.
$$

\section{Error Analysis for Probability Paths}\label{sec:continuity-equation}
In this section, we analyze the error between the two probability paths associated with the exact and learned vector fields, respectively. Specifically, we show that this error satisfies a 
PDE
similar to the original continuity equation, but with an additional forcing term. Using Duhamel's principle \cite{seis2017quantitative}, we reveal that this forcing term directly governs the magnitude of the error.
The omitted proofs, along with the common assumptions employed by \cite{lu2022maximum, lipman2023flow, albergo2023stochastic} and adopted in our theoretical results, are provided in Appendix~\ref{appendix:proof}.

\subsection{PDE for the Error Between Probability Flows}
Recall that the marginal probability path $p_t(\vx)$ and the marginal vector field $\vu_t(\vx)$ satisfy 
the following continuity equation \cite{villani2009optimal}:
\begin{equation}\label{eq:continuity-equation}
\frac{\partial p_t(\vx)}{\partial t} +\nabla\cdot(p_t(\vx)\vu_t(\vx))=0.
\end{equation}
We can rewrite 
the continuity equation 
into the following non-conservative form:
\begin{equation}\label{eq:pde-1}
\frac{\partial p_t(\vx)}{\partial t} = - (\nabla\cdot\vu_t(\vx)) p_t(\vx) - \vu_t(\vx)\cdot\nabla p_t(\vx).
\end{equation}
Similarly, consider the probability path $\hat p_t(\vx)$ associated with the neural network-parametrized vector field $\vv_t(\vx, \theta)$. This
probability path satisfies the following continuity equation, which has the same initial condition as the ground truth \eqref{eq:pde-1}, i.e., $p_0 = \hat p_0$:
\begin{equation}\label{eq:pde-2}
\frac{\partial \hat p_t(\vx)}{\partial t} = - (\nabla\cdot \vv_t(\vx,\theta)) \hat p_t(\vx) - \vv_t(\vx,\theta) \cdot \nabla \hat{p}_t(\vx).
\end{equation}

We now introduce the error term \(\epsilon_t(\vx) \coloneqq p_t(\vx) - \hat p_t(\vx)\).
The following proposition shows that $\epsilon_t$ satisfies a PDE similar to \eqref{eq:continuity-equation}, but with an additional forcing term that reflects the discrepancy between the vector fields $\vu_t$ and $\vv_t$.

\begin{restatable}{proposition}{thmerror}\label{prop:error}
$\epsilon_t\coloneqq p_t - \hat p_t$ satisfies the following PDE:
\begin{equation}\label{eq:PDE-error}
\centering
    \begin{cases}
       \partial_t \epsilon_t + \nabla\cdot \big( \epsilon_t \vv_t   \big) = L_t,  \\
\epsilon_0(\vx) = 0,
\end{cases}
\end{equation}
where 
\begin{equation}\label{eq:forcing-term}
\begin{aligned}
    L_t &= - p_t \Big[ \nabla\cdot (\vu_t-\vv_t) + (\vu_t-\vv_t)\cdot \nabla \log p_t\Big].
\end{aligned}
\end{equation}
\end{restatable}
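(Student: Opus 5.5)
The plan is to subtract the two continuity equations and reorganize the result so that the transport operator acting on $\epsilon_t$ is the one driven by the learned field $\vv_t$; everything left over becomes the forcing term. Write both equations in conservative form: $\partial_t p_t + \nabla\cdot(p_t\vu_t)=0$ from \eqref{eq:continuity-equation}, and $\partial_t \hat p_t + \nabla\cdot(\hat p_t \vv_t)=0$, which is equivalent to \eqref{eq:pde-2} by the product rule $\nabla\cdot(\hat p_t\vv_t) = (\nabla\cdot\vv_t)\hat p_t + \vv_t\cdot\nabla\hat p_t$. Subtracting the second equation from the first gives
\begin{equation*}
\partial_t \epsilon_t + \nabla\cdot(p_t\vu_t) - \nabla\cdot(\hat p_t\vv_t) = 0 .
\end{equation*}

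Next, add and subtract $\nabla\cdot(p_t\vv_t)$. This yields $\nabla\cdot(p_t\vu_t)-\nabla\cdot(\hat p_t\vv_t) = \nabla\cdot(\epsilon_t\vv_t) + \nabla\cdot\big(p_t(\vu_t-\vv_t)\big)$, so
\begin{equation*}
\partial_t\epsilon_t + \nabla\cdot(\epsilon_t\vv_t) = -\nabla\cdot\big(p_t(\vu_t-\vv_t)\big).
\end{equation*}
Expanding the right-hand side with the product rule gives $-p_t\,\nabla\cdot(\vu_t-\vv_t) - (\vu_t-\vv_t)\cdot\nabla p_t$. Wherever $p_t>0$ we can write $\nabla p_t = p_t\nabla\log p_t$, which recovers exactly $L_t$ in \eqref{eq:forcing-term}. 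The initial condition $\epsilon_0=0$ follows immediately from $p_0=\hat p_0=q$.

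The only point needing care is regularity, and this is the step I expect to be the main obstacle, although it is mild. We need $p_t$, $\hat p_t$, $\vu_t$ and $\vv_t$ to be differentiable enough for the product rule to hold, and we need $p_t>0$ so that $\log p_t$ is defined. I would invoke the standing assumptions of Appendix~\ref{appendix:proof}: smooth, bounded vector fields and smooth, positive densities. Positivity holds automatically for Gaussian conditional paths convolved with $p$, because $p_t$ is then a mixture of Gaussians with $\sigma_t>0$. Under weaker assumptions one can instead interpret the identity distributionally, using the form $-\nabla\cdot(p_t(\vu_t-\vv_t))$, which never needs $\log p_t$.
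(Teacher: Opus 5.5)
Your proof is correct and follows essentially the same route as the paper: both subtract the two continuity equations and regroup so that $\nabla\cdot(\epsilon_t\vv_t)$ sits on the left. The paper works in non-conservative form, while you work in conservative form by adding and subtracting $\nabla\cdot(p_t\vv_t)$, which gives $L_t = -\nabla\cdot\big(p_t(\vu_t-\vv_t)\big)$ directly — the identity the paper later uses in Section~\ref{sec:algorithm}.
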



\subsection{Error Bound for Probability Paths}
FM aims to minimize the discrepancy between $p_t$ and $\hat p_t$ by reducing the difference between their associated vector fields, $\vu_t(\vx)$ and $\vv_t(\vx,\theta)$, through minimizing the CFM loss \eqref{eq:CFM}.
However, Proposition~\ref{prop:error} highlights that the error dynamics are not only influenced by $\vu_t- \vv_t$ but also by 
$\nabla\cdot (\vu_t-\vv_t)$, as both terms contribute to the forcing term in \eqref{eq:PDE-error}. To formalize this observation, we solve $\epsilon_t$ using Duhamel's formula \cite{seis2017quantitative}. 
In particular, we have the following result:
\begin{restatable}{corollary}{corerrorformula}\label{cor:error-formula}
For any $t\in [0,1]$, the error $\epsilon_t$ satisfies
$$
\begin{aligned}
\epsilon_t(\phi_t({\vx}))\cdot {\rm det}\nabla\phi_t({\vx})
=-\int_0^t 
L_s(\phi_s({\vx})) 
\cdot{\rm det}\nabla\phi_s({\vx}) ds,
\end{aligned}
$$
where $\phi_t(\vx)$ is the flow induced by the vector field 
$\vv_t(\vx)$
in a similar way as that in \eqref{eq:flow}, ${\rm det}\nabla\phi_t({\vx})$ denotes the determinant of the Jacobian matrix $\nabla\phi_t({\vx})$, and $L_s$ is defined in Proposition~\ref{prop:error}. 

\end{restatable}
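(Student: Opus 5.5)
The plan is to pass to Lagrangian coordinates along the flow $\phi_t$ of $\vv_t$. In these coordinates the transport part of \eqref{eq:PDE-error} becomes a total time derivative, and the identity follows by integrating in time. I will work under the standing regularity assumptions of Appendix~\ref{appendix:proof}: $\vv_t$ is $C^1$ in $\vx$ and Lipschitz, so that $\phi_t$ is a $C^1$ diffeomorphism for every $t$, and $\epsilon_t$, $p_t$, $\hat p_t$ are $C^1$ in $(t,\vx)$.

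\textbf{Step 1 (Jacobi/Liouville formula).} Write $J_t(\vx):={\rm det}\nabla\phi_t(\vx)$. Differentiating $\partial_t\nabla\phi_t(\vx)=\nabla\vv_t(\phi_t(\vx))\,\nabla\phi_t(\vx)$ and applying Jacobi's formula for the derivative of a determinant gives
\[
\partial_t J_t(\vx)=(\nabla\cdot\vv_t)(\phi_t(\vx))\,J_t(\vx),\qquad J_0=1.
\]
In particular $J_t>0$.

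\textbf{Step 2 (chain rule).} Set $g(t):=\epsilon_t(\phi_t(\vx))\,J_t(\vx)$. By the chain rule and Step 1,
\[
g'(t)=\Big[\partial_t\epsilon_t+\vv_t\cdot\nabla\epsilon_t+\epsilon_t\,\nabla\cdot\vv_t\Big](\phi_t(\vx))\,J_t(\vx).
\]
The bracket is exactly $\partial_t\epsilon_t+\nabla\cdot(\epsilon_t\vv_t)$ evaluated at $\phi_t(\vx)$. Proposition~\ref{prop:error} identifies this quantity with the forcing term $L_t$, so
\[
g'(t)=\pm L_t(\phi_t(\vx))\,J_t(\vx).
\]

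\textbf{Step 3 (integrate).} Since $\epsilon_0=0$ and $\phi_0={\rm id}$, we have $g(0)=0$. Integrating Step 2 from $0$ to $t$ gives the claimed formula.

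\textbf{Main obstacle: the sign.} The sign in Step 2 must be reconciled with the exact sign convention of $L_t$ in \eqref{eq:forcing-term}. I will first re-derive the forcing term by subtracting \eqref{eq:pde-2} from \eqref{eq:pde-1} and regrouping as
\[
\partial_t\epsilon_t+\nabla\cdot(\epsilon_t\vv_t)=-\nabla\cdot\big(p_t(\vu_t-\vv_t)\big).
\]
Expanding the right-hand side produces the expression in \eqref{eq:forcing-term}. I will then carry that sign through Step 2 carefully, since the stated formula has a leading minus sign. If the conventions turn out to produce $+\int_0^t L_s\,{\rm det}\nabla\phi_s\,ds$ instead, the discrepancy is purely a matter of sign convention and does not affect the subsequent total-variation bounds, which depend only on $|L_s|$. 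The remaining technical point is justifying the differentiation under the chain rule, which follows from the $C^1$ regularity assumed above.
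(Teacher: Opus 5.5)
Your argument is the paper's argument: the appendix simply invokes Duhamel's formula along the flow $\phi_t$ of $\vv_t$, and your Liouville-formula-plus-chain-rule computation in Steps 1--3 is exactly the derivation behind that citation. The sign, however, is not a matter of convention, and you should not hedge with $\pm$. Proposition~\ref{prop:error} gives $\partial_t\epsilon_t+\nabla\cdot(\epsilon_t\vv_t)=L_t$ exactly, so your Step 2 yields $g'(t)=+L_t(\phi_t(\vx))\,{\rm det}\nabla\phi_t(\vx)$, and hence $\epsilon_t(\phi_t(\vx))\,{\rm det}\nabla\phi_t(\vx)=+\int_0^t L_s(\phi_s(\vx))\,{\rm det}\nabla\phi_s(\vx)\,ds$. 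This plus-sign identity is also what the paper's own proof obtains, so the leading minus in the displayed statement is a misprint: taken literally it fails whenever $\epsilon_t\not\equiv 0$. As you say, the misprint is harmless downstream, because the TV bound only uses $|L_s|$.
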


Corollary \ref{cor:error-formula}
suggests that minimizing the divergence gap is as important as reducing the vector field discrepancy in order to learn an accurate probability path. 

To quantify the error $\epsilon_t$, we
consider the following TV distance between $p_t$ and $\hat p_t$:
\begin{equation}\label{eq:tv-distance}
\begin{aligned}
\operatorname{TV}(p_t, \hat p_t) &\coloneqq \frac{1}{2} \int \big| p_t(\vx) - \hat p_t(\vx) \big| \, d\vx \\
&= \frac{1}{2} \int \big| \epsilon_t(\vx) \big| \, d\vx.
\end{aligned}
\end{equation}

Motivated by the error-related identity in Corollary~\ref{cor:error-formula} and the form of $L_t$ in \eqref{eq:forcing-term}, we introduce an additional term as follows:
\begin{equation}\label{eq:divergence-matching}
\gL_{\rm DM}(\theta) := \mathbb{E}_{t,p_t(\vx)}\bigg[\Big|  \nabla\cdot(\vu_t - \vv_t) + (\vu_t - \vv_t)\cdot \nabla\log p_t \Big | \bigg].
\end{equation}
The following theorem establishes an upper bound for the error term ${\rm TV}(p_t,\hat{p}_t)$ in terms of $\gL_{\rm DM}(\theta)$.
\begin{restatable}
{theorem}
{thmanotherbound}\label{thm:another-bound}
Under some common mild assumptions adopted in \cite{lu2022maximum, lipman2023flow, albergo2023stochastic},
the following inequality holds for any $t\in [0,1]$:
\begin{equation}
    \operatorname{TV}(p_t, \hat p_t) \leq \frac{1}{2}\gL_{\rm DM}(\theta).
\end{equation}
Specifically, $p_t(\vx) = \hat{p}_t(\vx)$ when 
$\gL_{\rm DM}$ is zero. 
\end{restatable}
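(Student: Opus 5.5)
We plan to deduce the bound from the error PDE of Proposition~\ref{prop:error} together with its Lagrangian solution in Corollary~\ref{cor:error-formula}, using two facts: transport by the flow $\phi$ preserves the $L^1$ norm, and $L_t$ is exactly $-p_t$ times the integrand of $\gL_{\rm DM}$.

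First, take absolute values in Corollary~\ref{cor:error-formula}. For the flow $\phi_t$ of $\vv_t$, ${\rm det}\nabla\phi_t>0$, because it starts at $1$ and solves $\frac{d}{dt}{\rm det}\nabla\phi_t = (\nabla\cdot\vv_t)(\phi_t)\,{\rm det}\nabla\phi_t$. Hence
$$
|\epsilon_t(\phi_t(\vx))|\,{\rm det}\nabla\phi_t(\vx)\le \int_0^t |L_s(\phi_s(\vx))|\,{\rm det}\nabla\phi_s(\vx)\,ds .
$$
Next, integrate this inequality over $\vx\in\sR^d$. Under the standing assumptions (a Lipschitz, sufficiently regular $\vv_t$), $\phi_t$ is a diffeomorphism of $\sR^d$. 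The change of variables $\vy=\phi_t(\vx)$ on the left and $\vy=\phi_s(\vx)$ inside the right, after Tonelli swaps the $s$- and $\vx$-integrals, gives
$$
\int|\epsilon_t(\vy)|\,d\vy\le\int_0^t\!\!\int |L_s(\vy)|\,d\vy\,ds .
$$

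Then substitute $|L_s| = p_s\,|\nabla\cdot(\vu_s-\vv_s)+(\vu_s-\vv_s)\cdot\nabla\log p_s|$. This yields
$$
\int_0^t\mathbb{E}_{p_s}\big[|\nabla\cdot(\vu_s-\vv_s)+(\vu_s-\vv_s)\cdot\nabla\log p_s|\big]\,ds .
$$
Since the integrand is nonnegative and $t\le1$, this is at most the integral over $[0,1]$, which equals $\gL_{\rm DM}(\theta)$ because $t\sim U[0,1]$. Multiplying by $\frac12$ gives ${\rm TV}(p_t,\hat p_t)\le\frac12\gL_{\rm DM}(\theta)$.

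For the equality statement: if $\gL_{\rm DM}=0$, the bound gives $\int|\epsilon_t|=0$, so $p_t=\hat p_t$ almost everywhere, and everywhere under the continuity assumptions.

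The main obstacle is rigor rather than computation. We need $\phi_t$ to be a global diffeomorphism and $L_s\in L^1$ so that Tonelli and the change of variables apply; these are exactly the regularity, Lipschitz and integrability assumptions borrowed from \cite{lu2022maximum, lipman2023flow, albergo2023stochastic}. A more careful alternative that avoids the Lagrangian picture is to multiply the PDE of Proposition~\ref{prop:error} by a smooth approximation of ${\rm sign}(\epsilon_t)$ and integrate. The transport term then drops out by integration by parts, provided decay at infinity holds, and one obtains $\frac{d}{dt}\|\epsilon_t\|_{L^1}\le\|L_t\|_{L^1}$. I would first try the Lagrangian route and fall back on this renormalization argument if the diffeomorphism properties are not clearly covered by the assumptions.
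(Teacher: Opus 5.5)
Your proposal is correct and follows the paper's proof essentially step for step. You take absolute values in the Duhamel identity of Corollary~\ref{cor:error-formula}, integrate in $\vx$ and change variables along $\phi_t$ (and along $\phi_s$ after swapping the $s$- and $\vx$-integrals), substitute $|L_s| = p_s\,|\nabla\cdot(\vu_s-\vv_s)+(\vu_s-\vv_s)\cdot\nabla\log p_s|$, and enlarge $[0,t]$ to $[0,1]$ to obtain $\gL_{\rm DM}(\theta)$, exactly as the paper does. Your additional remarks (positivity of $\det\nabla\phi_t$ via Liouville's formula, the diffeomorphism and Tonelli justifications, and the renormalization fallback) simply make explicit the regularity that the paper leaves to its ``common mild assumptions.''
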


\section{
Conditional Divergence Matching
}
\label{sec:algorithm}

In the previous section, we have highlighted the importance of matching the divergence between $\vu_t$ and $\vv_t$ beyond matching the vector fields themselves. However, directly minimizing the divergence loss presents a computational challenge, as computing the divergence of the exact unconditional vector field is intractable. To address this issue, we will leverage a similar idea to the conditional flow matching to address the computational issue.

We start by deriving the conditional version of $\gL_{\rm DM}(\theta)$. 
We recall the following conditional form of the continuity equation from \citep{lipman2023flow}:
\begin{equation}
    \partial_t p_t(\vx|\vx_1) = \nabla\cdot \big( p_t(\vx|\vx_1) \vu_t(\vx|\vx_1)\big),
\end{equation}
which relates the evolution of the conditional probability density $p_t(\vx|\vx_1) $ to the divergence of $p_t(\vx|\vx_1) \vu_t(\vx|\vx_1)$.
By integrating over the conditioning variable $\vx_1$ and applying the continuity \eqref{eq:continuity-equation}, we obtain the following connection between the conditional divergence and unconditional divergence:
\begin{equation}\label{eq:uncondition-divergence-connection}
    \begin{aligned}
    & \nabla \cdot \big(p_t(\vx) \vu_t(\vx)\big)\\
    = & \partial_t p_t(\vx) \\
    = & \int \partial_t p_t(\vx|\vx_1) p(\vx_1)\, d\vx_1 \\
    = & \int \nabla\cdot \big( p_t(\vx|\vx_1) \vu_t(\vx|\vx_1)\big) p(\vx_1)\, d\vx_1.
    \end{aligned}
\end{equation}

Furthermore, we observe the following identity:
\begin{equation*}
\begin{aligned}
    & p_t\Big[ \nabla\cdot (\vu_t - \vv_t) + (\vu_t - \vv_t)\cdot \nabla \log p_t \Big]\\
    = & \nabla\cdot (p_t \vu_t) - \nabla\cdot (p_t \vv_t).
\end{aligned}
\end{equation*}

This leads to the following error estimation for the conditional divergence loss:
\begin{equation}
\begin{aligned}
& \gL_{\rm CDM}(\theta)\\
:= & \mathbb{E}_{t,p_t(\vx|\vx_1),p(\vx_1)}\Big[ \Big| \Big(  \nabla\cdot \vu_t(\vx|\vx_1)
- \nabla\cdot \vv_t(\vx,\theta) \Big) \\
+ & \Big( \vu_t(\vx|\vx_1) - \vv_t(\vx,\theta)\Big)\cdot \nabla \log p_t(\vx|\vx_1)  \Big | \Big].
\end{aligned}
\label{eq:cfm_div}
\end{equation}

Now we are ready to 
establish the fact that the conditional divergence loss $\gL_{\rm CDM}(\theta)$ is an upper bound for the divergence loss $\gL_{\rm DM}(\theta)$ and the TV gap $ \operatorname{TV}(p_t, \hat p_t)$. We summary our results in the following theorem:
\begin{restatable}{theorem}{thmerrorconditional}\label{thm:error-conditional}
We have the following inequality:
\begin{equation}
    \gL_{\rm DM}(\theta) \leq \gL_{\rm CDM}(\theta) .
\end{equation}
Furthermore, we have:
\begin{equation}
\operatorname{TV}(p_t, \hat p_t) \leq \frac{1}{2}\gL_{\rm CDM}(\theta),
\end{equation}
for any $t\in [0,1]$.
\end{restatable}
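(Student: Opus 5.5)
The plan is to rewrite both losses as expectations of divergences of probability fluxes, and then use Jensen's inequality to move the absolute value inside the integral over $\vx_1$. The starting point is the identity displayed just before \eqref{eq:cfm_div}. It gives
\[
p_t\Big[\nabla\cdot(\vu_t-\vv_t)+(\vu_t-\vv_t)\cdot\nabla\log p_t\Big]=\nabla\cdot(p_t\vu_t)-\nabla\cdot(p_t\vv_t),
\]
so the expectation over $p_t(\vx)$ in \eqref{eq:divergence-matching} cancels the prefactor $p_t$. We obtain
\[
\gL_{\rm DM}(\theta)=\mathbb{E}_t\int \big|\nabla\cdot(p_t\vu_t)-\nabla\cdot(p_t\vv_t)\big|\,d\vx.
\]
Applying the same product-rule identity to the conditional quantities, with $p_t(\vx|\vx_1)$ and $\vu_t(\vx|\vx_1)$ in place of $p_t$ and $\vu_t$, gives
\[
\gL_{\rm CDM}(\theta)=\mathbb{E}_t\int\!\!\int \big|\nabla\cdot(p_t(\vx|\vx_1)\vu_t(\vx|\vx_1))-\nabla\cdot(p_t(\vx|\vx_1)\vv_t(\vx,\theta))\big|\,p(\vx_1)\,d\vx_1\,d\vx.
\]

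Next I express the unconditional integrand as an average of the conditional ones. By \eqref{eq:uncondition-divergence-connection},
\[
\nabla\cdot(p_t\vu_t)=\int\nabla\cdot\big(p_t(\vx|\vx_1)\vu_t(\vx|\vx_1)\big)p(\vx_1)\,d\vx_1 .
\]
Since $\vv_t$ does not depend on $\vx_1$ and $p_t(\vx)=\int p_t(\vx|\vx_1)p(\vx_1)\,d\vx_1$, differentiating under the integral gives
\[
\nabla\cdot(p_t\vv_t)=\int \nabla\cdot\big(p_t(\vx|\vx_1)\vv_t\big)p(\vx_1)\,d\vx_1 .
\]
Subtracting these shows that the integrand of $\gL_{\rm DM}$ is the absolute value of an integral over $\vx_1$ of the conditional integrand. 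The triangle inequality (Jensen for $|\cdot|$) then bounds it by the integral of the absolute value. Integrating over $\vx$ and $t$ and using Fubini/Tonelli yields $\gL_{\rm DM}(\theta)\le\gL_{\rm CDM}(\theta)$.

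The TV bound follows by chaining this with Theorem~\ref{thm:another-bound}:
\[
\operatorname{TV}(p_t,\hat p_t)\le\tfrac12\gL_{\rm DM}(\theta)\le\tfrac12\gL_{\rm CDM}(\theta)\quad\text{for every } t\in[0,1].
\]

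The main obstacle is justifying the interchanges of differentiation, integration in $\vx_1$, and the order of integrals. This requires enough regularity and integrability of $p_t(\vx|\vx_1)$, $\vu_t(\vx|\vx_1)$ and $\vv_t$, for instance dominated convergence bounds on their derivatives. Those are exactly the mild assumptions of Appendix~\ref{appendix:proof} used for Theorem~\ref{thm:another-bound}, which I would invoke rather than re-derive. A minor subtlety is that the marginal cancellation requires $p_t>0$ (or the convention $0\cdot\infty=0$ where $p_t=0$); the Gaussian paths in Section~\ref{sec:CFM} satisfy $p_t>0$.
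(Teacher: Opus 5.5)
Your proposal is correct and follows essentially the same argument as the paper. Both proofs write the $\vu_t$-term via \eqref{eq:uncondition-divergence-connection} and the $\vv_t$-term via $p_t=\int p_t(\vx|\vx_1)p(\vx_1)\,d\vx_1$ as $\vx_1$-averages of their conditional counterparts, subtract, move the absolute value inside the $\vx_1$-integral by the triangle inequality, and obtain the TV bound by chaining with Theorem~\ref{thm:another-bound} (a step the paper leaves implicit); the only cosmetic difference is that you stay in flux form $\nabla\cdot\big(p_t(\vx|\vx_1)(\vu_t(\vx|\vx_1)-\vv_t)\big)$, whereas the paper expands with the product rule before applying the inequality.
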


\subsection{Flow and Divergence Matching.}
In practice, we observe that 
minimizing $\gL_{\rm CDM}(\theta)$ alone cannot yield appealing results, as the loss cannot go to exact zero in training. This nonzero loss comes from a balance between $\nabla\cdot \vu_t(\vx|\vx_1) - \nabla\cdot \vv_t(\vx,\theta)$ and $(\vu_t(\vx|\vx_1) - \vv_t(\vx,\theta))\cdot\nabla\log p_t(\vx|\vx_1)$, and both terms can be positive or negative, resulting in cancellation. As such, there is no guarantee that we can learn a vector field $\vv_t(\vx,\theta)$ that is in proximity to $\vu_t(\vx)$ by minimizing $\gL_{\rm CDM}(\theta)$. 
In contrast, 
by using a weighted sum of $\gL_{\rm CDM}$ and $\gL_{\rm CFM}$ as the training objective, 
we can directly control the gap between the vector fields and their divergences.  Therefore, we propose the flow and divergence matching (FDM) loss:
\begin{equation}\label{eq:FDM}
\begin{aligned}
\gL_{\rm FDM} = \lambda_1\gL_{\rm CFM} + \lambda_2\gL_{\rm CDM},
\end{aligned}
\end{equation}
where $\lambda_1, \lambda_2 > 0$ are hyperparameters; we choose them via hyperparameter search in this work. It is an interesting future direction to design a principle to choose $\lambda$s optimally. 

\smallskip
\begin{remark}
It is worth noting that minimizing the objective function $\gL_{\rm FDM}$ offers a more computationally efficient approach compared to higher-order control methods presented in \cite{lu2022maximum, lai2023fp}, as it is computationally much cheaper than controlling differences in higher-order quantities (e.g., gradient of the divergence). Moreover, to further improve training efficiency, we introduce an efficient squared conditional divergence-matching loss $\gL^{\rm eff}_{\rm CDM\textbf{-}2}$, which adopts stop-gradient~\cite{lu2022maximum} and Hutchinson trace estimation~\cite{hutchinson1989stochastic} techniques. This adds only one extra backward pass compared with baseline flow-matching training; see Appendix~\ref{sec:efficient_squared_loss} for details.
While a bounded TV distance does not necessarily imply a bound on the KL divergence, we leave the exploration of developing a computationally efficient method for controlling the KL divergence in this direction for future work.
\end{remark}

\subsection{Synthetic Experiment.}
To 
solidify our theoretical results, we present a simple numerical example before moving to real-world applications.  Specifically, we consider the problem of sampling from the following Gaussian mixture distribution
\begin{equation}\label{eq:mixture-Gaussian}
\begin{split}
p(\vx) =& 0.23\mathcal{N}(-3, 0.1) +  0.35\mathcal{N}(-1, 0.1) \\
&+ 0.15\mathcal{N}(-1, 0.1) +  0.27\mathcal{N}(3, 0.1),   
\end{split}
\end{equation}
using both standard FM and our proposed FDM defined in \eqref{eq:FDM}. We use a 3-layer MLP to approximate the VP diffusion path vector field by minimizing \eqref{eq:FDM} with $\lambda_1 = 1, \lambda_2 = 0$ for FM and $\lambda_1 = 1, \lambda_2 = 0.2$ for FDM. We use $10^4$ data points sampled from equation~(\ref{eq:mixture-Gaussian}) for training.

\begin{figure}[!ht]
\centering
\includegraphics[scale=0.20]{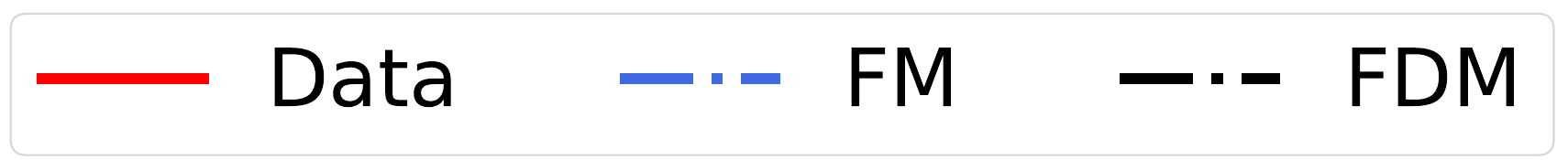}
\begin{tabular}{ccc}
\hspace{-0.2cm}
\includegraphics[scale=0.175]{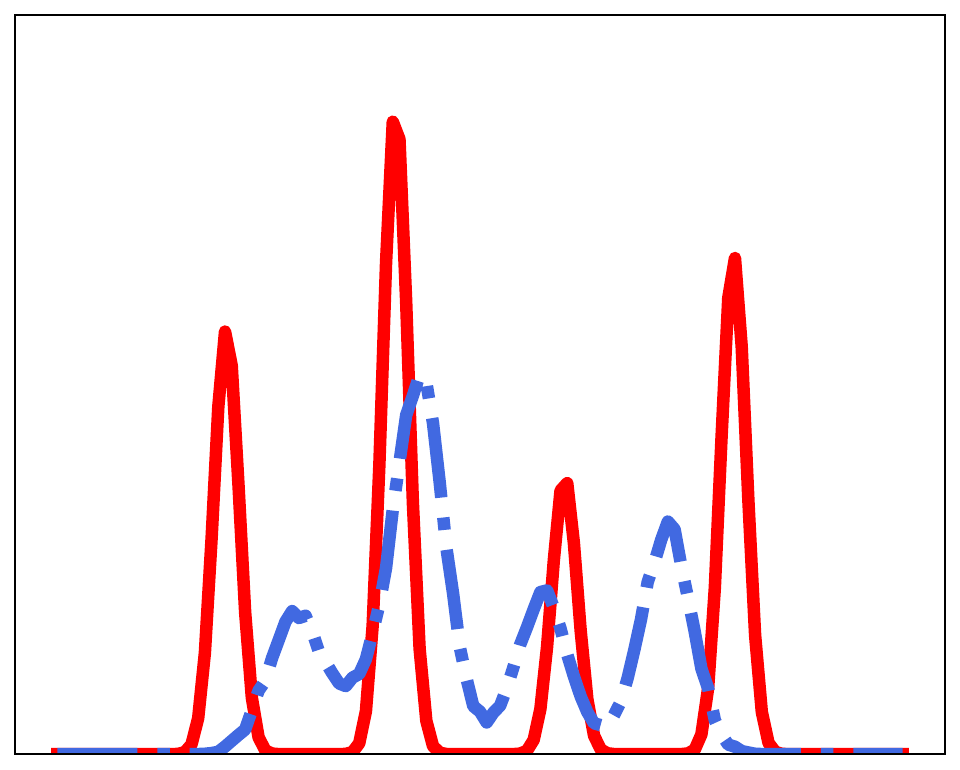} &  
\hspace{-0.5cm}\includegraphics[scale=0.175]{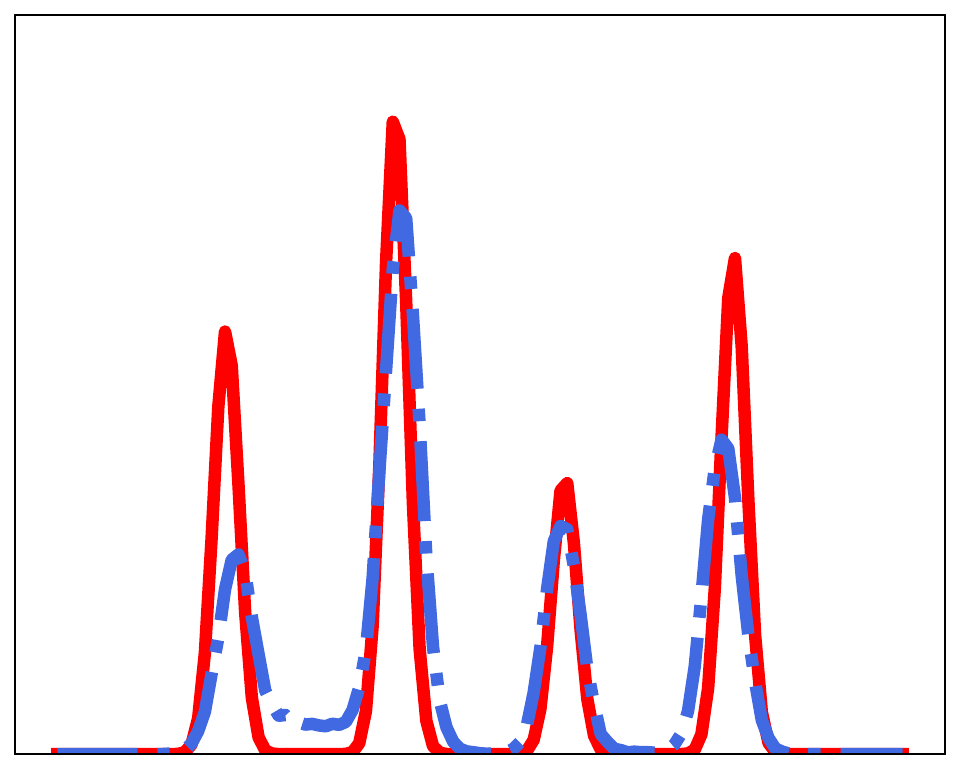} & 
\hspace{-0.5cm}\includegraphics[scale=0.175]{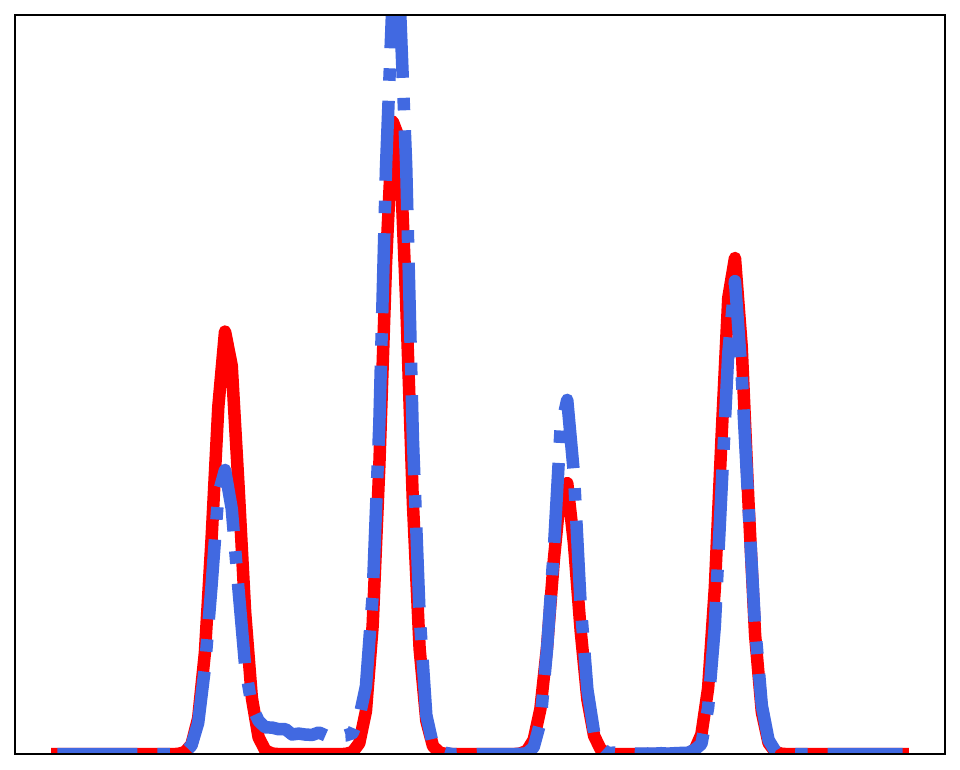} \\
\hspace{-0.2cm}
\includegraphics[scale=0.175]{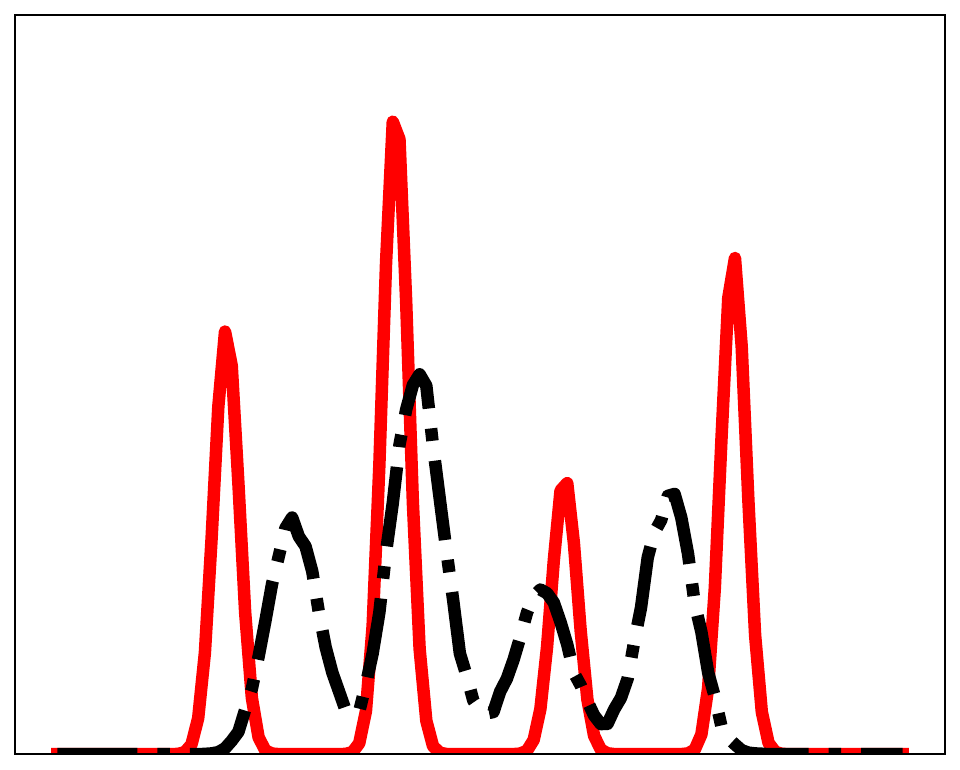} &  
\hspace{-0.5cm}\includegraphics[scale=0.175]{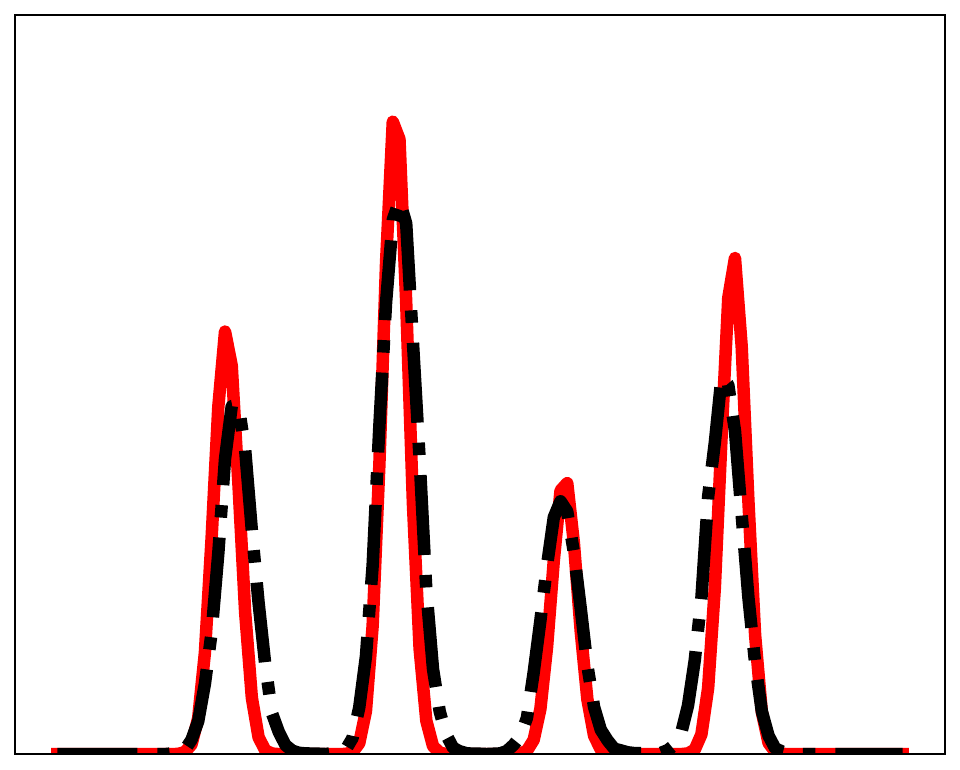} & 
\hspace{-0.5cm}\includegraphics[scale=0.175]{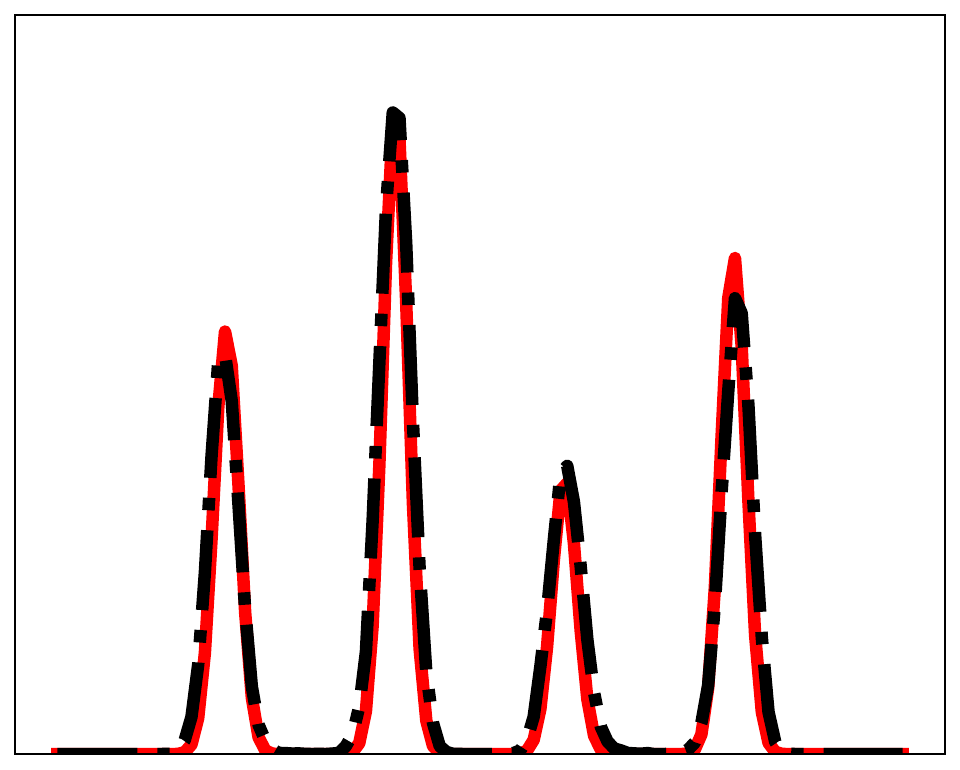} \\
\end{tabular}
\caption{
Snapshots for probability paths at $t=0.6,0.85$, and $1$ (left to right). First/Second row: FM/FDM vs. data distribution. 
}
\label{fig:gaussian-mix}
\end{figure}

\begin{figure}[!ht]
\centering
\begin{tabular}{cc}
\includegraphics[scale=0.23]{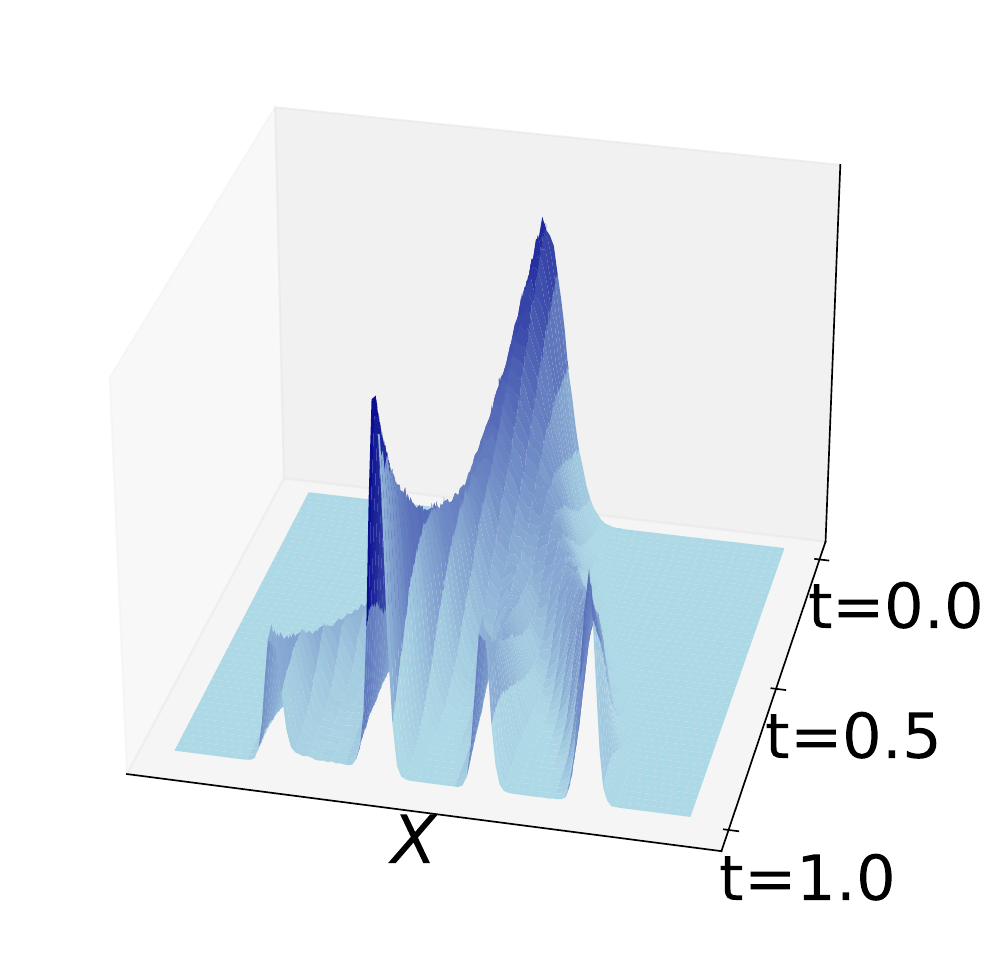} & 
\includegraphics[scale=0.23]{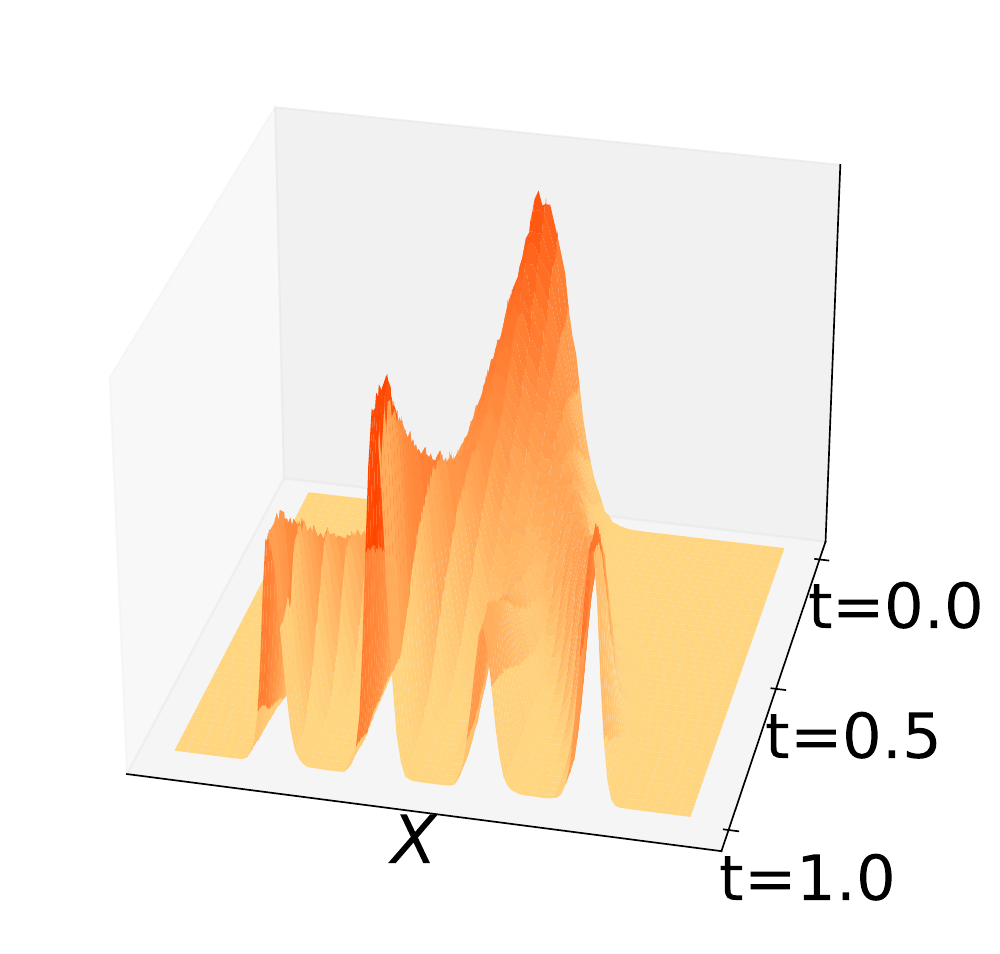} \\
\end{tabular}
\caption{
Comparison of probability paths over time learned by FM (left) vs. FDM (right).
}
\label{fig:gaussian-mix-3D}
\end{figure}

Figures~\ref{fig:gaussian-mix} and \ref{fig:gaussian-mix-3D} contrast the performance of our proposed FDM against the baseline FM. The numerical results confirm that 
the probability path (at $t=1$) learned by FM suffers from a substantial discrepancy from the exact Gaussian mixture distribution. In contrast, FDM learns the Gaussian mixture 
much more accurately than FM. Specifically, the TV gaps between the learned and exact distributions are 0.0945 and 0.0587 for FM and FDM, respectively.

\section{Experimental Results}
\label{sec:experiments}
In this section, we validate the efficacy and efficiency of the proposed FDM in enhancing FM across various benchmark tasks, 
including density estimation on synthetic 2D data (Section~\ref{subsection:synthetic-density}) and image data (Section~\ref{subsection:image-density}), DNA sequence generation (Section~\ref{subsec:DNA-generation}), and spatiotemporal data sampling tasks including trajectory sampling for dynamical systems (Section~\ref{subsec:dynamics}) and video prediction via latent FM (Section~\ref{subsec:video-generation}). 

Our experiments confirm that our proposed 
FDM remarkably improves FM with guidance, enhancing promoter DNA sequence design with class-conditional flow, as well as refining trajectory generation for dynamical systems and video predictions conditioning on the initial states over the first several time steps. 
In this section, we report the error between the exact and learned distributions in terms of the TV distance, and the corresponding KL divergence results are further provided in Appendix~\ref{appendix:additional-results}.

\textbf{Software and Equipment.} Our implementation utilizes PyTorch Lightning \cite{falcon2019pytorch} for synthetic 
density estimation, DNA sequence generation, and video generation, while JAX \cite{jax2018github} and TensorFlow \cite{abadi2016tensorflow} are employed for dynamical systems-related experiments. Experiments are conducted on multiple NVIDIA RTX 3090 GPUs.

\textbf{Training Setup.} 
See Appendix~\ref{appedix:experiment-details}. 

\textbf{Models and Datasets.} We employ 
OT 
and 
VE/VP diffusion paths 
for the flow maps in most 
tasks except the Dirichlet flow for DNA generation. We follow the approach used in \cite{huang2024efficient, lu2022maximum} to estimate the divergence, which employs Hutchinson’s trace estimator \cite{hutchinson1989stochastic}. Our experiments utilize a numerical simulation-based dataset for density estimation and trajectory sampling, a dataset extracted from a database of human promoters \cite{hon2017atlas} for DNA design, and the KTH human motion dataset \cite{schuldt2004recognizing} and the BAIR Robot Pushing dataset \cite{ebert2017self} for video prediction.


\begin{table}[!ht]
\centering
\fontsize{8}{8}\selectfont

\begin{tabular}{lrrrr}
\toprule
Model & FM (OT) & FDM (OT) & FM (VP) & FDM (VP)\\
\midrule
Likelihood ($\uparrow$) & 2.38$_{\pm.02}$ & \textbf{2.53}$_{\pm.02}$ & 2.34$_{\pm.02}$ & 2.46$_{\pm.02}$ \\
\bottomrule
\end{tabular}
\caption{
    Likelihood estimation of models on the checkerboard test set.
    Here, ``OT'' denotes the optimal transport path and 
    ``VP'' denotes the variance-preserving path.
    Unit: $\times 10^{-2}$
}
\label{tab:density-estimation}
\end{table}

\begin{figure}[!ht]
\centering
\fontsize{7}{7}\selectfont
\begin{tabular}{cc
c}
\includegraphics[width=0.29\linewidth]{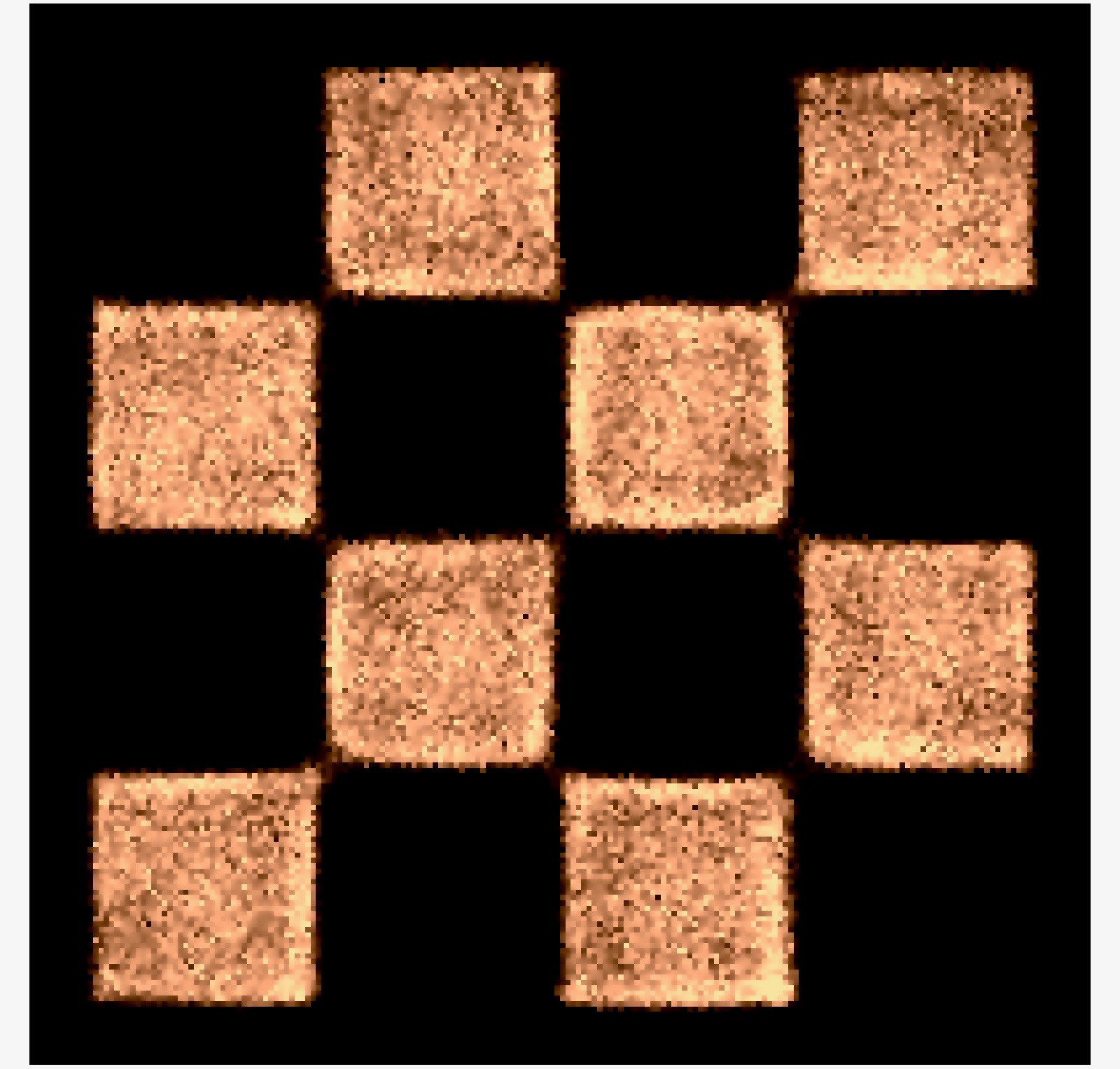}& 
\includegraphics[width=0.29\linewidth]{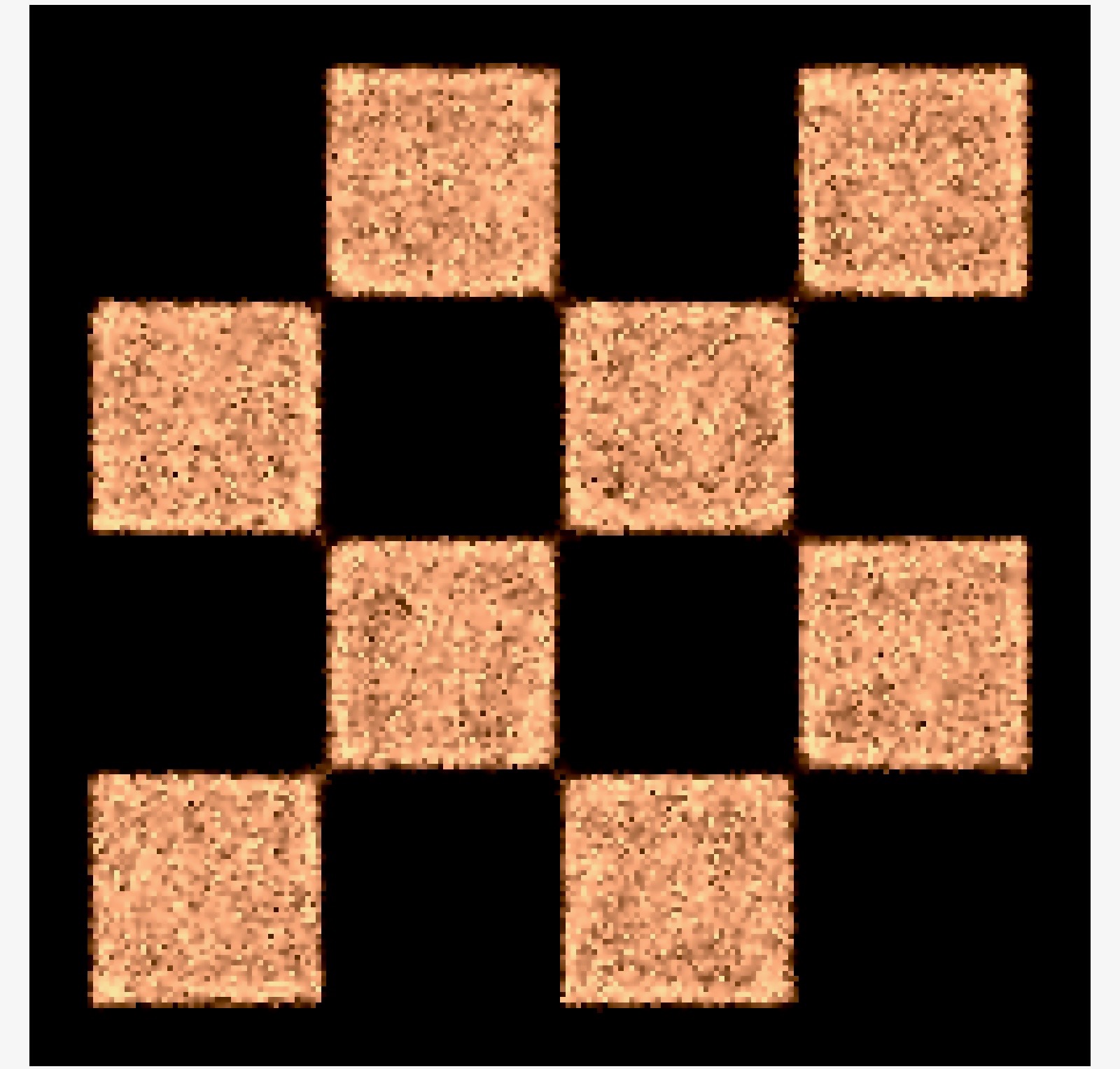}&
\includegraphics[width=0.29\linewidth]{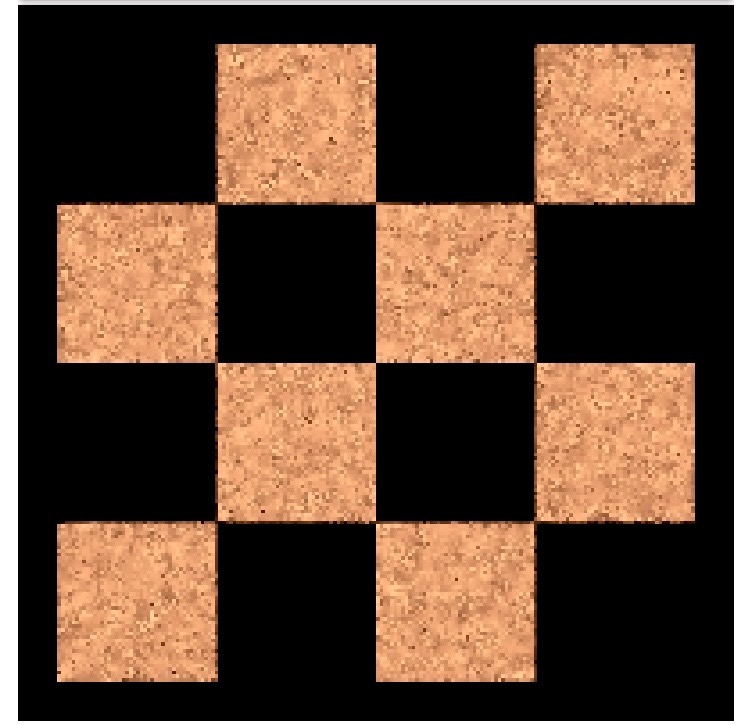}\\
(a) FM (OT) & (b) FDM (OT) & (c) Ground Truth
\end{tabular}
\caption{
Generated samples from FM and FDM using the optimal transport (OT) path trained on the checkerboard dataset.}
\label{fig:density-estimation}
\end{figure}
\subsection{Density Estimation on Synthetic and Image Data}
We train the models for density estimation on two datasets: a synthetic 2D checkerboard and the image dataset CIFAR-10~\cite{krizhevsky2009learning}.
\subsubsection{Synthetic density estimation}
\label{subsection:synthetic-density}
In this experiment, we train models using FM and FDM
for $2 \times 10^4$ iterations using a batch size of 512. For each iteration, we numerically sample the data for the training set and use the same sampling method for validation and testing sets.
We compare FM and FDM with the baselines for both OT and VP paths in the likelihood computed based on the test dataset. The results in Table~\ref{tab:density-estimation} and Fig.~\ref{fig:density-estimation} show that FDM consistently outperforms FM across different probability paths.


\subsubsection{Density Modeling on Image Datasets}\label{subsection:image-density}
In the experiment, we train models using both FM and FDM for image sampling on the CIFAR10 dataset~\cite{krizhevsky2009learning}. We follow the experimental settings in the flow matching baseline paper~\cite{lipman2023flow} and compare the performance in terms of the negative log-likelihood and FID scores of the sampled images as shown in Table~\ref{tab:cifar}.
\begin{table}[!ht]
\centering
\fontsize{8.0}{8.0}\selectfont
\begin{tabular}{lcc}
\toprule
\qquad Model\qquad  &\qquad NLL($\downarrow$)\qquad & \qquad FID($\downarrow$) \qquad \\
\midrule
 FM(OT) & 2.99 & 6.35 \\
 FDM(OT) & 2.85 & 5.62 \\
\bottomrule
\end{tabular}
\caption{
Negative log-likelihood and sample quality (FID scores) estimation on CIFAR-10.}
\label{tab:cifar}
\end{table}

\subsection{
Sequential Data Sampling--DNA Sequence}\label{subsec:DNA-generation}
In this experiment, we demonstrate 
that FDM enhances FM with the conditional OT path and Dirichlet path \cite{stark2024dirichlet} on the probability simplex for DNA sequence generation, both with and without guidance, following experiments 
conducted in \cite{stark2024dirichlet}. 
%
For this task, instead of directly parameterizing the vector field, the Dirichlet flow model constructs it by combining pre-designed Dirichlet probability path functions with a parameterized classifier $\hat p_t(\vx_1|\vx,\theta)$, where $\vx$ is sampled from the conditional probability at time $t$, given the data point $\vx_1$. Since $\vx_1$ represents discrete categorical data with a finite number of categories, it can be treated as a class label. Since this approach only requires parameterizing the classifier $\hat p_t(\vx_1|\vx,\theta)$, we only need to penalize the norm of the gradient with respect to the input of the classifier, which is equivalent to minimizing the divergence error; see Appendix \ref{sec:DM_Dirichlet} for more details.  Additionally, we conduct experiments where the classifier is parameterized with guidance, $p_t(\vx_1|\vx,\vy,\theta)$ with $\vy$ representing the guiding information. We use the same experiment setup in \cite{stark2024dirichlet} except the newly introduced hyperparameters $\lambda_1$ and $\lambda_2$; see Appendix \ref{sec:DM_Dirichlet} for the detailed settings.



\subsubsection{Simplex Dimension without Guidance} 
We first evaluate the performance of FM and FDM in a non-guided simple generation task. The data is sampled from a uniform Dirichlet distribution with a sequence length of $l=4$ and $K=40$ categories. We compare the TV distance and KL divergence between the generated distribution and the target distribution on the test dataset. The results in Table~\ref{tab:dirichlet_sd} and Table~\ref{tab:dirichlet_sd_kl} in Appendix \ref{appendix:additional-results:dirichlet_sd_kl} demonstrate that FDM outperforms FM in generating the simple sequential categorical data. 

\begin{table}[!ht]
\centering
\fontsize{8}{8}\selectfont
\begin{tabular}{lcc}
\toprule
\quad Method\quad   &\quad  TV Distance\quad & \quad Time (s/iter) \quad \\
\midrule
Linear FM  &  0.12$_{\pm{0.005}}$ & 0.10\\
Linear FDM & 0.10$_{\pm{0.004}}$ & 0.16\\
Dirichlet FM  &  0.08$_{\pm{0.005}}$ & 0.10\\
Dirichlet FDM & \textbf{0.07}$_{\pm{0.004}}$ & 0.16 \\
\bottomrule
\end{tabular}
\caption{
TV distances 
between the generated and target distributions. 
}
\label{tab:dirichlet_sd}\vspace{-0.1cm}
\end{table}


\subsubsection{Promoter DNA Sequence Design with Guidance} 
We further evaluate the ability of FM and FDM in training generative models for designing DNA promoter sequences guided by a desired promoter profile. We train the models guided by a profile by providing it as additional input to the vector field and evaluate generated sequences using mean-squared error (MSE) between their predicted and original regulatory activity, as determined by SEI \cite{chen2022sequence}. We include the discrete DM \cite{albergo2023stochastic} and the language model \cite{stark2024dirichlet} for comparison in Table~\ref{tab:dirichlet_promo}. For this task, we use a dataset of 100,000 promoter sequences with 1024 base pairs extracted from a database of human promoters \cite{hon2017atlas}. See Appendix~\ref{sec:DM_Dirichlet} for more details about the dataset. The results confirm that FDM improves FM in training guided models for categorical data generation. 


\begin{table}[!ht]
\fontsize{7.5}{7.5}\selectfont
\centering
\begin{tabular}{lc}
\toprule
Method & MSE ($\downarrow$) \\
\midrule
Bit Diffusion (One-hot Encoding)\cite{albergo2023stochastic} & 3.95E-2  \\
DDSM \cite{albergo2023stochastic} & 3.34E-2  \\
Large Language Model \cite{stark2024dirichlet} & 3.33E-2  \\
\midrule 
Linear FM \cite{stark2024dirichlet} & 2.82$_{\pm{0.02}}$E-2 \\
Linear FDM (ours) & 2.78$_{\pm{0.01}}$E-2 \\
Dirichlet FM \cite{stark2024dirichlet} & 2.68$_{\pm{0.01}}$E-2 \\
Dirichlet FDM (ours) & \textbf{2.59}$_{\pm{0.02}}$E-2 \\
\bottomrule
\end{tabular}\vspace{-0.1cm}
\caption{\footnotesize Evaluation of transcription profile guided promoter DNA sequence design of different models.}
\label{tab:dirichlet_promo}
\end{table}

\subsection{Spatiotemperal Data Generation}
In this section, we evaluate our model on spatiotemporal data sampling tasks, both with and without guidance. Specifically, we consider two scenarios: trajectory sampling for dynamical systems and video generation.

\subsubsection{Trajectory Sampling for Dynamical Systems}\label{subsec:dynamics}

Sampling trajectories for dynamical systems under event guidance is crucial for understanding and predicting the climate and beyond \cite{perkins2013measurement,mosavi2018flood,hochman2019new}.
\citet{finzi2023user} develop a DM for sampling these events. 

In this experiment, we compare FDM against FM and DM from \cite{finzi2023user} on the Lorenz and FitzHugh-Nagumo dynamical systems \cite{farazmand2019extreme}; the details of these systems are provided in Appendix~\ref{appendix:experiment-details:dynamics-forecasting}. 
We test sampling trajectories from these systems with and without event guidance.
A trajectory, either from a dataset or sampled, is a discrete time series of vectors concatenated into 
$\vx_1 = [\vx(\tau_m)]_{m = 1}^M \in \mathbb{R}^{Md}$, where $M$ is the number of time steps and $d$ is the dimension of the system.
Following \cite{finzi2023user}, an event $E$ is a set of trajectories characterized by some event constraint; for example, $E = \{\vx_1: C(\vx_1) > 0\}$, where the event constraint function $C: \mathbb{R}^{Md} \to \mathbb{R}$ is smooth.
The challenge of this experiment is to sample trajectories in $E$ when $C$ is only known 
{after} the models have been trained. The detailed sampling procedure using DM can be found in \cite{finzi2023user}.

The event-guided sampling procedure from \cite{finzi2023user} uses Tweedie's formula \cite{robbins1992empirical,efron2011tweedie}, which requires the score function $\nabla\log p_t(\vx)$.
Since FM and FDM are not trained to approximate \(\nabla \log p_t(\vx)\) directly, we derive an approximation formula using the learned vector field \(\vv_t(\vx, \theta)\).  
Applying Lemma 1 of \cite{lipman2023flow} to the probability flow ODE \cite{song2020score}, the evolution of \( p_t(\vx) \) satisfies:  
\begin{equation}\label{eq:approx-score}
    \vu_t(\vx) = -\vf(\vx, 1 - t) + \frac{1}{2}g^2(1 - t)\nabla\log p_{1 - t}(\vx),
\end{equation}  
where \(\vf\) is the drift term and \(g\) is the noise coefficient.  
Rearranging \eqref{eq:approx-score}, we express \(\nabla \log p_t(\vx)\) in terms of \(\vu_t(\vx)\), then approximate $\vu_t(\vx)$ by \(\vv_t(\vx, \theta)\). 
We use the events defined in \cite{finzi2023user} for our experiments. The event for the Lorenz system is when a trajectory stays on one arm of the chaotic attractor.
This is characterized by
$
C(\vx) = 0.6 - \lVert F[\vx - \overline{\vx}]\rVert_1 > 0
$, 
where $F$ is the Fourier transform over trajectory time $\tau$, $\lVert\,\cdot\,\rVert_1$ is the 1-norm summing over both over the frequency magnitudes and the three dimensions of $\vx(\tau)$, and $\overline{\vx}$ is the average of $\vx(\tau)$ over $\tau$.
For the FitzHugh-Nagumo system, the event is neuron spiking, which is characterized by $C(\vx) = \max_\tau [x_1(\tau) + x_2(\tau)] / 2 - 2.5 > 0$.

We compare the models' ability to generate trajectories according to $p(\vx_1)$ and $p(\vx_1|E)$ by computing a test set of trajectories using the Dormand-Prince ODE solver \cite{dormand1980family} and sampling trajectories using each model.
Table~\ref{tab:results:dynamics-forecasting:total-variation} presents the TV distance between the model and the distributions.
From the result, we observe that FDM achieves the lowest TV distance for every distribution. The TV distance of FDM is smaller than that of FM, which empirically demonstrates that the divergence mismatch has a significant effect on the error $\epsilon_t(\vx_t)$. 

Furthermore, this shows that the proposed loss $\gL_{\rm FDM}$ effectively reduces the mismatch.  
This mismatch reduction 
also enables FDM to attain the lowest negative log-likelihood (NLL) estimates. 
Table~\ref{tab:results:dynamics-forecasting:trajectory-likelihoods} shows the mean NLL 
over trajectories and trajectory dimension with respect to to $p(\vx_1)$, while Fig.~\ref{fig:results:dynamics-forecasting:event-histogram:conditional} compares the histograms of the event constraint value of each event trajectory.
Importantly, these improvements of FDM do not trade off with its accuracy in estimating $p(E)$. When $p(E)$ is estimated based on the proportion of sampled trajectories that fall within $E$, all the models are comparable. Table~\ref{tab:results:dynamics-forecasting:kl-divergence} reports the KL divergence between the histograms of the event constraint value $C(\vx_1)$ for event trajectories $\vx_1$ from the dataset computed by an ODE solver and those sampled with event guidance from the models. The results show that our FDM consistently outperforms both the FM and Diffusion models. 


\begin{figure}[!ht]
    \centering
    \begin{tabular}{cc}
        \multicolumn{2}{c}{
            \includegraphics[width=.95\linewidth]{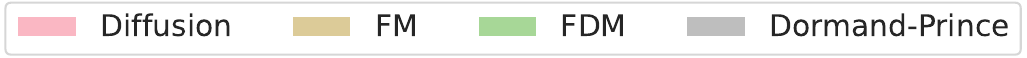}
        }
        \\
        \includegraphics[width=.48\linewidth]{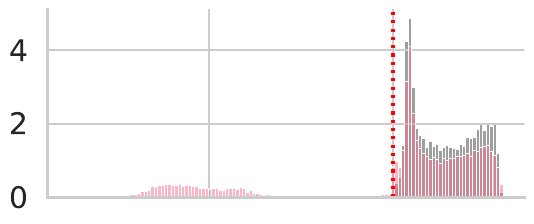}
        &
       \hspace{-0.5cm} \includegraphics[width=.48\linewidth]{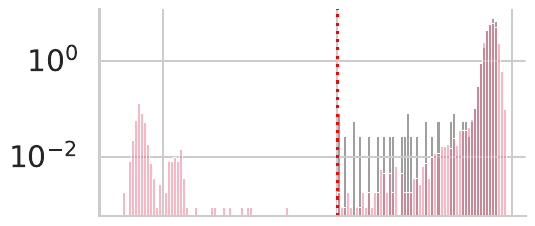}
        \\
        \includegraphics[width=.48\linewidth]{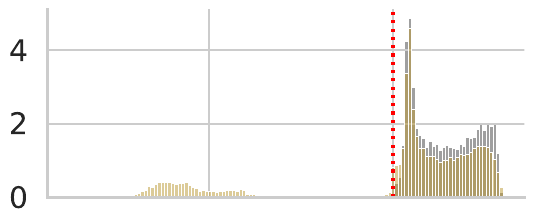}
        &
        \hspace{-0.5cm} \includegraphics[width=.48\linewidth]{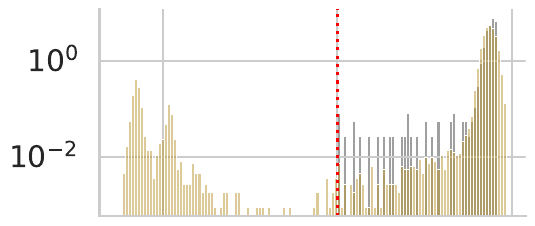}
        \\
        \includegraphics[width=.48\linewidth]{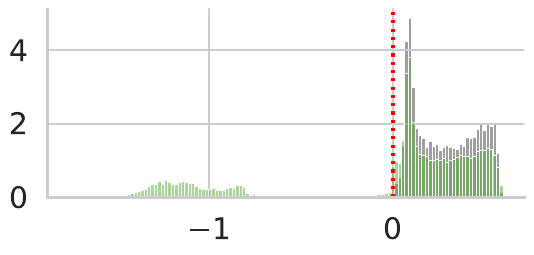}
        &
        \hspace{-0.5cm} \includegraphics[width=.48\linewidth]{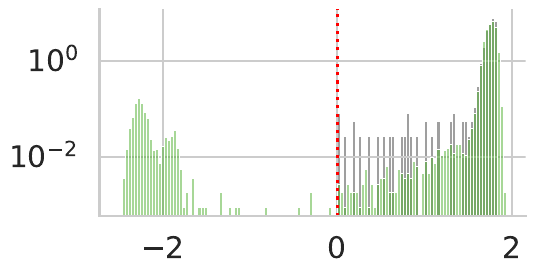}
        \\[-2pt]
        {\footnotesize (a) Lorenz}
        &
        {\footnotesize (b) FitzHugh-Nagumo}
    \end{tabular}
    \caption{
        Histograms of the constraint value $C(\vx_1)$ where $\vx_1$ is an event trajectory computed by the Dormand-Prince ODE solver or sampled from the model with event guidance.
        The unguided sampling histograms are shown in Appendix~\ref{appendix:additional-results:dynamics-forecasting}.
    }
    \label{fig:results:dynamics-forecasting:event-histogram:conditional}
\end{figure}

\begin{table}[!ht]
    \centering
    \fontsize{7}{7}\selectfont
    \begin{tabular}{lrrrr}
        \toprule
        & \multicolumn{2}{c}{Lorenz} & \multicolumn{2}{c}{FitzHugh-Nagumo} \\
        \cmidrule{2-5}
        Model & $p(\vx_1)$ ($\downarrow$) & $p(\vx_1|E)$ ($\downarrow$) & $p(\vx_1)$ ($\downarrow$) & $p(\vx_1|E)$ ($\downarrow$) \\
        \midrule
        Diffusion & 0.0314 & 0.1001 & 0.0277 & 0.1192 \\
        FM & 0.0348 & 0.0972 & 0.0314 & 0.2164 \\
        FDM(ours) & \textbf{0.0306} &\textbf{0.0914} & \textbf{0.0266} & \textbf{0.1168} \\
        \bottomrule
    \end{tabular}
    \vspace{-1em}
    \caption{
    TV distances of the models from the trajectory distribution $p(\vx_1)$ and from the distribution conditioned on an event $p(\vx_1|E)$. {Here, Diffusion results follow from \cite{finzi2023user}, while FM and FDM are based on our implementation,  which builds on the code provided by \citet{finzi2023user}. 
    }
    }
    \label{tab:results:dynamics-forecasting:total-variation}
\end{table}

\begin{table}[!ht]
    \centering
    \fontsize{7}{7}\selectfont
    \begin{tabular}{lrrrr}
        \toprule
        & \multicolumn{2}{c}{Lorenz} & \multicolumn{2}{c}{FitzHugh-Nagumo} \\
        \cmidrule{2-5}
         Model & $\mathrm{NLL}(\vx_1)$ ($\downarrow$) & $p(E)$ & $\mathrm{NLL}(\vx_1)$ ($\downarrow$) & $p(E)$ \\
         \midrule
         Dormand-Prince & -- & 0.197 & -- & 0.035 \\
         Diffusion & -7.052 & 0.200 & -7.365 & 0.032 \\
         FM & -13.190 & 0.199 & -13.942 & 0.034 \\
         FDM & \textbf{-14.361} & 0.200 & \textbf{-14.408} & 0.033 \\
         \bottomrule
    \end{tabular}
    \caption{
        NLLs averaged over trajectories and trajectory dimension 
        with respect to the trajectory distribution $p(\vx_1)$, and the likelihood of the user-defined event estimated by the proportion of trajectories contained in event $E$ sampled from the model without guidance. Here, the Diffusion follows from \cite{finzi2023user}. FM and FDM are based on our own implementation.
    }
    \label{tab:results:dynamics-forecasting:trajectory-likelihoods}
\end{table}

\begin{table}[!ht]
    \centering
    \fontsize{7}{7}\selectfont
    \begin{tabular}{lrrrr}
        \toprule
        & \multicolumn{2}{c}{Lorenz} & \multicolumn{2}{c}{FitzHugh-Nagumo} \\
        \cmidrule{2-5}
        Model & $p(\vx_1)$ & $p(\vx_1|E)$ & $p(\vx_1)$ & $p(\vx_1|E)$ \\
        \midrule
        Diffusion & 0.0056 & 0.2774 & 
        {0.0260} & 0.3011 \\
        FM & 0.0081 & 
        \textbf{0.2560} & 0.0280 & 0.3468 \\
        FDM(ours) & 
        \textbf{0.0049} & 0.3045 & \textbf{0.0280} & 
        \textbf{0.2084} \\
        \bottomrule
    \end{tabular}
    \vspace{-1em}
    \caption{
        KL divergence between the histograms of the event constraint value $C(\vx_1)$ for event trajectories $\vx_1$ in the dataset of trajectories computed by an ODE solver and event trajectories sampled with event guidance from the models.
    }
    \label{tab:results:dynamics-forecasting:kl-divergence}
\end{table}

\subsubsection{Generative Modeling for Videos}
\label{subsec:video-generation}
We aim to show how 
FDM pushes the boundary of FM performance for sequential data generation in a latent space. 
We train a latent FM \cite{davtyan2023efficient} and a latent FDM for video prediction. We utilize a pre-trained VQGAN \cite{esser2021taming} to encode (resp. decode) each frame of the video to (resp. from) the latent space. We train the models using the latent state at $t-1$ and $t-\tau$, where $\tau$ is randomly selected from $\{2,\ldots,t\}$, by providing them as additional input guidance to the vector field at $t>C$, where $C$ is a positive integer. At inference time, we use the frames at time $t=0$ to $t=C$ of a video as the guidance and then utilize flow matching to predict the frames after $t=C$.

We consider the human motion dataset – KTH \cite{schuldt2004recognizing} and BAIR Robot Pushing dataset \cite{ebert2017self}. We follow the experimental setup of \cite{davtyan2023efficient}; see Appendix~\ref{appendix:KTH} for details. To evaluate the generated samples, we compute the Fr\'echet video distance (FVD) \cite{unterthiner2018towards} and peak signal-to-noise ratio (PSNR) \cite{huynh2008scope}.

\textbf{KTH Dataset:} For KTH, we use the first 10 frames as guidance and predict the next 30 frames. The results in Table~\ref{tab:river} indicate that FDM enhances latent FM for temporal data generation. Furthermore, Fig.~\ref{fig:river} presents illustrative cases showing that our FDM consistently maintains high visual quality throughout the video, whereas the FM model exhibits noticeable degradation in later frames, including loss of fine motion details, missing body parts, and motion failure.

\textbf{BAIR Dataset:} For BAIR, we predict $15$ future frames based on a single initial frame, with each frame having a resolution of $64 \times 64$ pixels. Because of the highly stochastic motion in the BAIR dataset, following \cite{davtyan2023efficient}, we generate $100$ samples per test video -- each conditioned on the same initial frame -- and compute metrics over $100 \times 256$ generated samples against $256$ randomly selected test videos. To highlight the effectiveness of FDM, 
we omit the frame refinement step used in \cite{davtyan2023efficient}. 
As mentioned in \cite{davtyan2023efficient}, many models for the BAIR task are computationally expensive, whereas latent FM achieves a favorable trade-off between FVD and computational cost. 
Our approach further improves latent FM with acceptable additional computational overhead, as shown in Table~\ref{tab:river-bair}.

We notice that the experiments in \citet{pmlr-v235-chen24n} achieve very impressive results for video generation, and it is an interesting future direction to integrate our approach into their framework.

\begin{figure}[!ht]
\centering
\begin{tabular}{c}
\begin{tabular}{cccc}
\hspace{-0.45cm}\includegraphics[width=0.111\textwidth]{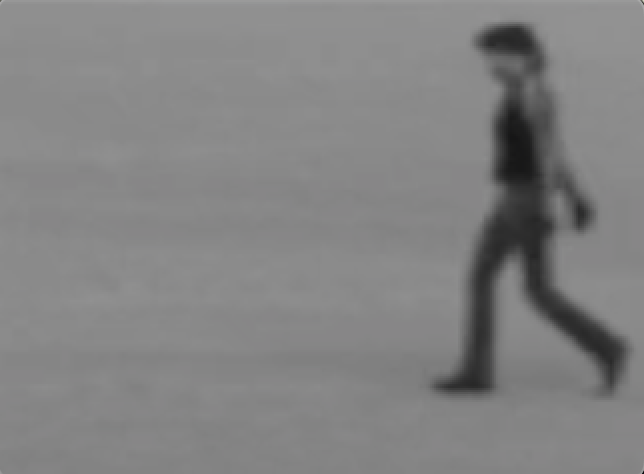}    &
\hspace{-0.35cm}\includegraphics[width=0.11\textwidth]{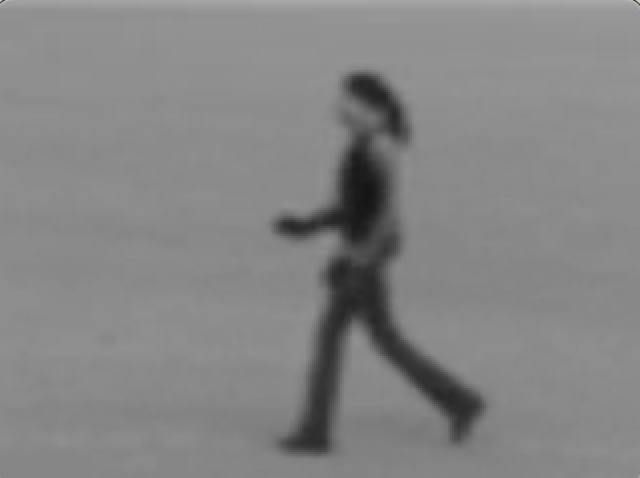}  
&
\hspace{-0.35cm}\includegraphics[width=0.111\textwidth]{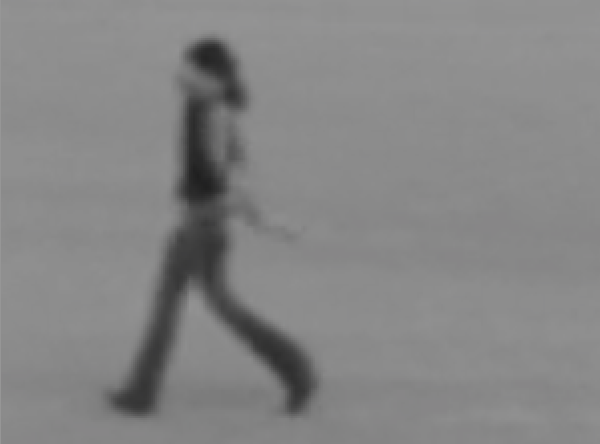}    &
\hspace{-0.35cm}\includegraphics[width=0.1135\textwidth]{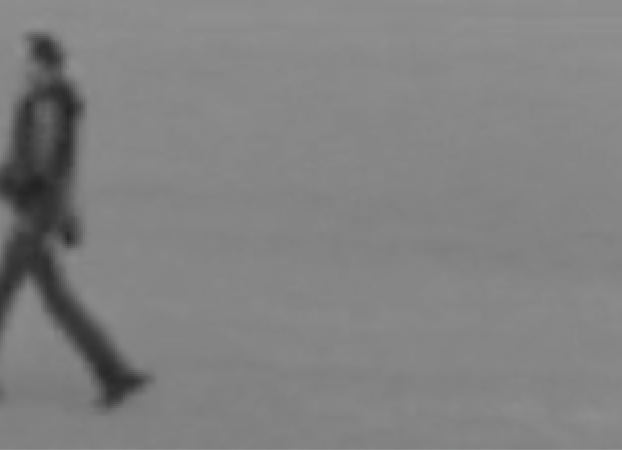} \\
\end{tabular}\\
Walking; a-FM \\
\begin{tabular}{cccc}
\hspace{-0.45cm}\includegraphics[width=0.112\textwidth]{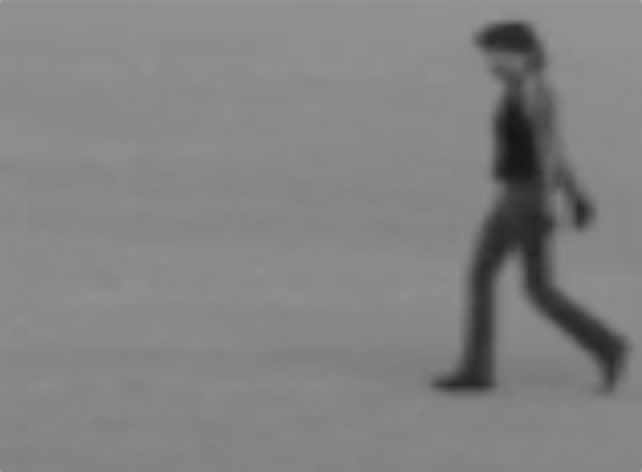}    &
\hspace{-0.35cm}\includegraphics[width=0.111\textwidth]{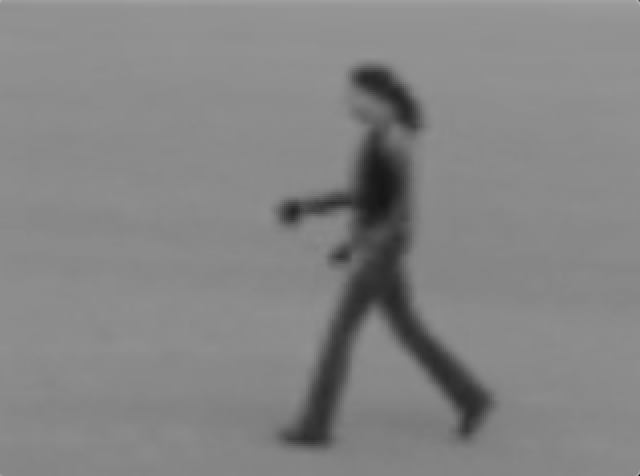}  
&
\hspace{-0.35cm}\includegraphics[width=0.112\textwidth]{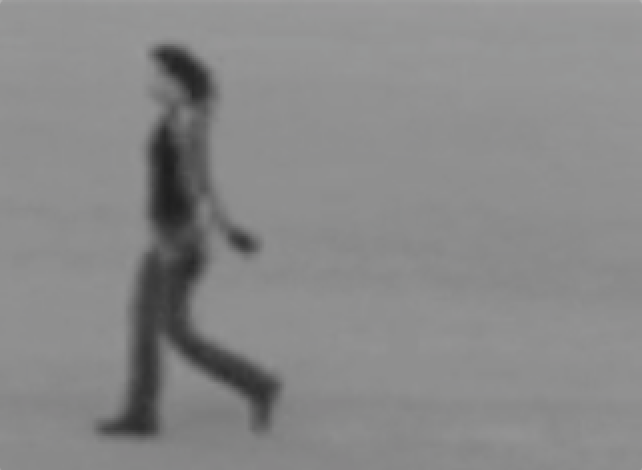}    &\hspace{-0.35cm}\includegraphics[width=0.112\textwidth]{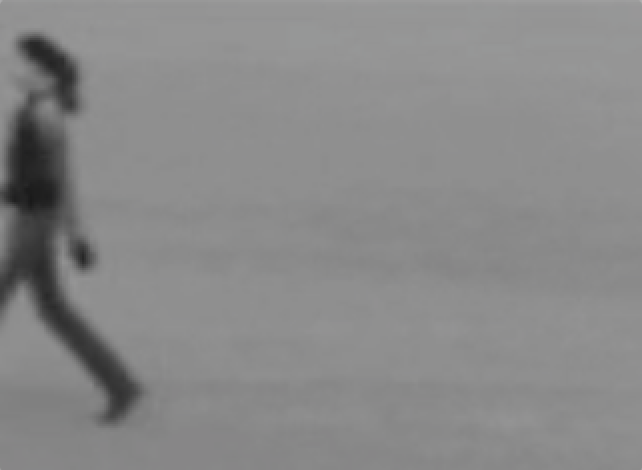} \\
\end{tabular}\\
 \_\_\_\_\_\_\_\_\_\_\_\_\_\_\_\_\_\_ Walking; a-FDM \_\_\_\_\_\_\_\_\_\_\_\_\_\_\_\_\_\_ \\
\begin{tabular}{cccc}
\hspace{-0.45cm}\includegraphics[width=0.112\textwidth]{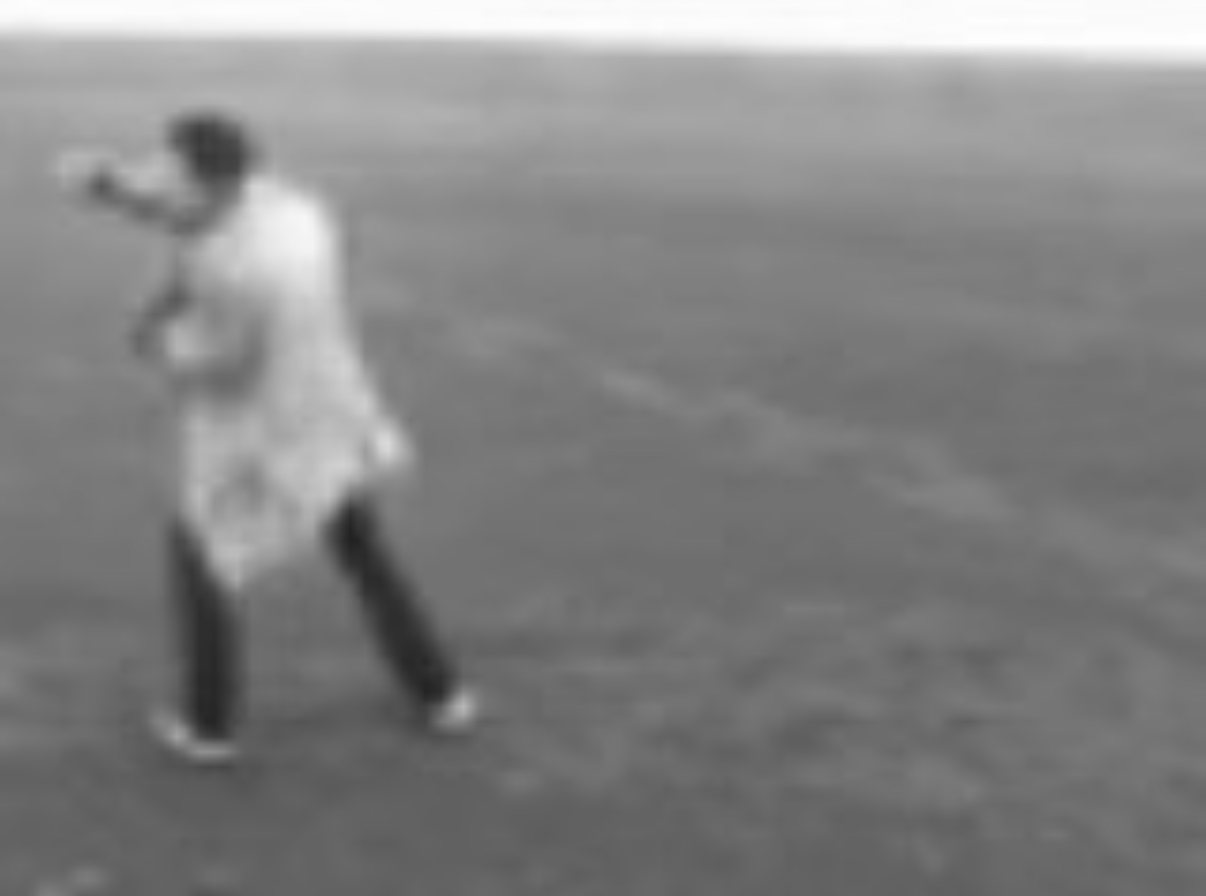}    &
\hspace{-0.35cm}\includegraphics[width=0.112\textwidth]{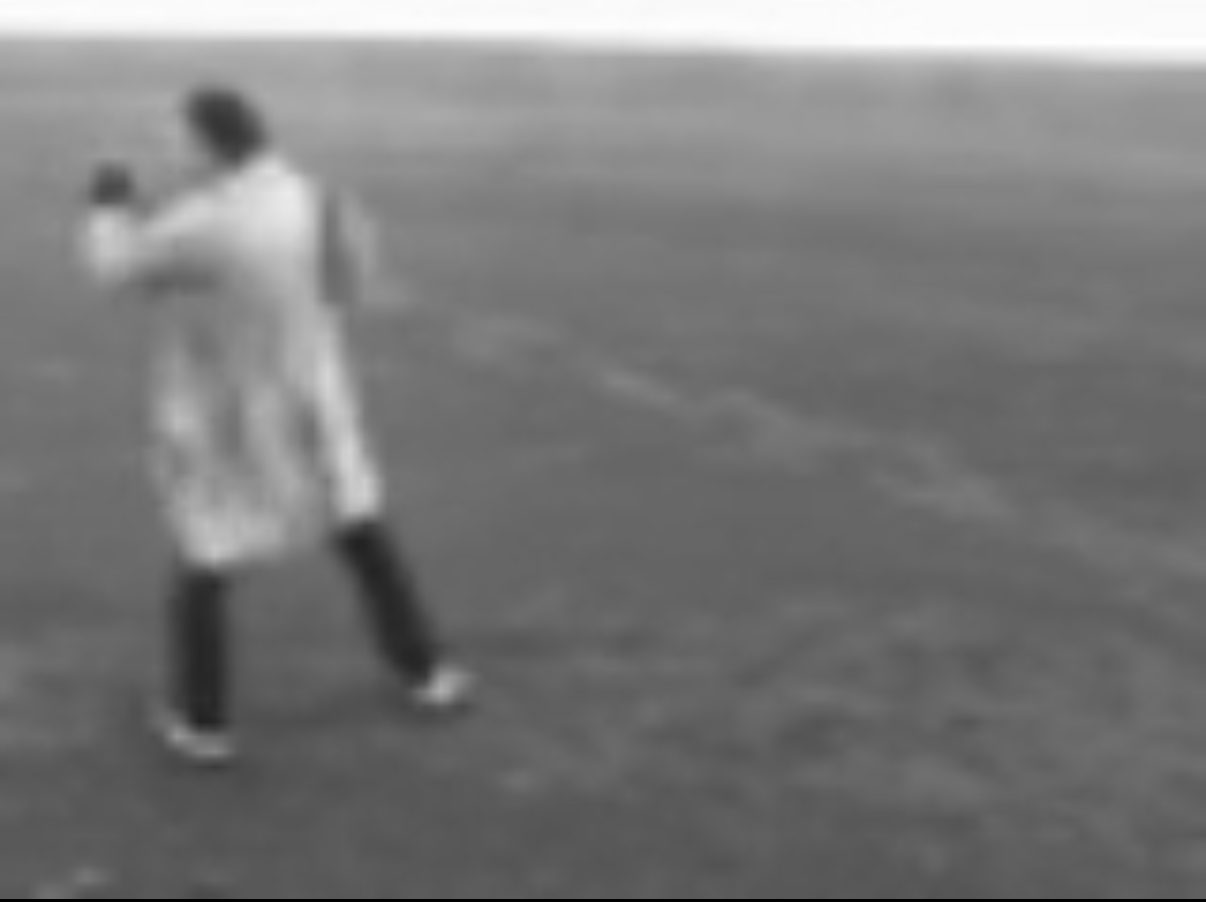}  
&
\hspace{-0.35cm}\includegraphics[width=0.112\textwidth]{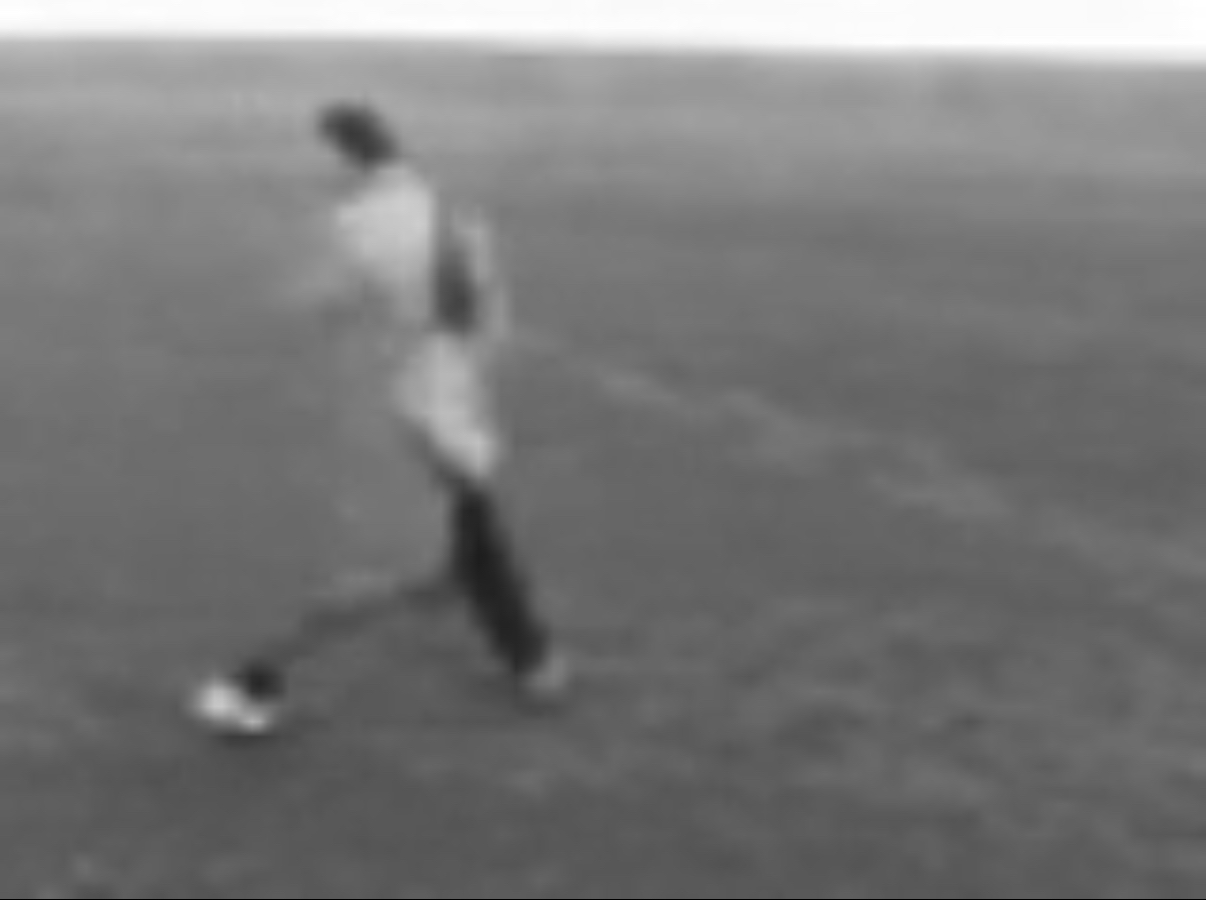}    &
\hspace{-0.35cm}\includegraphics[width=0.112\textwidth]{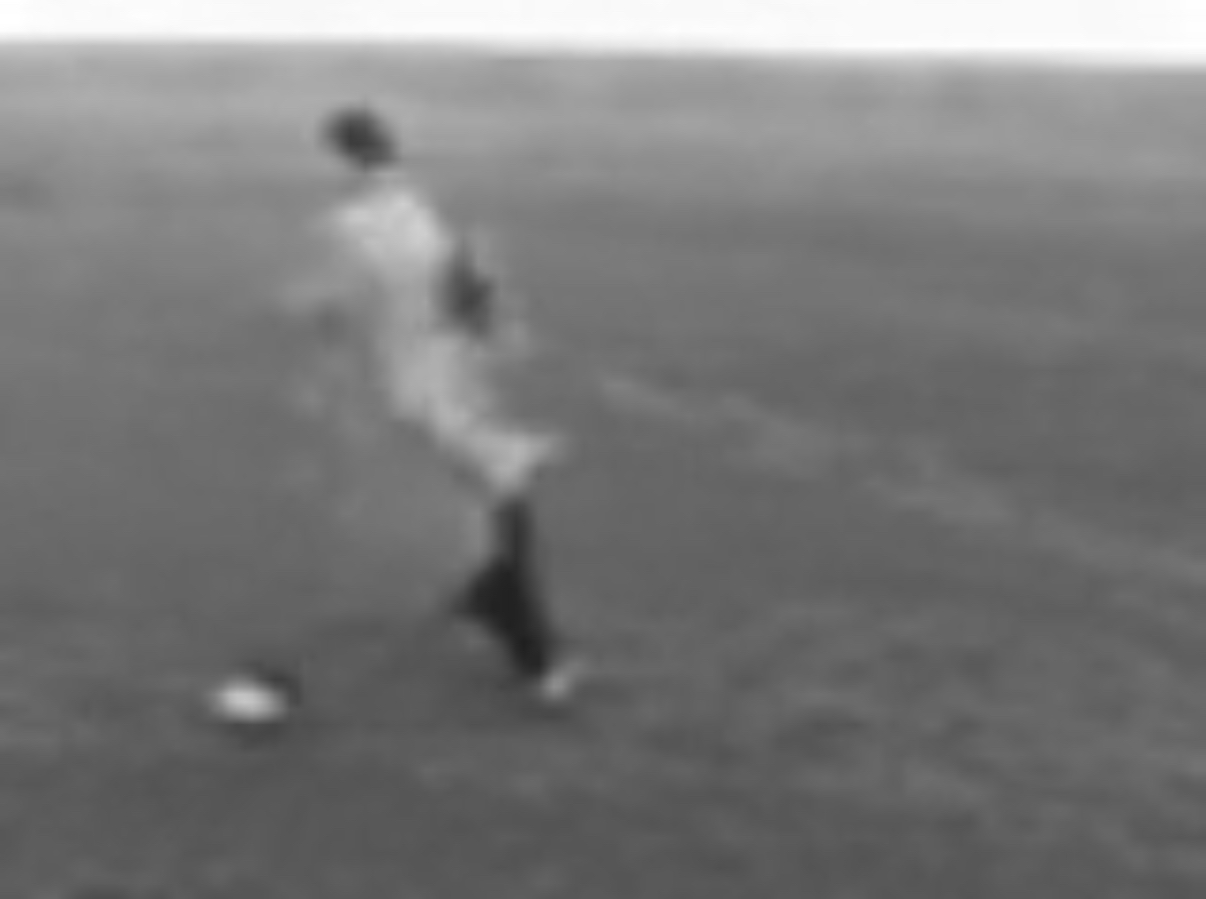} \\
\end{tabular}\\
Boxing; b-FM \\
\begin{tabular}{cccc}
\hspace{-0.45cm}\includegraphics[width=0.112\textwidth]{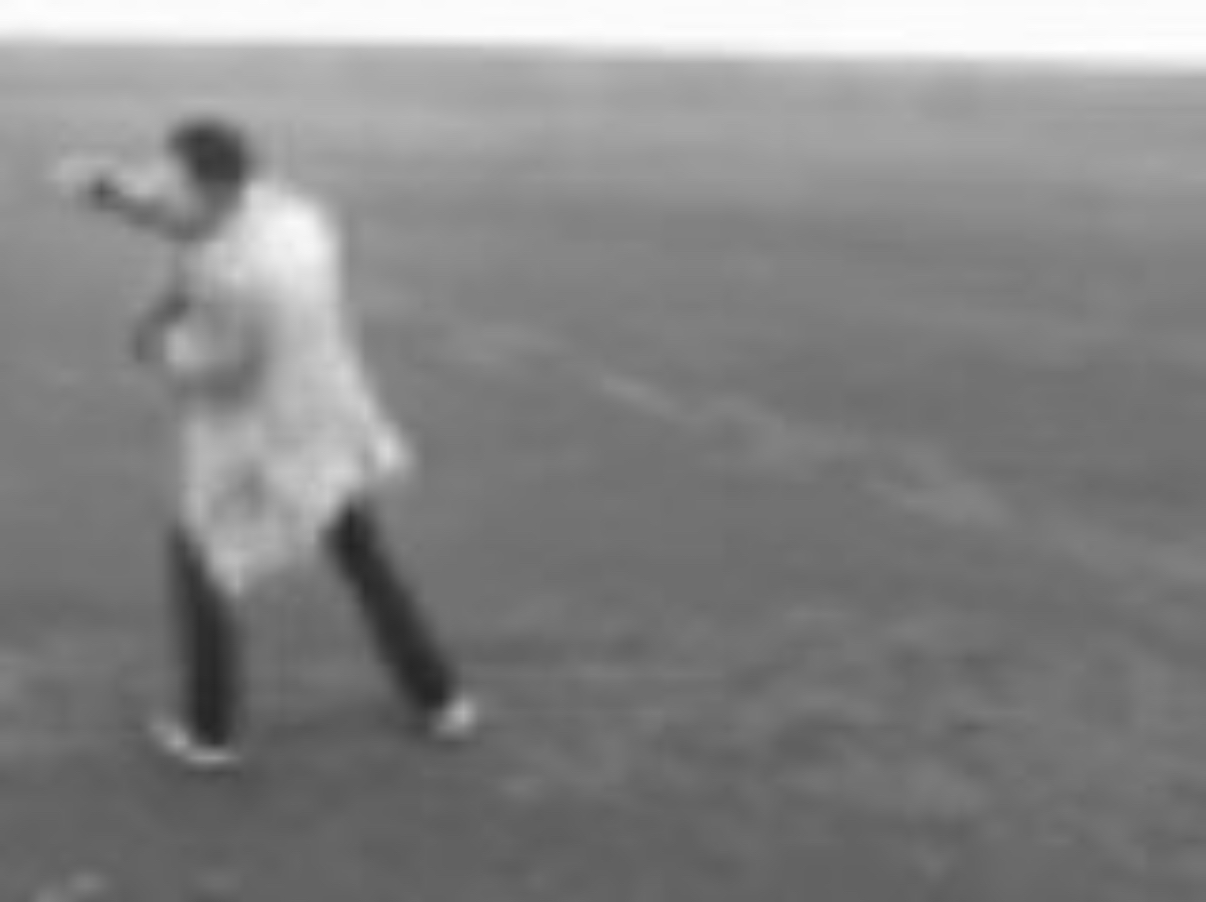}    &
\hspace{-0.35cm}\includegraphics[width=0.112\textwidth]{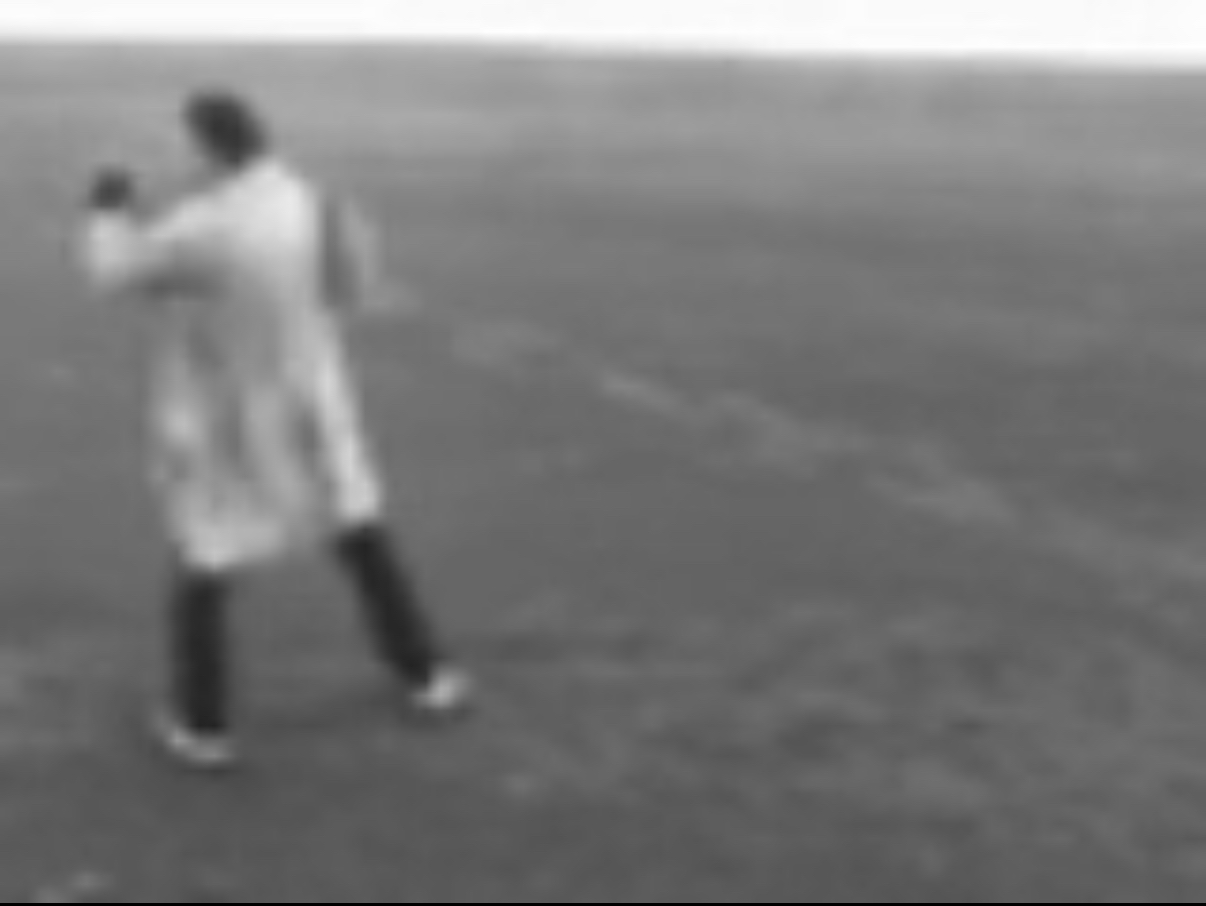}  
&
\hspace{-0.35cm}\includegraphics[width=0.112\textwidth]{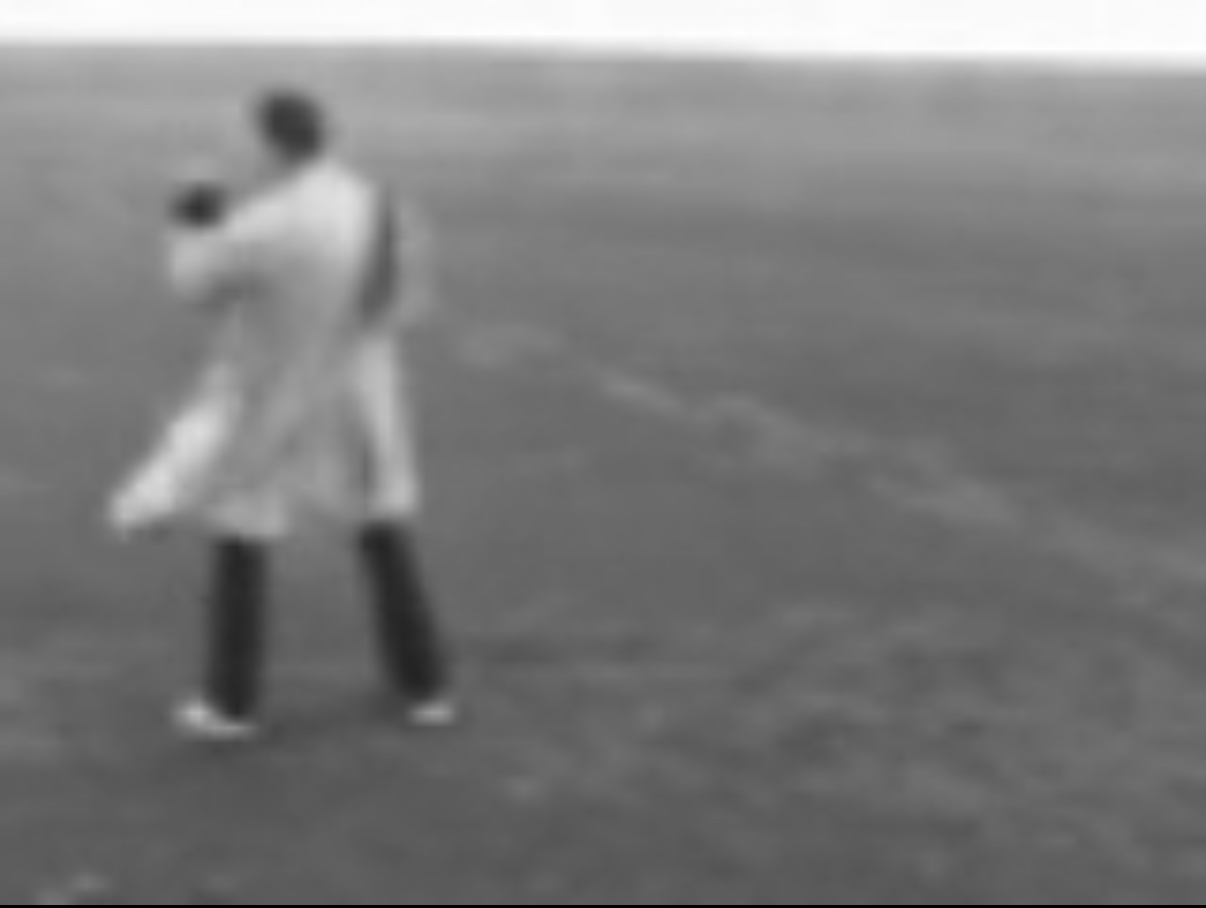}    &
\hspace{-0.35cm}\includegraphics[width=0.112\textwidth]{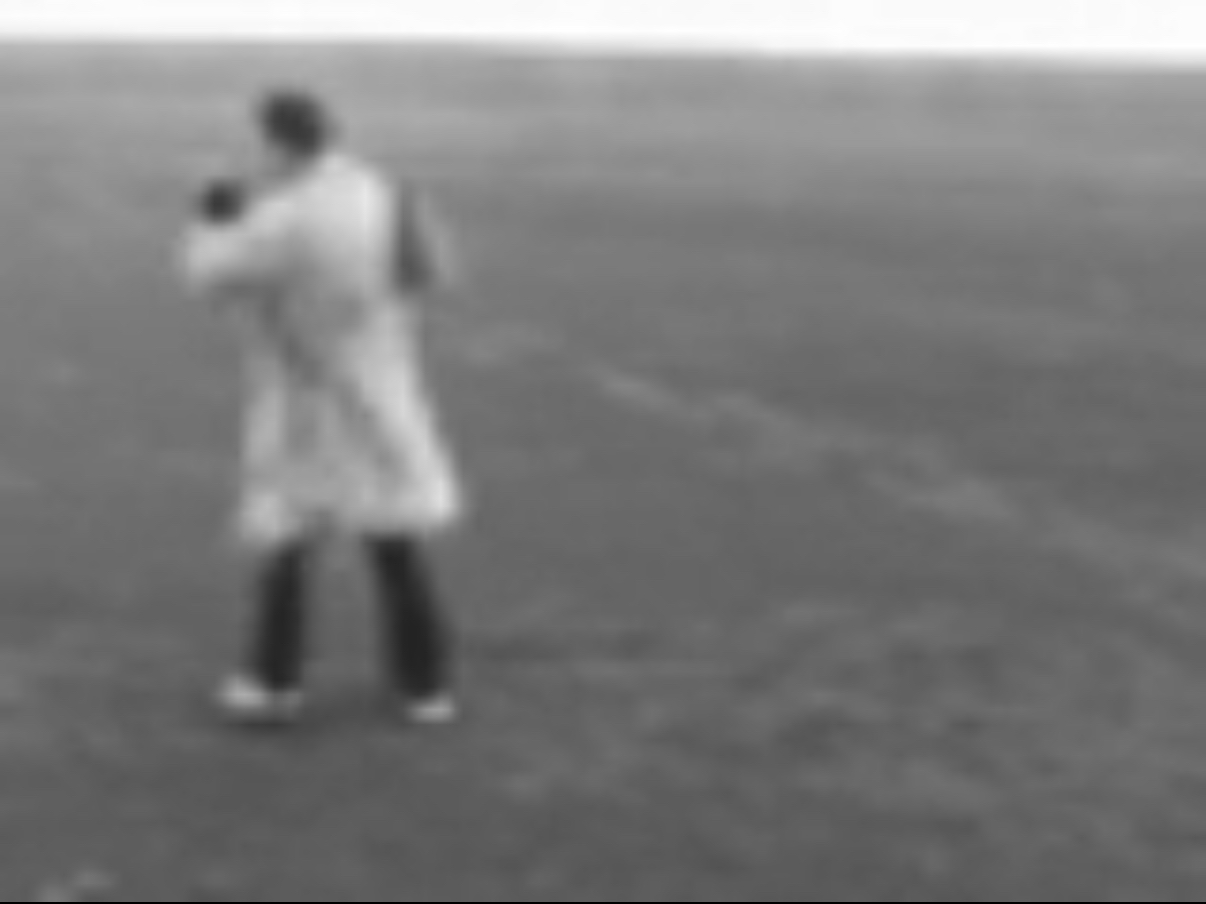} \\
\end{tabular}\\
 \_\_\_\_\_\_\_\_\_\_\_\_\_\_\_\_\_\_ Boxing; b-FDM  \_\_\_\_\_\_\_\_\_\_\_\_\_\_\_\_\_\_ \\
\begin{tabular}{cccc}
\hspace{-0.45cm}\includegraphics[width=0.112\textwidth]{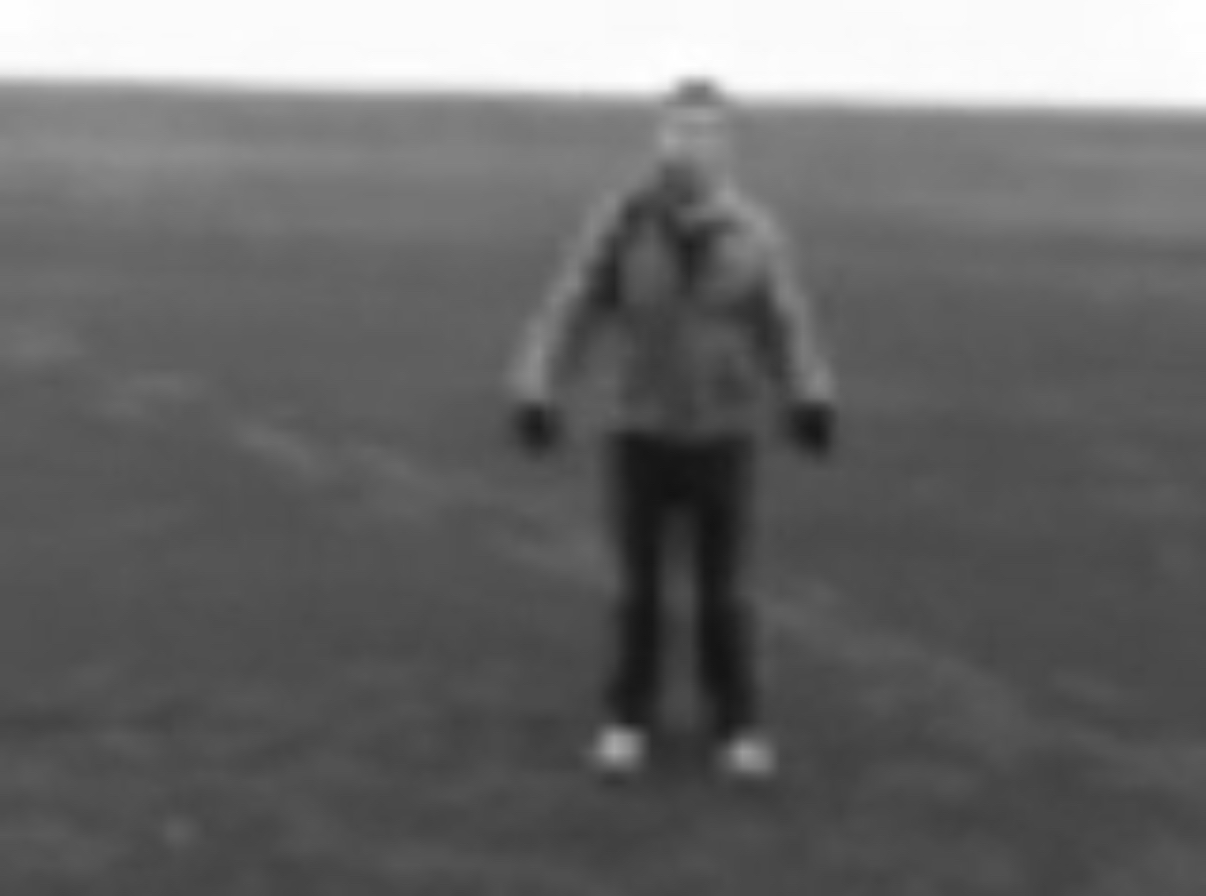}    &
\hspace{-0.35cm}\includegraphics[width=0.112\textwidth]{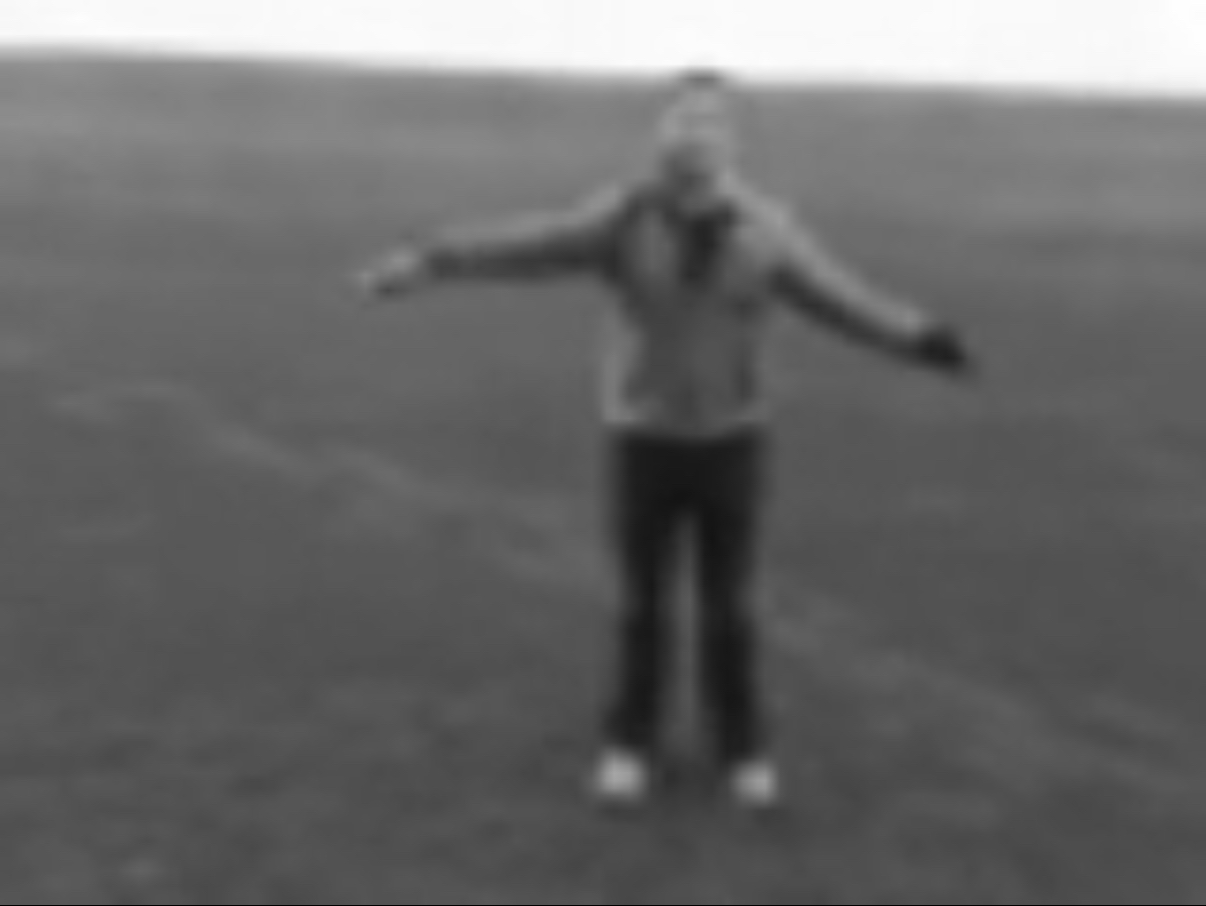}  
&
\hspace{-0.35cm}\includegraphics[width=0.112\textwidth]{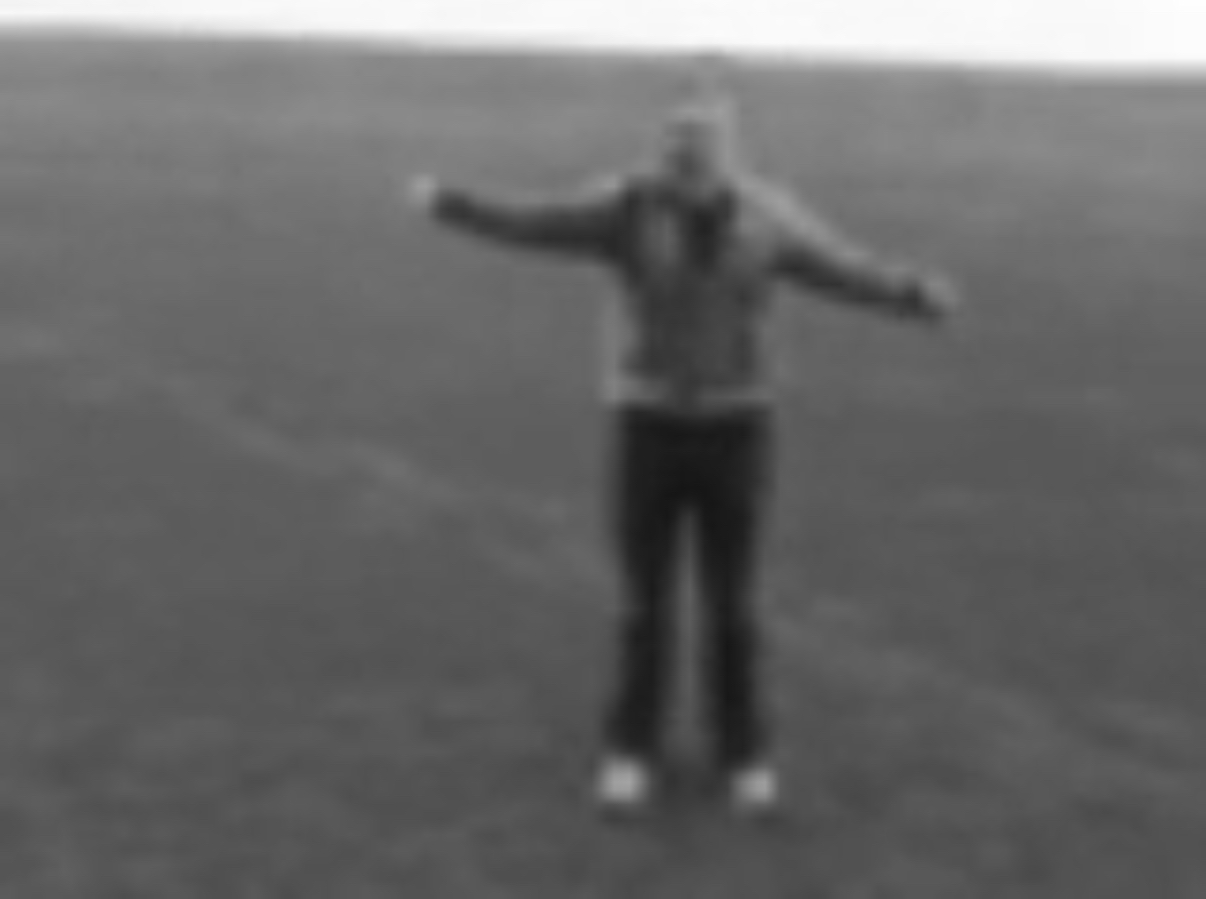}    &
\hspace{-0.35cm}\includegraphics[width=0.112\textwidth]{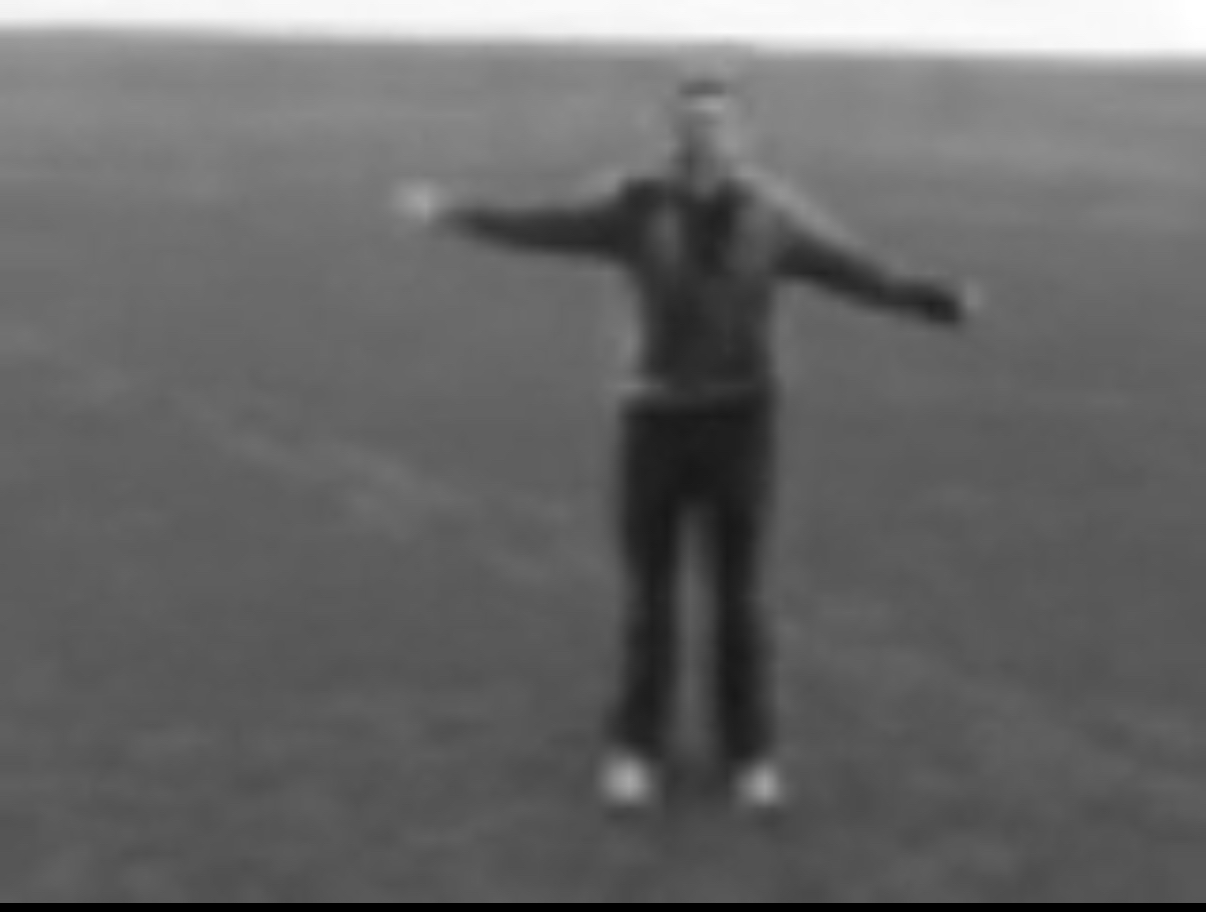} \\
\end{tabular}\\
Hand Waving; c-FM \\
\begin{tabular}{cccc}
\hspace{-0.45cm}\includegraphics[width=0.112\textwidth]{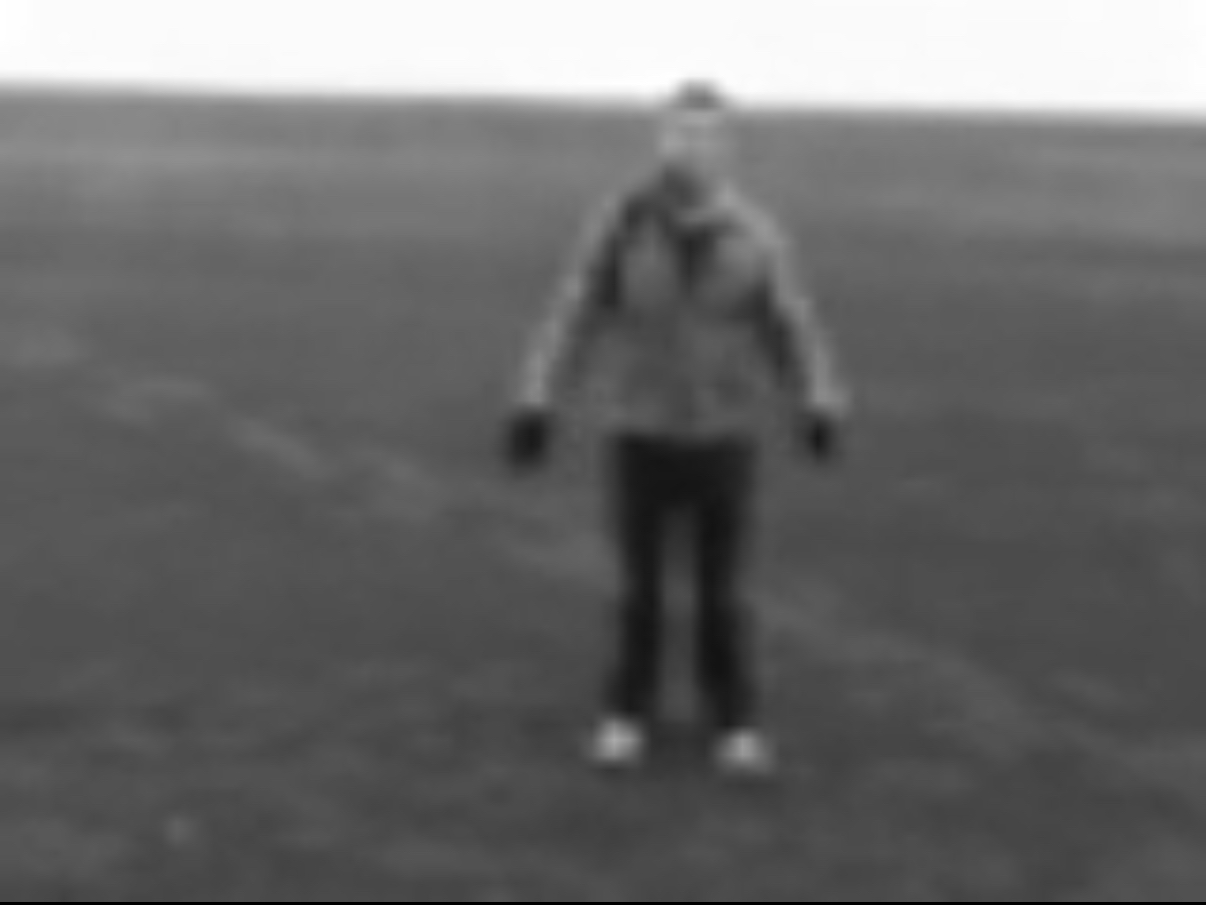}    &
\hspace{-0.35cm}\includegraphics[width=0.112\textwidth]{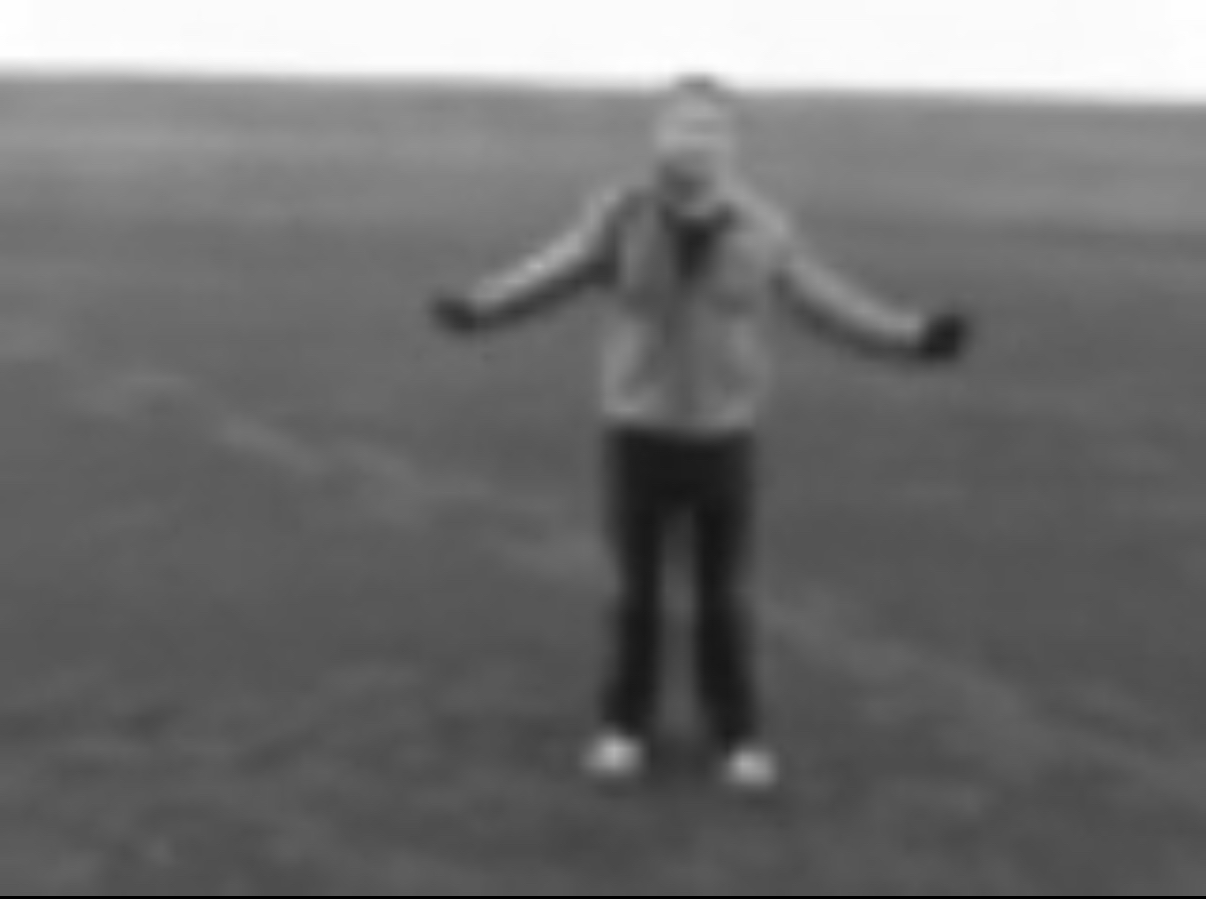}  
&
\hspace{-0.35cm}\includegraphics[width=0.112\textwidth]{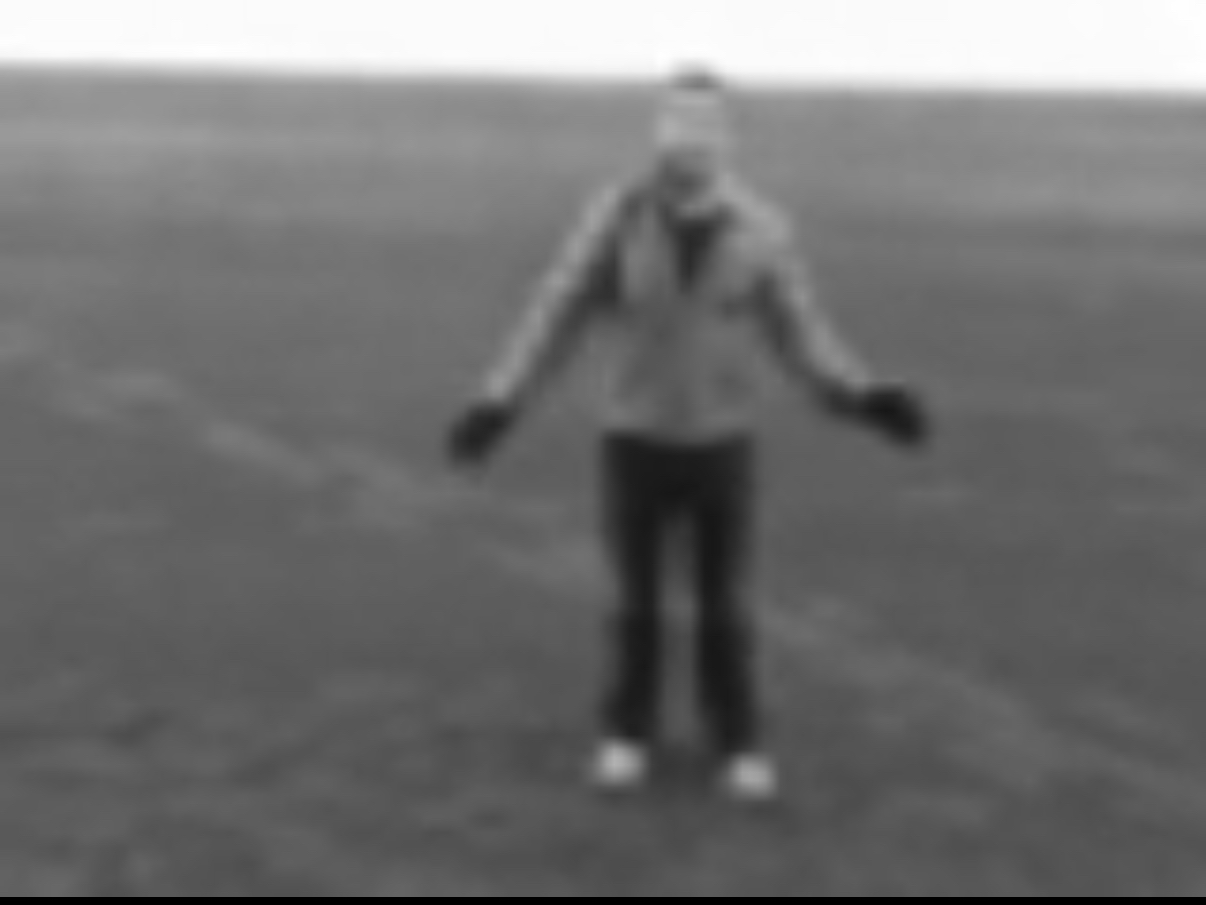}    &
\hspace{-0.35cm}\includegraphics[width=0.112\textwidth]{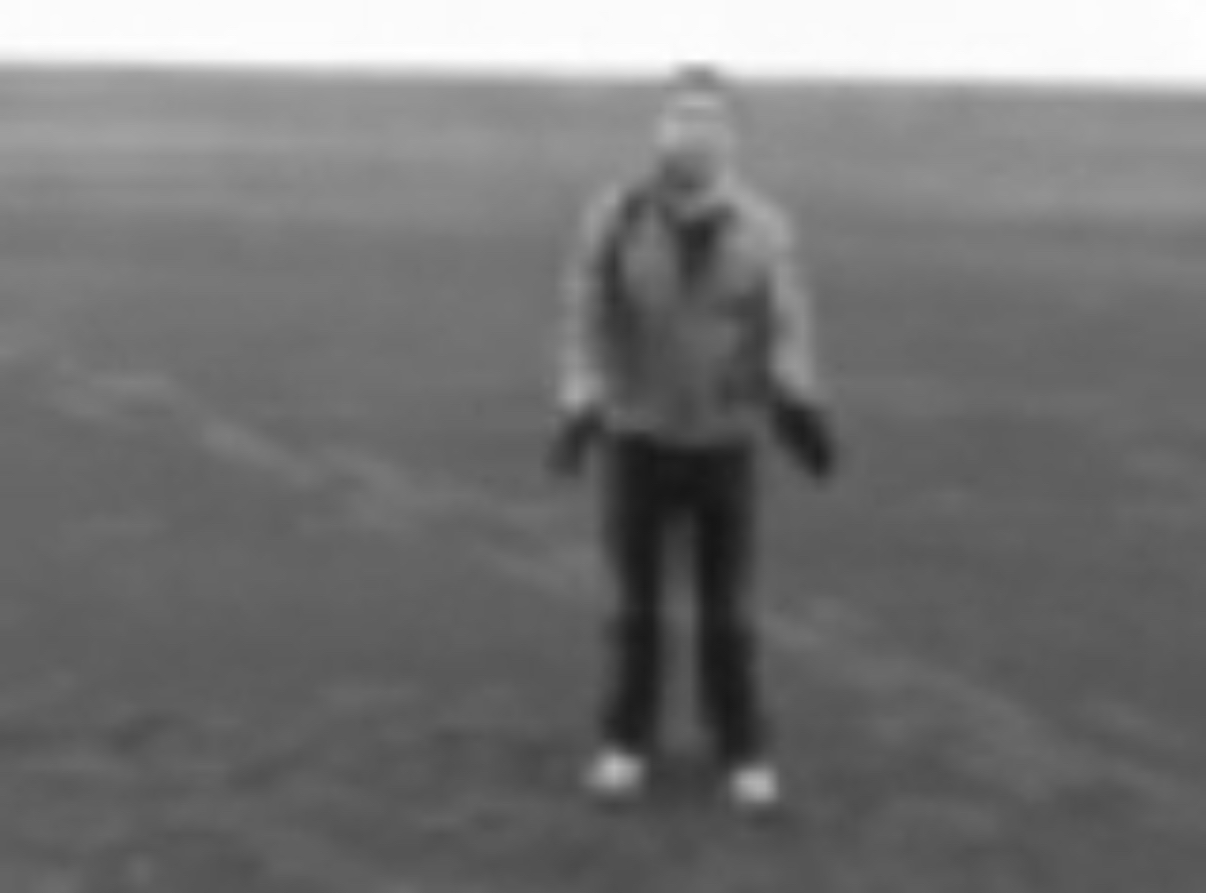} \\
\end{tabular}\\
 \_\_\_\_\_\_\_\_\_\_\_\_\_\_\_\_ Hand Waving; c-FDM  \_\_\_\_\_\_\_\_\_\_\_\_\_\_\_\_ \\
\end{tabular}
\caption{
Samples on KTH human motion dataset -- at frame 0, 13, 27, 40 from left to right -- generated by latent FM (a-FM, b-FM, c-FM) and latent FDM (a-FDM, b-FDM, c-FDM). 
}
\label{fig:river}
\end{figure}

\begin{table}[!ht]
\centering
\fontsize{7}{7}\selectfont
\begin{tabular}{lccc}
\toprule
Method  & FVD($\downarrow$) & PSNR($\uparrow$) & Time(s/iter) \\
\midrule 
SRVP \cite{franceschi2020stochastic}  & 222 & 29.7 & --\\
SLAMP \cite{akan2021slamp} & 228 & 29.4 & --\\
\midrule 
Latent FM \cite{davtyan2023efficient} & 180 & 30.4 & 0.18\\
Latent FDM (ours) & \textbf{155.5}$_{\pm 5}$ & \textbf{31.2} & 0.27 \\
\bottomrule
\end{tabular}
\caption{
KTH dataset evaluation. The evaluation protocol is to
predict the next 30 frames given the first 10 frames.}
\label{tab:river}
\end{table}

\vspace{-0.2cm}

\begin{table}[!ht]
\centering
\fontsize{6}{7}\selectfont
\begin{tabular}{lccc}
\toprule
Method  & FVD($\downarrow$) & MEM(GB) & Time(hours) \\
\midrule 
TriVD-GAN-FP \cite{luc2020transformation}  & 103 & 1024 & 280\\
Video Transformer \cite{weissenborn2019scaling} & 94 & 512 & 336\\
LVT \cite{rakhimov2020latent} & 126 & 128 & 48\\
RaMViD (Diffusion) \cite{hoppe2022diffusion}  & 84 & 320 & 72\\
\midrule 
Latent FM \cite{davtyan2023efficient} & 146 & 24.2 & 25\\
Latent FDM (ours) & {\bf 123}$_{\pm 4.5}$ & 35 & 36 \\
\bottomrule
\end{tabular}
\caption{
BAIR dataset evaluation. We adopt the standard evaluation setup, where the model predicts 15 future frames conditioned on a single initial frame. MEM stands for peak memory footprint. 
}
\label{tab:river-bair}
\end{table}

\section{Concluding Remarks}\label{sec:conclusion}
In this paper, we have developed a new upper bound for the gap between learned and ground-truth probability paths using FM. Our new error bound shows that FM can be improved by ensuring the divergences of the vector fields are in proximity. To achieve this, we derive a new conditional divergence loss with computational efficiency. Our new training approach -- flow and divergence matching -- significantly improves FM on various challenging tasks. There are several avenues for future work. A particularly intriguing direction is to develop a computationally efficient method for controlling the KL divergence, for example by integrating deep equilibrium models \cite{bai2019deep} into our framework -- similar to how prior works have incorporated them into diffusion (score-based) models \cite{huang2024efficient, bai2024fixed}. This remains an open problem and an important avenue for future research. Moreover, exploring our approach in the Schr{\"o}dinger bridge setting \cite{tong2024simulation} is also an interesting problem.

\clearpage
\section*{Acknowledgement}
This material is based on research sponsored by NSF grants DMS-2152762, DMS-2208361, DMS-2219956, and DMS-2436344, and DOE grants DE-SC0023490, DE-SC0025589, and DE-SC0025801. HZ acknowledges the support from the U.S. Department of
Energy, Office of Science, Office of Advanced Scientific Computing Research research grant DOE-FOA-2493 "Data-intensive scientific machine learning", under contract DE-AC02-06CH11357 at Argonne National Laboratory.

\section*{Impact Statement}
This work presents a new theoretical bound on the gap between the exact and learned probability paths using flow matching. The new theoretical bound informs the design of a new efficient training objective to improve flow matching. Our work directly contributes to advancing flow-based generative modeling. Flow-based models have achieved remarkable results in climate modeling and molecular modeling. By developing new theoretical understandings and fundamental algorithms with performance guarantees, we expect our work will advance climate modeling and molecular sciences using generative models. Our work contributes to basic research, and we do not see potential ethical concerns or negative societal impact beyond the current AI.

\bibliography{references}
\bibliographystyle{icml2025}

\newpage
\appendix
\onecolumn


\begin{center}
{\Large \bf 
Appendix for}
\end{center}
\vspace{-0.4cm}
\begin{center}
\large \bf \textit{Improving Flow Matching by Aligning Flow Divergence}
\end{center}


\section{Missing Proofs}\label{appendix:proof}

\thmerror*
\begin{proof}[Proof of Proposition~\ref{prop:error}]
    For simplicity, we denote $\frac{\partial}{\partial t}$ by $\partial_t$. 
From the continuity equations~\ref{eq:pde-1} and~\ref{eq:pde-2}, we have:
\begin{equation}
    \begin{aligned}
        \partial_t \epsilon_t &=  \partial_t p_t - \partial_t \hat p_t \\
        &= \Big[ -p_t \big(\nabla\cdot \vu_t\big) - \vu_t\cdot\nabla p_t \Big]- \Big[ - \hat p_t \big(\nabla\cdot \vv_t\big) - \vv_t\cdot\nabla \hat  p_t\Big]
        \\
        &= -p_t \big(\nabla\cdot \vu_t\big) + \hat p_t \big(\nabla\cdot \vv_t\big) - \vu_t\cdot\nabla p_t  + \vv_t\cdot\nabla \hat  p_t
        \\
        &= - p_t \big(\nabla\cdot (\vu_t-\vv_t)\big)
        - (\nabla\cdot \vv_t) \epsilon_t  - (\vu_t-\vv_t)\cdot \nabla p_t - \vv_t\cdot \nabla \epsilon_t
    \end{aligned}
\end{equation}
Rewriting it, we find:
\begin{equation}
    \begin{aligned}
        \partial_t \epsilon_t + \nabla\cdot (\epsilon_t \vv_t)
        &= - p_t \big(\nabla\cdot (\vu_t-\vv_t)\big) - p_t (\vu_t-\vv_t)\cdot \nabla \log p_t
    \end{aligned}
\end{equation} 
Let us define $L_t \coloneqq - p_t \big(\nabla\cdot (\vu_t-\vv_t)\big) - p_t (\vu_t-\vv_t)\cdot \nabla \log p_t$. This gives the following PDE for $\epsilon_t$ with the initial condition $\epsilon_0 = p_0-\hat p_0 = 0$:
\begin{equation}
\centering
    \begin{cases}
       \partial_t \epsilon_t + \nabla\cdot \Big( \epsilon_t \vv_t   \Big) = L_t,  \\
\epsilon_0(\vx) = 0.
\end{cases}
\end{equation}

\end{proof}

\corerrorformula*
\begin{proof}[Proof of Corollary~\ref{cor:error-formula}]
Let $\phi_t$ denote the flow of the vector field $\vv_t$, i.e.
\begin{equation}
\centering
    \begin{cases}
       \partial_t \phi_t = \vv_t\big(\phi_t(\vx)\big) ,\\
\phi_0(\vx) = \vx.
\end{cases}
\end{equation}
Using Duhamel's formula (refer to \cite{seis2017quantitative}),
we have the following formula for $\epsilon_t$:
{
\begin{equation}
\begin{aligned}
&\epsilon_t\big(\phi_t(\vx)\big) \det\nabla\phi_t(\vx) 
&= \epsilon_0(\vx) + \int^t_0 L_s\big(\phi_s(\vx)\big) \det\nabla 
{\phi_s(\vx)}
\, ds    
&= \int^t_0 L_s\big(\phi_s(\vx)\big) \det\nabla 
{\phi_s(\vx)}
\, ds \\
\end{aligned}
\end{equation}
}
\end{proof}

\thmanotherbound*
\begin{proof}[Proof of Theorem~\ref{thm:another-bound}]
Note that the total variation distance is defined as:
\begin{equation}
    \operatorname{TV}(p_t, \hat p_t) = \frac{1}{2}\int\big|p_t(\vx) - \hat p_t(\vx)\big |\, d\vx = \frac{1}{2}\int\big| \epsilon_t(\vx)\big |\, d\vx
\end{equation}

{
Using the change of variables twice and applying the formula in Corollary~\ref{cor:error-formula}, we obtain: 
\begin{equation}\label{eq:TV-proof-00}
\begin{aligned}
\operatorname{TV}(p_t, \hat p_t) &= \frac{1}{2} \int \big| \epsilon_t(\vx) \big| \, d\vx \\
&=  \frac{1}{2} \int \Big| \epsilon_t\big(\phi_t(\vx)\big) \Big| \, d\, \phi_t(\vx) \\
&=  \frac{1}{2} \int \Big|  \epsilon_t\big(\phi_t(\vx)\big)  \det\nabla\phi_t(\vx) \Big| \, d\vx \\
&=  \frac{1}{2} \int \bigg|  \int^t_0 L_s\big(\phi_s(\vx)\big) \det\nabla\phi_s(\vx)  \, ds \bigg| \, d\vx \\
&\leq  \frac{1}{2} \int  \int^t_0 \Big| L_s\big(\phi_s(\vx)\big) \det\nabla\phi_s(\vx)\Big|   \, ds  \, d\vx \\
&= \frac{1}{2} \int^t_0  \int   \Big| L_s(\vx) \Big|  \, d\vx \, ds   \\
\end{aligned}
\end{equation}
}

Substituting the expression for $L_s$, we see that:
%
%

\begin{equation}
\begin{aligned}
   2\operatorname{TV}(p_t, \hat p_t) &\leq \int^t_0  \int \Big| p_t \big(\nabla\cdot (\vu_s-\vv_s)\big) + p_s (\vu_s-\vv_s)\cdot \nabla \log p_s \Big| \, d\vx\,  ds 
     \\
    & \leq \int^t_0 \mathbb{E}_{p_s} \Big| \nabla\cdot (\vu_s-\vv_s) + (\vu_s-\vv_s)\cdot \nabla \log p_s  \Big|  \,  ds \\
    & \leq \int^T_0 \mathbb{E}_{p_t} \Big| \nabla\cdot (\vu_t-\vv_t) + (\vu_t-\vv_t)\cdot \nabla \log p_t  \Big|  \,  dt\\
    &=  \gL_{\rm DM}(\theta).
\end{aligned}
\end{equation}
This completes the proof.
\end{proof}

\thmerrorconditional*
\begin{proof}[Proof of Theorem~\ref{thm:error-conditional}]

From \eqref{eq:uncondition-divergence-connection}, we can show that:
\begin{equation}\label{eq:divergence-unconditional}
\begin{aligned}
 p_t \big(\nabla\cdot \vu_t + \vu_t\cdot \nabla \log p_t\big) & = \nabla \cdot \big(p_t \vu_t \big)
\\
&=\int \nabla\cdot \big( p_t(\vx|\vx_1) \vu_t(\vx|\vx_1)\big) p(\vx_1)\, d\vx_1
\\
&= \int \Big(  p_t(\vx|\vx_1) \nabla\cdot \vu_t(\vx|\vx_1)   + \vu_t(\vx|\vx_1)\cdot\nabla p_t(\vx|\vx_1)\Big) p(\vx_1) \, d\vx_1. 
\end{aligned}
\end{equation}

On the other hand, we have 
\begin{equation}\label{eq:divergence-unconditional-learned}
\begin{aligned}
p_t \big(\nabla\cdot \vv_t + \vv_t \cdot \nabla \log p_t\big)  & = p_t \nabla\cdot \vv_t + \vv_t \cdot \nabla  p_t 
\\
&=\bigg(\int p_t(\vx|\vx_1) p(\vx_1) \, d\vx_1  \bigg)\nabla\cdot \vv_t + \vv_t \cdot  \nabla \bigg(\int p_t(\vx|\vx_1) p(\vx_1) \, d\vx_1 \bigg)\\
&=  \int \big( p_t(\vx|\vx_1)\nabla\cdot \vv_t \big)  p(\vx_1) \, d\vx_1  +  \int  \big(\vv_t \cdot \nabla p_t(\vx|\vx_1)\big)  p(\vx_1) \, d\vx_1\\
&=  \int \big( p_t(\vx|\vx_1)\nabla\cdot \vv_t  + \vv_t \cdot \nabla p_t(\vx|\vx_1)\big)  p(\vx_1) \, d\vx_1.
\end{aligned}
\end{equation}

Combining \eqref{eq:divergence-unconditional} with \eqref{eq:divergence-unconditional-learned}, we deduce that:
\begin{equation}
\begin{aligned}
        &p_t \big(\nabla\cdot \vu_t + \vu_t\cdot \nabla \log p_t\big) -   p_t \big(\nabla\cdot \vv_t + \vv_t \cdot \nabla \log p_t) \\
    &=  \bigg( \int \Big(  p_t(\vx|\vx_1) \nabla\cdot \vu_t(\vx|\vx_1)   + \vu_t(\vx|\vx_1)\cdot\nabla p_t(\vx|\vx_1)\Big) p(\vx_1) \, d\vx_1\bigg) \\
    &\ \ \ \ \ \ \ \ \ \ \ \ \  -  \bigg( \int \big( p_t(\vx|\vx_1)\nabla\cdot \vv_t  + \vv_t \cdot \nabla p_t(\vx|\vx_1)\big)  p(\vx_1) \, d\vx_1\bigg) 
\\
&=  \int \big(  \nabla\cdot \vu_t(\vx|\vx_1)  - \nabla\cdot \vv_t \big)  p_t(\vx|\vx_1) p(\vx_1) \, d\vx_1 +
\int  \big(\vu_t(\vx|\vx_1) - \vv_t(\vx)\big)\cdot\nabla p_t(\vx|\vx_1) p(\vx_1) \, d\vx_1 
\\
&=  \int \bigg[ \big(  \nabla\cdot \vu_t(\vx|\vx_1)  - \nabla\cdot \vv_t \big) + \Big(\vu_t(\vx|\vx_1) - \vv_t(\vx)\Big)\cdot \nabla \log p_t(\vx|\vx_1) \bigg] p_t(\vx|\vx_1) p(\vx_1) \, d\vx_1.
\end{aligned}
\end{equation}

Now from the definitions of $ \gL_{\rm DM}(\theta)$ and $ \gL_{\rm CDM}(\theta)$, we deduce that
\begin{equation}
\begin{aligned}
   \gL_{\rm DM}(\theta) &= \int^T_0 \int p_t \Big| \nabla\cdot (\vu_t-\vv_t) + (\vu_t-\vv_t)\cdot \nabla \log p_t  \Big|  \,  d\vx \,  dt
     \\
    & = \int^T_0 \int \Big| p_t \big(\nabla\cdot \vu_t + \vu_t\cdot \nabla \log p_t\big) -   p_t \big(\nabla\cdot \vv_t + \vv_t \cdot \nabla \log p_t\big) \Big|  \,  d\vx \,  dt 
    \\
    &= \int^T_0 \int \Bigg|     \int \bigg[ \big(  \nabla\cdot \vu_t(\vx|\vx_1)  - \nabla\cdot \vv_t \big) + \Big(\vu_t(\vx|\vx_1) - \vv_t(\vx)\Big)\cdot \nabla \log p_t(\vx|\vx_1) \bigg] p_t(\vx|\vx_1) p(\vx_1) \, d\vx_1 \Bigg|  \,  d\vx \,  dt
    \\
    &\leq \int^T_0 \int \int \bigg| \big(  \nabla\cdot \vu_t(\vx|\vx_1)  - \nabla\cdot \vv_t \big) + \Big(\vu_t(\vx|\vx_1) - \vv_t(\vx)\Big)\cdot \nabla \log p_t(\vx|\vx_1) \bigg| p_t(\vx|\vx_1) p(\vx_1) \, d\vx_1  \,  d\vx \,  dt
    \\
    & =    \gL_{\rm CDM}(\theta) .
\end{aligned}
\label{eq:bound_DM}
\end{equation}
\end{proof}

\section{Experiments Details}\label{appedix:experiment-details}

\subsection{
Trajectory Sampling for Dynamical Systems
}
\label{appendix:experiment-details:dynamics-forecasting}

For this experiment, we repeatedly use the Dormand-Prince ODE solver with an absolute tolerance $1.4 \times 10^{-8}$ and relative tolerance $1 \times 10^{-6}$.

\paragraph{Lorenz} The Lorenz system \cite{lorenz1963deterministic} is a chaotic dynamical system given by
$$
\begin{aligned}
    \dot{\vx} = \begin{bmatrix}
        \dot{x}_1 \\
        \dot{x}_2 \\
        \dot{x}_3
    \end{bmatrix}
    = F(\vx)
    = \begin{bmatrix}
        \sigma(x_2 - x_1) \\
        x_1(\rho - x_3) - x_2 \\
        x_1x_2 - \beta x_3
    \end{bmatrix}
\end{aligned}
$$
Following \cite{finzi2023user}, we set $\sigma = 10$, $\rho = 28$ and $\beta = 8/3$, and we used a scaled version of the Lorenz system to bound the system components $x_i$ to $[-3, 3]$ for $i \in \{1,2,3\}$ while preserving the original dynamics.
The scaled system is given by $\tilde{F}(\vx) = F(20\vx)/20$.

\paragraph{FitzHugh-Nagumo} The FitzHugh-Nagumo system \cite{fitzhugh1961impulses,nagumo1962active} is a dynamical system modeling an excitable neuron and is given by
$$
\begin{aligned}
    \dot{x}_i & = x_i(a_i - x_i)(x_i - 1) - y_i + k\sum_{j = 1}^d A_{ij}(x_j - x_i) \\
    \dot{y}_i & = b_ix_i - c_iy_i
\end{aligned}
$$
for $i \in \{1,2\}$.
Following \cite{farazmand2019extreme,finzi2023user}, the parameters are set as follows: $a_1 = a_2 = -0.025794$, $b_1 = 0.0065$, $b_2 = 0.0135$, $c_1 = c_2 = 0.2$, $k = 0.128$, and $A_{ij} = 1 - \delta_{ij}$ where $\delta$ is the Kronecker delta.

\paragraph{Trajectory Dataset Construction}
Trajectories for the dataset are computed using the ODE solver.
The trajectories' initial conditions are sampled from Gaussian distributions -- $\mathcal{N}(\mzero, \mI)$ for Lorenz, and $\mathcal{N}(\mzero, (0.2)^2\mI)$ for FitzHugh-Nagumo.
Each trajectory has 60 consecutive and evenly spaced time steps, where the first time step occurs after some trajectory ``burn-in'' time to allow the system to reach its stationary trajectory distribution.
The first 30 and 250 time steps computed by the ODE solver are ``burn-in'' for Lorenz and FitzHugh-Nagumo, respectively.
The time step sizes are 0.1 and 6.0, respectively.

\paragraph{Model Hyperparameters and Training}
All the models used the same UNet architecture as in \cite{finzi2023user}, and we used a variance exploding schedule \cite{song2020score}.
We train the models on a training set of 32,000 trajectories computed by the ODE solver using Adam for 2,000 epochs with a batch size of 500.
For FM and FDM, we also used an exponential decay learning rate scheduler with a decay rate of $0.995$.
The initial learning rate for the diffusion model and FM was $10^{-4}$.
The learning rate and regularization coefficients for FDM were tuned using Optuna \cite{optuna_2019} for the lowest CFM loss produced by the EMA parameters and are given in Table~\ref{tab:appendix:experiment-details:dynamics-forecasting:fm+reg:training-parameters}.
We sampled the times for the diffusion and CFM loss on a shifted grid following \cite{finzi2023user}.

\begin{table}[!ht]
    \centering
    \begin{tabular}{lrrr}
        \toprule
         Dynamical system & Learning rate & $\lambda_1$ & $\lambda_2$ \\
         \midrule
         Lorenz & 0.000796
         & 1 & 0.000385
         \\
         FitzHugh-Nagumo & 0.000245
         & 1 & 0.00552
         \\
         \bottomrule
    \end{tabular}
    \caption{Learning rate and regularization coefficient used to train FDM for Lorenz and FitzHugh-Nagumo dynamical systems.}
    \label{tab:appendix:experiment-details:dynamics-forecasting:fm+reg:training-parameters}
\end{table}

We evaluated the models with the exponential moving average (EMA) of the parameters with a 2,000 epoch period.

\paragraph{Loss Weighting Functions}
The loss of the diffusion model is equation~(7) of \cite{song2020score} where we used $\lambda(t) = \sigma_t^2$ as the weighting.
For both FM and FDM, the term $\mathcal{L}_{\mathrm{CFM}}$ in their loss was weighted by $1/(\sigma_{1 - t}')^2$.
The term $\mathcal{L}_{\mathrm{CDM}}$ in the loss of FDM was weighted by $\sigma_{1 - t}/(\sigma_{1 - t}'Md)$ where $M = 60$ is the number of trajectory time steps and $d$ is the dimension of the dynamical system.

\paragraph{Estimating the Divergence}
We estimated the divergence of FDM with respect to its trajectory input using the Hutchinson tracer estimator \cite{hutchinson1989stochastic,grathwohl2018ffjord} where the noise vector is sampled from $\mathcal{N}(\mzero, \mI)$.

\paragraph{Likelihood Estimation}
We computed a test set of 32,000 trajectories using the ODE solver and evaluated their log-likelihood using the continuous change-of-variables formula from \cite{grathwohl2018ffjord} with the ODE solver.
Table~\ref{tab:results:dynamics-forecasting:trajectory-likelihoods} was produced by computing the mean log-likelihood over the trajectories and their dimension.

\subsection{
DNA Sequence Generation
}\label{sec:DM_Dirichlet}
In this task, the model approximates a classifier
\begin{equation}
\hat p(\vx_1|\vx,\theta) \approx \frac{p_t(\vx|\vx_1)p
(\vx_1)}{p_t(\vx)}
\label{eq:cls_approx1}
\end{equation}
instead of directly approximating the vector field $\hat{\vv}_t(\vx,\theta)\approx \vu_t(\vx)$. Then, it constructs a vector field based on the classifier as follows:
\begin{equation}
\hat \vv_t(\vx,\theta) = \sum_{i=1}^K \vu_t(\vx|\vx_1=\ve_i)\hat p(\vx_1=\ve_i|\vx,\theta),
\end{equation}
where $K$ is the number of categories and the divergence term is given by
\begin{equation}
\nabla_\vx\cdot \hat \vv_t(\vx,\theta) = \sum_{i=1}^K\Big[\Big \langle \nabla p(\vx_1=\ve_i|\vx,\theta), \vu_t(\vx|\vx_1=\ve_i)\Big \rangle + p(\vx_1=\ve_i|\vx,\theta)\nabla_\vx\cdot \vu_t(\vx|\vx_1=\ve_i) \Big]
\end{equation}
If we directly 
learn 
$\nabla_\vx\cdot\hat \vv_t(\vx,\theta)$, it requires computing $\nabla_\vx \cdot \big[\vu_t(\vx|\vx_1=\ve_i)\hat p(\vx_1=\ve_i|\vx,\theta)\big]$ for $i=1,2,...,K$ which can be very expensive in memory footprint and time consumption. 
%
Furthermore, notice that $\vu_t(\vx|\vx_1=\ve_i)$ is a pre-defined vector field that is independent of parameters $\theta$ and so is $\nabla_\vx\cdot \vu_t(\vx|\vx_1=\ve_i)$. Thus, there is no need to learn it. 

For $p(\vx_1=\ve_i|\vx,\theta)$, {Appendix A} of 
\cite{stark2024dirichlet} states that $\hat \vv_t(\vx,\theta)$ approximates the vector field if $\hat{p}(\vx_1|\vx,\theta)$ ideally approximates the classifier $p(\vx_1|\vx)$. Consider an ideal classifier $p(\vx_1=\ve_i|\vx)$ for class $\vx_1=\ve_i$, then $p(\vx_1=\ve_i|\vx)=1$ if $\vx$ belongs to class $\vx_1$ else $0$. Let $\vx\in D$, where $D$ is the domain of this classifier, then we have
\begin{itemize}
\item $p(\vx_1=\ve_i|\vx)$ is not continuous in $D$.
\item Suppose $D_1$ is the union of all the differentiable sub-domains of $D$, then $\nabla_\vx p(\vx_1=\ve_i|\vx)=\bm 0$ for $\vx\in D_1$.
\end{itemize}
Therefore, the remaining thing is to include $\lVert\nabla_\vx p(\vx_1=\ve_i|\vx)\rVert$ for $\vx\in D_1$ in the training objective. 
In practice, we train the classifier by empirically estimating the cross entropy based on the perturbed points $\vx$ with its corresponding initial data $\vx_1$ as the class label. We can just assume any point in a sufficiently small ball around such a perturbed data point $\vx$ belongs to the same class $\vx_1$ so the classifier is differentiable inside this ball, then we penalize $\lVert\nabla_\vx \hat p(\vx_1|\vx,\theta)\rVert$ in training the model.


\textbf{Promoter Data} We use a dataset of 100,000 promoter sequences with 1,024 base pairs extracted from a database of human promoters \cite{hon2017atlas}. Each sequence has a CAGE signal \cite{shiraki2003cap} annotation available from the FANTOM5 promoter atlas, which indicates the likelihood of transcription initiation at each base pair. Sequences from chromosomes 8 and 9 are used as a test set, and the rest for training.

\textbf{Model Hyperparameters and Training} We just follow the experimental setup of \cite{stark2024dirichlet}. For the simplex dimension toy experiment, we train all models for 450,000 steps with a batch size of 512 to ensure that they have all converged and then evaluate the KL of the final step. For promoter design, we train for 200 epochs with a learning rate of $5 \times 10^{-4}$ and early stopping on the MSE on the validation set. We use 100 inference steps for generation. Table~\ref{tab:appendix:experiment-details:DNA} show how we set $\lambda_1$ and $\lambda_2$ for divergence loss.
\begin{table}[!ht]
\centering
\begin{tabular}{lrrr}
\toprule
 Tasks & Learning rate & $\lambda_1$ & $\lambda_2$ \\
 \midrule
 Simplex Dimension & $5 \times 10^{-4}$
 & 0.5 & 0.05
 \\
 Promoter Design & $5 \times 10^{-4}$
 & 1 & 0.01
 \\
 \bottomrule
\end{tabular}
\caption{Learning rate and regularization coefficient used to train FDM for DNA sequence.}
\label{tab:appendix:experiment-details:DNA}
\end{table}


\subsection{Generative Modeling for Videos}\label{appendix:KTH}
We follow the experimental setting and models used in \cite{davtyan2023efficient}.

\textbf{Architechture} We use U-ViT \cite{bao2023all} to model the flow matching vector field and use VQGAN \cite{esser2021taming} to encode (resp. decode) each frame of the video to (resp. from) the latent space with the following configurations
\begin{table}[!ht]
\centering
\begin{tabular}{@{}l c@{} c@{}}
\toprule
Parameter & \textbf{KTH} & \textbf{BAIR} \\ \midrule
embed\_dim & 4 & 4 \\
n\_embed & 16384 & 16384 \\
double\_z & False & False \\
z\_channels & 4 & 4 \\
resolution & 64 & 64 \\
in\_channels & 3 & 3 \\
out\_ch & 3 & 3 \\
ch & 128 & 128 \\
ch\_mult & [1,2,2,4] & [1,2,2,4] \\
num\_res\_blocks & 2 & 2 \\
attn\_resolutions & [16] & [16] \\
dropout & 0.0 & 0.0 \\
\midrule
disc\_conditional & False & False \\
disc\_in\_channels & 3  & 3\\
disc\_start & 20k & 20k \\
disc\_weight & 0.8 & 0.8 \\
codebook\_weight & 1.0 & 1.0 \\
\bottomrule
\end{tabular}
\caption{Parameters of VQGAN for the KTH dataset.}
\end{table}

\begin{table}[!ht]
\centering
\begin{tabular}{lc}
\specialrule{1.2pt}{1pt}{1pt}
Hyperparameter & Values/Search Space \\
\hline
Iterations & 300000 \\
Batch size & [16, 32, 64] \\
Learning rate & [2e-4, 2e-5] \\
Learning rate scheduler & polynomial \\
Learning rate decay power & 0.5 \\
Weight decay rate & 1e-12 \\
$\lambda_1$, $\lambda_2$ & [[0.5, 1e-2], [1, 1e-2]] \\
\specialrule{1.2pt}{1pt}{1pt}
\end{tabular}
\caption{Training hyperparameters of video prediction.}
\label{tab:training}
\end{table}

\textbf{Model Hyperparameters}
See Table~\ref{tab:training}.

\section{Additional numerical results}
\label{appendix:additional-results}

\subsection{Dirichlet Flow Matching}\label{appendix:additional-results:dirichlet_sd_kl}
Table~\ref{tab:dirichlet_sd_kl} shows the test KL divergence of models for the simplex dimension toy experiment of DNA sequence generation.
\begin{table}[!ht]
\centering
\fontsize{9}{9}\selectfont
\begin{tabular}{lc}
\toprule
Method  & KL Divergence \\
\midrule
Linear FM  &  2.5$_{\pm{0.1}}$E-2\\
Linear FDM & 2.1$_{\pm{0.1}}$E-2\\
Dirichlet FM  &  1.8$_{\pm{0.1}}$E-2\\
Dirichlet FDM & 
{1.5}$_{\pm{0.1}}$
{E-2}\\
\bottomrule
\end{tabular}
\caption{\footnotesize KL divergence of the generated distribution to the target distribution.
}
\label{tab:dirichlet_sd_kl}
\end{table}

\subsection{Flow Matching for User-defined Events}
\label{appendix:additional-results:dynamics-forecasting}

\paragraph{Unguided Sampling Histograms:} The histograms of the event constraint values for the trajectories sampled without guidance by each model are shown in Fig.~\ref{fig:results:dynamics-forecasting:event-histogram:unconditional}.

\begin{figure}[!ht]
    \centering
    \begin{tabular}{cc}
        \multicolumn{2}{c}{
            \includegraphics[scale=.8]{figs/experiments/DynamicsForecasting/EventHistogram/event_histogram.conditional.legend.pdf}
        }
        \\
        \includegraphics[scale=.8]{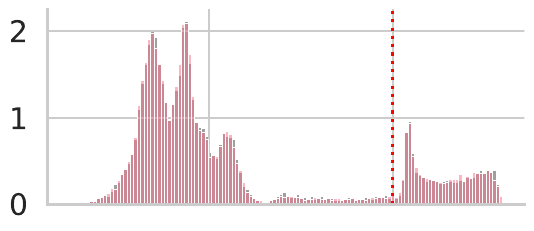}
        &
       \hspace{-0.5cm} \includegraphics[scale=.8]{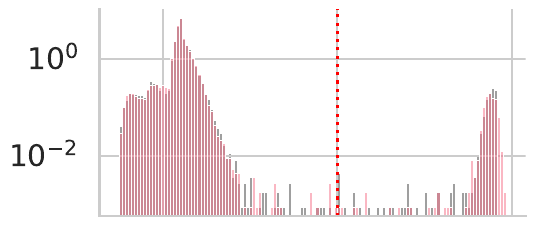}
        \\
        \includegraphics[scale=.8]{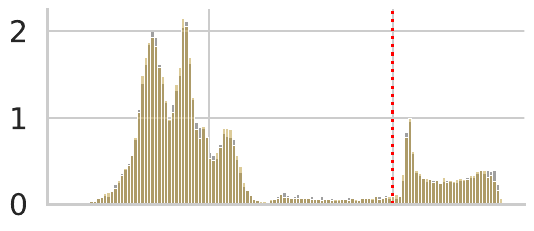}
        &
        \hspace{-0.5cm} \includegraphics[scale=.8]{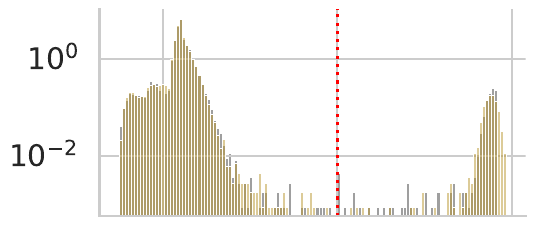}
        \\
        \includegraphics[scale=.8]{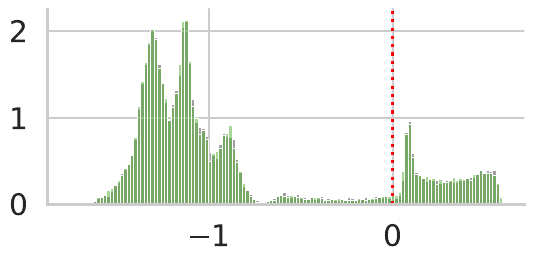}
        &
        \hspace{-0.5cm} \includegraphics[scale=.8]{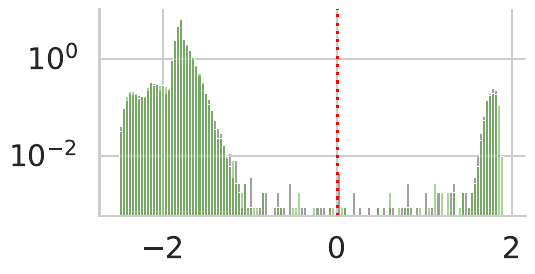}
        \\
        (a) Lorenz
        &
        (b) FitzHugh-Nagumo
    \end{tabular}
    \caption{
        Histograms of the event constraint $C$ evaluated on the data and trajectories generated from the models.
    }
    \label{fig:results:dynamics-forecasting:event-histogram:unconditional}
\end{figure}

\paragraph{KL Divergence}
Table~\ref{tab:results:dynamics-forecasting:kl-divergence} shows the KL divergence between the histogram distributions.
For Lorenz, FDM's unguided sampling has the lowest KL divergence, with the divergence of the diffusion model and FM being 0.0007 and 0.0032 larger.
In guided sampling, the FM has a lower KL divergence than the diffusion model and FDM by about 0.02 and 0.05, respectively.
For FitzHugh-Nagumo, the diffusion model has a lower KL divergence than FM and FDM by 0.002.
In guided sampling, FDM attains the largest performance gap with a KL divergence of about 0.1 and 0.14 lower than the diffusion model and FM, respectively.

\begin{table}[!ht]
    \centering
    \fontsize{9}{9}\selectfont
    \begin{tabular}{lrrrr}
        \toprule
        & \multicolumn{2}{c}{Lorenz} & \multicolumn{2}{c}{FitzHugh-Nagumo} \\
        \cmidrule{2-5}
        Model & $p(\vx_1)$ & $p(\vx_1|E)$ & $p(\vx_1)$ & $p(\vx_1|E)$ \\
        \midrule
        Diffusion & 0.0056 & 0.2774 & 
        {0.0260} & 0.3011 \\
        FM & 0.0081 & 
        {0.2560} & 0.0280 & 0.3468 \\
        FDM & 
        {0.0049} & 0.3045 & 0.0280 & 
        {0.2084} \\
        \bottomrule
    \end{tabular}
    \caption{
        KL divergence between the histograms of the event constraint value $C(\vx_1)$ for event trajectories $\vx_1$ in the dataset of trajectories computed by an ODE solver and event trajectories sampled with event guidance from the models.
    }
\end{table}

\section{Efficient Squared Loss}\label{sec:efficient_squared_loss}
We observe that the conditional divergence loss in~\eqref{eq:cfm_div} is an absolute-value objective, whose non-differentiability at the origin removes smoothness and can be less efficient than squared-error alternatives. 
In practice, we replace it with the following squared loss:
\begin{equation}
\gL_{\rm CDM\text{-}2}(\theta)= \mathbb{E}_{t,p_t(\vx|\vx_1),p(\vx_1)}\Big[ \Big| 
\big( \nabla\cdot \vu_t(\vx|\vx_1) - \nabla\cdot \vv_t(\vx,\theta) \big)
+ \big( \vu_t(\vx|\vx_1) - \vv_t(\vx,\theta)\big)\cdot \nabla \log p_t(\vx|\vx_1) \Big|^2 \Big],
\label{eq:cfm_div2}
\end{equation}

\begin{restatable}[Upper Bound for $\gL_{\rm CDM}$]{theorem}{cdmequiv}
\label{thm:cdm_equiv}
The squared loss $\gL_{\mathrm{CDM}\text{-}2}(\theta)$ provides an upper bound:
\begin{equation}
\gL_{\rm CDM}(\theta)\leq \sqrt{\gL_{\mathrm{CDM}\text{-}2}(\theta)}
\end{equation}
\end{restatable}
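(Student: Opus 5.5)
This is a Jensen (or Cauchy--Schwarz) inequality on the joint probability measure. Write
\[
Z(t,\vx,\vx_1) := \big(\nabla\cdot \vu_t(\vx|\vx_1)-\nabla\cdot \vv_t(\vx,\theta)\big)+\big(\vu_t(\vx|\vx_1)-\vv_t(\vx,\theta)\big)\cdot\nabla\log p_t(\vx|\vx_1).
\]
Then, by \eqref{eq:cfm_div} and \eqref{eq:cfm_div2}, we have $\gL_{\rm CDM}(\theta)=\mathbb{E}[|Z|]$ and $\gL_{\rm CDM\text{-}2}(\theta)=\mathbb{E}[|Z|^2]$. Both expectations are taken with respect to the same measure $\mu(dt,d\vx,d\vx_1)=\1_{[0,1]}(t)\,p_t(\vx|\vx_1)\,p(\vx_1)\,dt\,d\vx\,d\vx_1$, with $t\sim U[0,1]$. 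The plan is to check that $\mu$ is a probability measure and then apply $\mathbb{E}|Z|\le(\mathbb{E}|Z|^2)^{1/2}$.

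First, I will verify that $\mu$ has total mass one. For each fixed $t$ and $\vx_1$, $p_t(\cdot|\vx_1)$ is a probability density; $p$ is a probability density; and $t$ is uniform on $[0,1]$, which has unit length. By Tonelli, $\mu(\text{whole space})=1$.

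Second, I will apply Cauchy--Schwarz in $L^2(\mu)$ to the pair $|Z|$ and the constant function $1$:
\[
\mathbb{E}_\mu|Z|=\int |Z|\cdot 1\,d\mu\le\Big(\int |Z|^2 d\mu\Big)^{1/2}\Big(\int 1\,d\mu\Big)^{1/2}=\sqrt{\gL_{\rm CDM\text{-}2}(\theta)}.
\]
Equivalently, this is Jensen's inequality for the concave function $\sqrt{\cdot}$.

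Third, I will address integrability. If $\gL_{\rm CDM\text{-}2}(\theta)=\infty$, the bound is trivial. Otherwise $Z\in L^2(\mu)\subset L^1(\mu)$ because $\mu$ is finite, so $\gL_{\rm CDM}(\theta)$ is finite and the inequality holds as stated. Measurability of $Z$ follows from the same mild regularity assumptions already used for Theorems~\ref{thm:another-bound} and~\ref{thm:error-conditional}.

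There is essentially no obstacle. The only point needing care is that both losses use the same normalized sampling distribution, in particular that the $t$-measure is the uniform law rather than an unnormalized $\int_0^T dt$ with $T\neq 1$. If $T\neq 1$, the bound would acquire a factor $\sqrt{T}$.

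As a corollary, combining this with Theorem~\ref{thm:error-conditional} gives $\operatorname{TV}(p_t,\hat p_t)\le\frac12\sqrt{\gL_{\rm CDM\text{-}2}(\theta)}$.
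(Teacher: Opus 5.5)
Your proposal is correct and follows the paper's argument. Both write $\gL_{\rm CDM}$ and $\gL_{\rm CDM\text{-}2}$ as expectations of $|f|$ and $f^2$ under the same joint probability measure $p(\vx|\vx_1,t)\,p(\vx_1)\,p(t)$ with $t\sim U[0,1]$, and then apply Cauchy--Schwarz (equivalently Jensen); your checks of unit total mass, integrability, and the $\sqrt{T}$ normalization caveat are details the paper leaves implicit.
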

\begin{proof}
Let 
$$ f(\vx,\vx_1,t)=\big( \nabla\cdot \vu_t(\vx|\vx_1) - \nabla\cdot \vv_t(\vx,\theta) \big)
+ \big( \vu_t(\vx|\vx_1) - \vv_t(\vx,\theta)\big)\cdot \nabla \log p_t(\vx|\vx_1)$$
we have
\begin{equation}
\begin{split}
\gL_{\rm CDM}(\theta) &= \int\int\int |f(\vx, \vx_1, t)| p_t(\vx|\vx_1)p(\vx_1)d\vx d\vx_1 dt \\
&= \int\int\int |f(\vx, \vx_1, t)| p(\vx|\vx_1,t)p(\vx_1)p(t)d\vx d\vx_1 dt \\
&
\underbrace{\leq}_{\text{Cauchy-Schwarz Ineq.} } 
\Big(\int\int\int f^2(\vx, \vx_1, t) p(\vx|\vx_1,t)p(\vx)p(t)d\vx d\vx_1 dt\Big)^{\frac{1}{2}} \\
& = \sqrt{\gL_{\mathrm{CDM}\text{-}2}(\theta)}
\end{split}
\end{equation}
where we define $p_t(\vx|\vx_1):=p(\vx|\vx_1, t)p(t)$. 
\end{proof}

\subsection{Efficient Squared Loss for High-dimensional Data}
\subsubsection{Estimated Objective via Hutchinson Trace Estimator}
Let $d$ be the data dimension. The second-order term in \eqref{eq:cfm_div2} requires computing the trace of the full Jacobian of the vector regressor model $\vv_t(\vx,\theta)$, which typically incurs a computational time complexity of $\gO(d^2)$ and becomes impractical for high-dimensional data. Following~\citep{lu2022maximum, lai2023fp}, this cost can be reduced to $\gO(d)$ by employing Hutchinson’s trace estimator~\citep{hutchinson1989stochastic} and automatic differentiation~\citep {paszke2017automatic} provided by general deep learning frameworks, requiring only a single backpropagation pass.

For a $d$-by-$d$ matrix $\mA$, its trace can be unbiasedly estimated by
$$\text{tr}(\mA) = \mathbb{E}_{p(\bm\varepsilon)}\big[\bm\varepsilon^\top\mA\bm\varepsilon\big] = \mathbb{E}_{p(\bm\varepsilon)}\big[\bm\varepsilon\cdot\big(\mA\cdot\bm\varepsilon\big)\big]$$
where $p(\bm\varepsilon)$ is a $d$-dimensional standard Gaussian. Then we proposed the following estimation:
\begin{equation}
\begin{aligned}
\gL_{\rm CDM\text{-}2}^{\rm est}(\theta) & = \mathbb{E}_{t,p_t(\vx|\vx_1),p(\vx_1),p(\bm\varepsilon)}\Big[ \Big| 
\bm\varepsilon\cdot\big( \nabla\vu_t(\vx|\vx_1)\cdot\bm\varepsilon - \nabla \vv_t(\vx,\theta) \cdot\bm\varepsilon\big)
 \\
&\hspace{16em}+ \big( \vu_t(\vx|\vx_1)\cdot\bm\varepsilon - \vv_t(\vx,\theta)\cdot\bm\varepsilon\big)\big(\nabla \log p_t(\vx|\vx_1)\cdot\bm\varepsilon\big) \Big|^2 \Big],
\end{aligned}
\label{eq:cfm_div2_est}
\end{equation}
where $\nabla\vu_t(\vx|\vx_1)$ and $\nabla \log p_t(\vx|\vx_1)$ are already given pre-defined matrix and vector. The term $\nabla\vv_t(\vx,\theta) \cdot\bm\varepsilon$ can be efficiently computed by the \texttt{jvp} interface, such as \texttt{torch.func.jvp} in \texttt{PyTorch} or \texttt{jax.jvp} in \texttt{JAX}. 

\begin{restatable}[Upper Bound for $\gL_{\rm CDM-2}$]{theorem}{estcdmequiv}
\label{thm:est_cdm_equiv}
The estimated squared loss $\gL^{\rm est}_{\mathrm{CDM}\text{-}2}(\theta)$ provides an upper bound:
\begin{equation}
\gL_{\rm CDM\text{-}2}(\theta)\leq \gL^{\rm est}_{\mathrm{CDM}\text{-}2}(\theta)
\end{equation}
 \end{restatable}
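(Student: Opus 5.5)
The plan is to reduce the inequality to a pointwise application of Jensen's inequality in the Hutchinson variable $\bm\varepsilon$, and then integrate over $(t,\vx_1,\vx)$.

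First, fix $(t,\vx_1,\vx)$ and introduce the abbreviations
\[
\va := \vu_t(\vx|\vx_1)-\vv_t(\vx,\theta),\qquad
\mA := \nabla\vu_t(\vx|\vx_1)-\nabla\vv_t(\vx,\theta),\qquad
\vg := \nabla\log p_t(\vx|\vx_1).
\]
Then the integrand of $\gL_{\rm CDM\text{-}2}$ in \eqref{eq:cfm_div2} is $f^2$ with $f=\operatorname{tr}(\mA)+\va\cdot\vg$, because the divergence is the trace of the Jacobian. The integrand of $\gL^{\rm est}_{\rm CDM\text{-}2}$ in \eqref{eq:cfm_div2_est} is $\big(\bm\varepsilon^\top\mA\bm\varepsilon+(\va\cdot\bm\varepsilon)(\vg\cdot\bm\varepsilon)\big)^2$. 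The key observation is that both expressions come from the same matrix $\mC:=\mA+\va\vg^\top$. Indeed, $(\va\cdot\bm\varepsilon)(\vg\cdot\bm\varepsilon)=\bm\varepsilon^\top\va\vg^\top\bm\varepsilon$ and $\operatorname{tr}(\va\vg^\top)=\va\cdot\vg$. Hence the estimated integrand is $X(\bm\varepsilon)^2$ with $X(\bm\varepsilon):=\bm\varepsilon^\top\mC\bm\varepsilon$, while $f=\operatorname{tr}(\mC)$.

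Second, apply Hutchinson's identity. For $\bm\varepsilon\sim\gN(\mzero,\mI)$ we have $\mathbb{E}[\bm\varepsilon\bm\varepsilon^\top]=\mI$, so $\mathbb{E}_{p(\bm\varepsilon)}[X]=\operatorname{tr}(\mC)=f$. Jensen's inequality (equivalently, $\operatorname{Var}(X)\ge 0$) then gives, pointwise,
\[
f^2=\big(\mathbb{E}_{p(\bm\varepsilon)}X\big)^2\le \mathbb{E}_{p(\bm\varepsilon)}\big[X^2\big].
\]
The gap is exactly $\operatorname{Var}(X)=2\|\tfrac12(\mC+\mC^\top)\|_F^2$. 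This is finite, since Gaussian fourth moments exist.

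Third, integrate this pointwise inequality against $p(t)\,p(\vx_1)\,p_t(\vx|\vx_1)$. Since $\bm\varepsilon$ is independent of $(t,\vx_1,\vx)$, Fubini/Tonelli lets me write $\gL^{\rm est}_{\rm CDM\text{-}2}$ as the outer expectation of $\mathbb{E}_{p(\bm\varepsilon)}[X^2]$. This yields $\gL_{\rm CDM\text{-}2}(\theta)\le\gL^{\rm est}_{\rm CDM\text{-}2}(\theta)$. The integrands are nonnegative, so Tonelli applies without any extra integrability assumptions. If $\gL^{\rm est}_{\rm CDM\text{-}2}$ is infinite, the inequality holds trivially.

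The only step needing care is the first: recognizing that the product term $(\va\cdot\bm\varepsilon)(\vg\cdot\bm\varepsilon)$ is itself a Hutchinson estimate of $\va\cdot\vg$, with the rank-one matrix $\va\vg^\top$. Once both terms are merged into a single quadratic form $\bm\varepsilon^\top\mC\bm\varepsilon$, the estimator is unbiased for $f$, and the remaining steps are routine.
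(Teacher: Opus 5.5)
Your proposal is correct and follows the paper's proof. The paper likewise writes both terms as Hutchinson quadratic forms, $\bm\varepsilon^\top(\nabla\vu_t-\nabla\vv_t)\bm\varepsilon$ and $\bm\varepsilon^\top\big((\vu_t-\vv_t)\otimes\nabla\log p_t\big)\bm\varepsilon$, and moves the square inside the $\bm\varepsilon$-expectation (it calls this Cauchy--Schwarz; it is your Jensen step); your variance identity and Tonelli justification are sound extras, not a different route.
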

\begin{proof}
\begin{equation}
\begin{aligned}
\gL_{\rm CDM\text{-}2}(\theta) &= \mathbb{E}_{t,p_t(\vx|\vx_1),p(\vx_1)}\Big[ \Big| 
\big( \nabla\cdot \vu_t(\vx|\vx_1) - \nabla\cdot \vv_t(\vx,\theta) \big)
+ \big( \vu_t(\vx|\vx_1) - \vv_t(\vx,\theta)\big)\cdot \nabla \log p_t(\vx|\vx_1) \Big|^2 \Big]\\
& = \mathbb{E}_{t,p_t(\vx|\vx_1),p(\vx_1)}\Big[ \Big|\mathbb{E}_{p(\bm\varepsilon)}\Big[ 
\bm\varepsilon^\top\big( \nabla \vu_t(\vx|\vx_1) - \nabla \vv_t(\vx,\theta) \big)\bm\varepsilon
\\
&\hspace{17em}+ \bm\varepsilon^\top\Big(\big( \vu_t(\vx|\vx_1) - \vv_t(\vx,\theta)\big)\otimes \nabla \log p_t(\vx|\vx_1)\Big)\bm\varepsilon \Big]\Big|^2 \Big]\\
& \underbrace{\leq}_{\text{Cauchy-Schwarz Ineq.} }\mathbb{E}_{t,p_t(\vx|\vx_1),p(\vx_1), p(\bm\varepsilon)}\Big[ \Big| 
\bm\varepsilon^\top\big( \nabla \vu_t(\vx|\vx_1) - \nabla \vv_t(\vx,\theta) \big)\bm\varepsilon
\\
&\hspace{17em}+ \bm\varepsilon^\top\Big(\big( \vu_t(\vx|\vx_1) - \vv_t(\vx,\theta)\big)\otimes \nabla \log p_t(\vx|\vx_1)\Big)\bm\varepsilon\Big|^2\Big] \\
&= \mathbb{E}_{t,p_t(\vx|\vx_1),p(\vx_1),p(\bm\varepsilon)}\Big[ \Big| 
\bm\varepsilon\cdot\big( \nabla\vu_t(\vx|\vx_1)\cdot\bm\varepsilon - \nabla \vv_t(\vx,\theta) \cdot\bm\varepsilon\big)
 \\
&\hspace{16em}+ \big( \vu_t(\vx|\vx_1)\cdot\bm\varepsilon - \vv_t(\vx,\theta)\cdot\bm\varepsilon\big)\big(\nabla \log p_t(\vx|\vx_1)\cdot\bm\varepsilon\big) \Big|^2 \Big] \\
& = \gL_{\rm CDM\text{-}2}^{\rm est}(\theta)
\end{aligned}
\end{equation}
\end{proof}

\subsubsection{Stop Gradient}
In practice, a stop-gradient operation is applied on the $\vv_t(\vx,\theta)$ in $\gL^{\rm est}_{\mathrm{CDM}\text{-}2}(\theta)$ following common practice~\cite{frans2024one, lu2022maximum, song2023improved}. In our case, we train the model by combining $\gL_{\rm CFM}(\theta)$ and the conditional divergence matching loss as discussed in 
Section~\ref{sec:algorithm}
so the stop-gradient operation eliminates the need for “double
backpropagation” through $\vv_t(\vx,\theta)$, making the training more efficient. So we define the squared efficient conditional divergence matching loss:
\begin{equation}
\begin{aligned}
\gL_{\rm CDM\text{-}2}^{\rm eff}(\theta) & = \mathbb{E}_{t,p_t(\vx|\vx_1),p(\vx_1),p(\bm\varepsilon)}\Big[ \Big| 
\bm\varepsilon\cdot\big( \nabla\vu_t(\vx|\vx_1)\cdot\bm\varepsilon - \nabla \vv_t(\vx,\theta) \cdot\bm\varepsilon\big)
 \\
&\hspace{16em}+ \big( \vu_t(\vx|\vx_1)\cdot\bm\varepsilon - \texttt{sg}\big(\vv_t(\vx,\theta)\big)\cdot\bm\varepsilon\big)\big(\nabla \log p_t(\vx|\vx_1)\cdot\bm\varepsilon\big) \Big|^2 \Big],
\end{aligned}
\label{eq:cfm_div2_est-2}
\end{equation}
where \texttt{sg} denotes stop-gradient operator, which prevents gradients from propagating to $\theta$ through the term $\vv_t(\vx,\theta)$ in $\gL_{\rm CDM\text{-}2}^{\rm eff}(\theta)$. Thus, optimizing the flow and divergence matching loss in \eqref{eq:FDM} requires only one extra backward pass compared to the baseline $\gL_{\rm CFM}$.

\end{document}